\newcommand{\indep}{\perp \!\!\! \perp}
\newcommand{\0}{\mathbf{0}}
\newcommand{\adj}{*}
\newcommand{\Abf}{\mathbf{A}}
\newcommand{\Bbf}{\mathbf{B}}
\newcommand{\bbar}{b}
\newcommand{\Ccal}{\mathcal{C}}
\newcommand{\Dbf}{\mathbf{D}}
\newcommand{\Dcal}{\mathcal{D}}
\newcommand{\Deltabf}{\mathbf{\Delta}}
\newcommand{\e}{\mathbf{e}}
\newcommand{\E}{\mathbb{E}}
\newcommand{\etab}{\pmb{\eta}}
\newcommand{\Expect}{\operatorname{\mathbb{E}}}
\newcommand{\Fbar}{\overline{\mathcal{F}}}
\newcommand{\Ind}{\mathbb{I}}
\newcommand{\lambdatilde}{\widetilde{\lambda}}
\newcommand{\lambdamin}{\lambda_{\min}}
\newcommand{\lambdamax}{\lambda_{\max}}
\newcommand{\Nold}{n}
\newcommand{\ntrain}{\Nold}
\newcommand{\bigO}{\mathcal{O}}
\newcommand{\Pro}{\mathbb{P}}
\newcommand{\reals}{\mathbb{R}}
\newcommand{\rbar}{\overline{r}}
\newcommand{\rn}{\overline{r}_n}
\newcommand{\Rbar}{\overline{R}}
\newcommand{\Sigmabf}{\mathbf{\Sigma}}
\newcommand{\Sigmahat}{\widehat{\Sigmabf}}
\newcommand{\Sigmabar}{\overline{\Sigmabf}}
\newcommand{\Sigmabarhat}{\widehat{\overline{\Sigmabf}}}
\newcommand{\sigmamin}{\sigma_{\min}}
\newcommand{\sigmamax}{\sigma_{\max}}
\newcommand{\Sb}{\mathbb{S}}
\newcommand{\ubf}{\mathbf{u}}
\newcommand{\bu}{\mathbf{u}}
\newcommand{\bv}{\mathbf{v}}
\newcommand{\vbf}{\mathbf{v}}
\newcommand{\w}{\mathbf{w}}
\newcommand{\wbar}{\mathbf{\overline{w}}}
\newcommand{\what}{\mathbf{\widehat{w}}}
\newcommand{\W}{\mathbf{W}}
\newcommand{\What}{\widehat{\W}}
\newcommand{\whatbar}{\widehat{\overline{\mathbf{w}}}}
\newcommand{\wtilde}{\mathbf{\widetilde{w}}}
\newcommand{\x}{\mathbf{x}}
\newcommand{\X}{\mathbf{X}}
\newcommand{\Xcal}{\mathcal{X}}
\newcommand{\xbar}{\overline{\mathbf{x}}}
\newcommand{\y}{\mathbf{y}}
\newcommand{\A}{\mathbf{A}}
\newtheorem{lemma}{Lemma}
\newtheorem{assumption}{Assumption}
\newtheorem{theorem}{Theorem}
\newtheorem{corollary}{Corollary}
\newtheorem{remark}{Remark}
\newtheorem{proposition}{Proposition}
\newtheorem{definition}{Definition}
\newcommand{\norm}[1]{\left\lVert#1\right\rVert}
\newcommand{\opnorm}[1]{\left\lVert#1\right\rVert_{\rm op}}
\newcommand{\nucnorm}[1]{\left\lVert#1\right\rVert_{\ast}}
\DeclarePairedDelimiter\ceil{\lceil}{\rceil}
\DeclarePairedDelimiter\floor{\lfloor}{\rfloor}
\begin{document}

\twocolumn[
    \aistatstitle{Multi-task Representation Learning with Stochastic Linear Bandits}

\aistatsauthor{
    Leonardo Cella \\ 
    CSML, Italian Institute of Technology
    \And 
    Karim Lounici \\ CMAP, Ecole Polytechnique
    \AND  
    Gr\'egoire Pacreau \\ CMAP, Ecole Polytechnique
    \And 
    Massimiliano Pontil \\
    CSML, Italian Institute of Technology \\ 
    \& Dept of Computer Science, UCL
}

\aistatsaddress{}
]

\begin{abstract}
    \noindent 
    We study the problem of transfer-learning in the setting of stochastic linear contextual bandit tasks. We consider that a low dimensional linear representation is shared across the tasks, and study the benefit of learning the tasks jointly.
    %and study the benefit of learning this representation from in the multi-task learning setting. 
    Following recent results to design Lasso stochastic bandit policies, we propose an efficient greedy policy based on trace norm regularization. It implicitly learns a low dimensional representation by encouraging the matrix formed by the task regression vectors to be of low rank. Unlike previous work in the literature, our policy does not need to know the rank of the underlying matrix, nor {does} it requires the covariance of the arms distribution to be invertible. We derive an upper bound on the multi-task regret of our policy, which is, up to logarithmic factors, of order  $T\sqrt{rN}+\sqrt{rNTd}$, where $T$ is the number of tasks, $r$ the rank, $d$ the number of variables and $N$ the number of rounds per task. We show the benefit of our strategy 
    %in comparison to the 
    over an independent task learning baseline,
    %obtained by solving each task independently, 
    which has a worse regret of order $T\sqrt{dN}$. We also argue that our policy {is minimax optimal} and, when $T\geq d$, has a multi-task regret which is comparable to the regret of an oracle policy which knows the true underlying representation.
    %and learns the task independently.
    %\sout{We also provide a lower bound to the multi-task regret. Finally, we corroborate our theoretical findings with preliminary experiments on synthetic data.}
    \end{abstract}

    \section{INTRODUCTION}
Contextual bandits \citep{abbasi2011improved, li2010contextual, auer2002using}
are a prominent learning framework to study sequential decision problems with partial feedback.
They find applications in numerous fields, ranging from recommender systems \citep{li2010contextual}, to finance \citep{shen2015portfolio} and to adaptive routing \citep{awerbuch2008online}, among others. This methodology was originally motivated by applications in clinical trials \citep{woodroofe1979one}, whereby a doctor has to decide which among available treatments is best suited for a patient, through a sequence of trials. %Similarly, in recommendation systems the objective becomes to understand which product, among a catalog of alternatives, should be suggested to a customer.
A fundamental aspect in bandit problems is to control the trade-off between \textit{exploration} and \textit{exploitation}, namely, the balance between the need of acquiring more information and the temptation to act optimally according to the already available knowledge.

\noindent 
In this paper we study multi-task learning with stochastic linear contextual bandit tasks \citep{lowRank_GLM,li2017provably,filippi2010parametric}. 
Within this setting, 
%a learning agent faces multiple stochastic linear contextual bandit tasks simultaneously. 
each task is associated with a regression vector and proceeds  sequentially. 
At each trial an agent observes a set of different alternatives (%usually referred to as 
\textit{arms}) linked to a feature ({\em context}) vector. 
%For each task, the 
The agent then selects one context vector and subsequently observes a stochastic reward generated by a noisy linear regression associated to the chosen vector. The goal is to design an algorithm (policy) that learns, interacting with the tasks, how to select contexts that are most aligned with the underlying task (regression vector), hence maximizing the cumulative rewards across all the tasks.

\noindent A central idea of multi-task learning is to leverage similarities between tasks in order to facilitate learning. In this paper, we consider 
%the case where 
that the tasks share a 
%there exists a common 
low dimensional representation,
%shared among the tasks 
that is, the task regression vectors span a low dimensional subspace. The benefit of learning such representation has been widely investigated in both the standard supervised learning and in the reinforcement learning settings \citep[see][and references therein]{lounici2011oracle,koltchinskii2011nuclear,negahban2011estimation,calandriello2014sparse}. In the bandit setting, this problem 
%studying this benefit 
presents additional difficulties, since contexts vectors are no longer independently distributed. Indeed, they are 
collected sequentially depending on past observations and on the adopted bandit policy. This entails two main challenges. On the one hand, we would like the collected contexts to span the whole feature space, as this would facilitate the estimation of the unknown regression vectors. On the other hand, collecting contexts which are misaligned with respect to the regression vector results in poor performance for the bandit strategy. 

{\bf Contributions.} 
In this work we present an efficient policy based on the trace norm regularization estimator \citep[see][and references therein]{koltchinskii2011nuclear,negahban2011estimation,pontil2013excess} which leverages the tasks common representation to improve learning.
We provide oracle inequalities for this policy under the restricted strong convexity condition with correct and explicit dependencies (Lemma \ref{Le:OracleInequality} and Proposition \ref{prop:subgaussian2}) that are based based on a novel martingale concentration argument. 
%Section \ref{Sec:LowRankOracleInequality}). 
%Note that this result is . Second, 
Next, 
%we propose a regret-minimization policy that does not require random plays to get i.i.d. data or to know the rank value $r$ (Section \ref{Sec:TransferLearningLinBan}) 
we provide an upper bound for the proposed policy (Theorem \ref{Th:MTLRegretBound}) which is valid without boundedness assumption on the arms and is minimax optimal (up to a logarithmic factor). We noticed that when the number of tasks is larger than the ambient dimension our policy is comparable to the oracle policy which knows the true underlying representation a-priori. A key novelty of the proposed policy over the state-of-the-art \citep{yang2020impact} is that it does not need to know the rank of the representation matrix, it does not require the arms covariance distribution to be invertible and its regret bound is non-trivial already when the time horizon is $O(d)$ in the worse case  and potentially $O(\log d)$ in favorable scenarios.
\iffalse
In this work we present an efficient policy based on the trace norm regularization estimator \citep[see][and references therein]{koltchinskii2011nuclear,negahban2011estimation,pontil2013excess} which leverages the tasks common representation to improve learning.
The contributions of this work are threefold. First, we provide oracle inequalities for the trace norm regularized estimator under the restricted strong convexity condition with correct and explicit dependencies (Section \ref{Sec:LowRankOracleInequality}). Note that this result is based on a novel martingale concentration argument. Second, we propose a regret-minimization policy that does not require random plays to get i.i.d. data or to know the rank value $r$ (Section \ref{Sec:TransferLearningLinBan}) for which we provide an upper bound on the multi-task regret (Theorem \ref{Th:MTLRegretBound}) which is valid without boundedness assumption on the arms and is minimax optimal (up to a logarithmic factor) 
%\karim{which is minimax optimal up to a logarithmic factor} and 
we argue that when the number of tasks is larger than the ambient dimension the proposed policy is comparable to the oracle policy which knows the true underlying representation a-priori.
%We also highlight that when the number of tasks is large enough, then our policy's regret compares to that of the oracle policy which knows the true underlying representation. %Our solution matches its regret bound, hence we proved the optimality of our approach. 
Remarkably, not only our result outperforms the state of the art solution but it does so considering more relaxed assumptions.
\fi

{\bf Notation.} For a real vector $\x\in\reals^d$, we use $\norm{\x}$ to denote its Euclidean norm. %We use $\Bb^d := \{\x\in\reals^d:\norm{\x}\leq1\}$ and $\Sb^d := \{\x\in\reals^d:\norm{\x}=1\}$ to respectively denote the unit ball and unit sphere in $\reals^d$. 
Given a pair of symmetric matrices $\Abf, \Bbf \in \reals^{d \times d}$, the expression $\Abf \succeq \Bbf$ means that $\Abf-\Bbf$ is positive semi-definite. We respectively use $\lambdamin(\Abf)$ and $\lambdamax(\Abf)$ to refer to the minimum and maximum eigenvalues of a square symmetric matrix $\Abf$. 
Similarly, we respectively use $\sigmamin(\Abf)$ and $\sigmamax(\Abf)$ for the smallest and the largest singular values of a generic matrix $\Abf$.
Given a positive definite matrix $\Abf \succ \0$, we indicate with $\norm{\x}_{\Abf} = \sqrt{\x^\top \Abf \x}$ the corresponding weighted Euclidean norm. For any matrix $\Abf\in\reals^{d\times T}$, we let $\nucnorm{\Abf} = \sum_{i=1}^d\sigma_i(\Abf)$ be the trace norm (sum of its singular values), $\norm{\Abf}_{\rm F}$ the Frobenius norm and $\opnorm{\Abf}=\sigma_{\max}(\Abf)$ the operator norm (largest singular value). We denote with $[\Abf]^i$ its $i$-th row and with $[\Abf]_j$ its $j$-th column. Finally, we use $\text{diag}(\lambda_1,\dots,\lambda_d)$ for the $d \times d$ diagonal matrix with values $\lambda_1,\dots,\lambda_d$ on the diagonal. Given a random event $\Upsilon$, we denote with $\Upsilon^c$ its complement. Finally, given a scalar $\epsilon\in\reals$ we denote with $\mathbf{\epsilon}_d$ the $d$-dimensional vector having value $\epsilon$ in each component. Additional notation is introduced on the way; a table summarizing the notation used throughout the paper is reported in the appendix.

%{\bf Related Works.} 
\section{RELATED WORKS}

In the last two decades many efforts have been devoted to designing contextual bandit policies \citep{abbasi2011improved,abbasi2012online,ariu2020thresholded,auer2002using,bastani2020online,chu2011contextual,kim2019doubly,li2017provably,oh2020sparsity,wang2018minimax,kuzborskij19a,foster2020beyond}. When restricting to the high-dimensional setting an appealing approach is based on sparse linear models, i.e. the number $s$ of non-zero components of the regression vector is assumed to be much smaller than the input dimension $d$. Many different strategies have been investigated \citep{kim2019doubly,ariu2020thresholded,bastani2020online}. Among the proposed approaches, one of the most recent and valuable works is \citep{oh2020sparsity}, where %Authors have 
they design a greedy policy 
%suitable to the generalized linear bandit setting which is 
based on the Lasso estimator. 
%The novelty of 
Interestingly, their approach does not require knowledge of 
%one to know the value 
$s$ and it does not perform random pulls in order to have i.i.d. data. Inspired by this work, we propose a greedy algorithm which does not need to know the rank index $r$ (dimensionality of the underlying representation). A technical challenge that we are facing is how to obtain an accurate estimator relying on non i.i.d. samples, while considering a more complex matrix estimator. We observe that their regret bound argument is not accurate as there might be a hidden dependency in the number of features (the same inaccuracy can be found in~ \citep{cellaSparseMTL, kim2019doubly, calandriello2014sparse}. %\textcolor{red}{\sout{and we provide a matching lower bound under the \textit{restricted eigenvalue conditions}}}. 

%\MP{this para is very long and we consdier splitting into two parts}
Multi-task and meta-learning frameworks have been studied primarily in the supervised-learning setting~\citep{tripuraneni2021provable, denevi2018incremental,denevi2019online,lounici2011oracle,argyriou2008convex,baxter2000model}. Specifically, the impact of representation learning with trace-norm regularization has been widely investigated when considering i.i.d. data \citep{koltchinskii2011nuclear,negahban2011estimation,pontil2013excess}. More recently different authors have investigated the combination of multi-task and meta-learning with interactive learning settings (e.g. bandits and reinforcement learning) \citep{hu2021near,basu2021no,cellaSparseMTL,kveton2021meta,simchowitz2021bayesian,calandriello2014sparse,d2019sharing,cella2020,yang2020impact}. Among the latter category, the most relevant works are \citep{yang2020impact,hu2021near,cellaSparseMTL}. \cite{cellaSparseMTL} considered both the multi-task and the meta-learning frameworks but assuming the task vector parameters to be \textit{jointly sparse}, which is more  restrictive than the low rank assumption considered here. 
Differently, \citep{yang2020impact} considers a similar multi-task problem both in the finite and infinite action settings, whereas \citep{hu2021near} considers only the latter setting. Note that these two references impose stronger conditions and assumptions. 
%is also similar to the one we considered in this work. 
%\sout{Yet}
Indeed, the policies considered in those papers 
%\KL{no longer true}\sout{\karim{are statistically suboptimal},} 
require knowledge of the low-rank parameter $r$ and are not sample efficient. 
%\KL{lack of guarantees for computational efficiency for their policies}. 
In this work, we develop a different and more general analysis for the finite action setting. Notably, our algorithm is computationally efficient and does not requires knowledge of low-rank parameter $r$. 
Our idea is similar to the last approach investigated in \citep{calandriello2014sparse}, but their work considers the simpler case where samples are independently distributed. At last we mention that \citep{kveton2017low-rank} studied low-rank stochastic bandits. However they consider a completely different protocol, regret and arms definitions. Thus it is not possible to draw a quantitative comparison to this work.
%Besides the hidden dependency imprecision  mentioned before, we also observe that their proof is invalid since it relies on a erroneous concentration argument. }

\section{PROBLEM SETTING}
In this section we recall the stochastic linear contextual bandit problem and then introduce the multi-task learning framework.
%\footnote{The related meta-learning setting will be introduced and addressed later in the paper.}.

\subsection{Stochastic Linear Contextual Bandits}\label{SubSec:LinearBandits}
The contextual linear bandit problem consists of a sequence of $N$ interactions between a learning policy $\pi$ and the environment. At each round $n\in[N]$, the agent is given a set $\Dcal_n\subseteq \reals^d$ formed of $K$ arms (context vectors) from which it has to select one arm $\mathbf{x}_{n}\in\Dcal_n$ among the available ones. Then, the agent observes the corresponding reward $Y_n$ via  a noisy linear regression,
%which is assumed to be a random variable. When considering linear contextual bandits, the observed reward satisfies
\begin{equation}\label{Eq:LinearReward}
    Y_{n} = \x_n^\top \w + \eta_n,
\end{equation}
where $\eta_n$ is a random variable %taking values in $[-M,M]$ \massi{obsolete, say the noise is aussian]}
whose statistical distribution will be specified later. By knowing the true regression vector $\w$, at each round $n\in[N]$ the optimal policy $\pi^*$ selects $\mathbf{x}^*_n = \arg\max_{\mathbf{x}\in \Dcal_n} \mathbf{x}^\top \w$, maximizing the instantaneous expected reward. The learning objective is then to maximize the cumulative reward, or equivalently, to minimize the \textit{pseudo-regret}
    \begin{equation}
    	R(N,\w) = \sum_{n=1}^N r_n = \sum_{n=1}^{N}(\mathbf{x}^*_n - \mathbf{x}_{n})^\top \w. \nonumber
    \end{equation}
\noindent We make the following additional assumptions on the arms, the regression vector and the noise variables. %\massi{The following assumption speaks only about the arms, no regression vectors!}
\begin{assumption}[Contexts Distribution %and Regression Vector
]\label{Ass:BoundedNorms} %\textcolor{red}{\sout{All arms have bounded norm, that is, for every $\x \in \cup_{n\in[N]}\Dcal_n$ it holds that $\norm{\x}_{\infty} \leq \bbar$.}} 
At each round $n\in[N]$, the decision set $\Dcal_{n}\subset \reals^d$ consists of $K$ $d$-dimensional vectors $\x_k$ admitting representation $\x_k = \Sigma_k^{1/2}\mathbf{z}_k$ where $\Sigma_k$ is the covariance operator of $\x_k$ and $\mathbf{z}_k$ is a $d$-dimensional vector with independent 
%\MP{I'd always use sub-Gaussian throughout} 
sub-Gaussian entries with zero mean and variance $1$. We assume the tuples $\Dcal_{1},\dots,\Dcal_{N}$ to be drawn i.i.d. from a fixed unknown zero mean sub-Gaussian joint distribution $p$ on $\mathbb{R}^{Kd}$. 
%For any integer $k\in [K]$, we denote by $\Sigma_k$ the $d\times d$ covariance matrix of arm $k$. 
% \textcolor{red}
%Finally, we assume that $\norm{\w}_2{\leq}L$, for some constant $L>0$.
\end{assumption}
Let $C_{\mathbf{z}}$ be a positive constant satisfying  
$$
\max_{1\leq k \leq K}
\|\mathbf{z}_k\|_{\psi_2}\,{<}\,C_{\mathbf{z}}.
$$
We denote by $C_{\x}$ the sub-Gaussian Orlicz norm of the $K$ marginal distributions of $p$ corresponding to the $K$ arms, that is 
$
C_{\x} := \max_{1\leq k \leq K}\|\x_k\|_{\psi_2}.
%\leq  C_{\mathbf{z}} \max_{1\leq k \leq K}\left\lbrace \opnorm{\Sigma_k}^{1/2} \right\rbrace <\infty.
$
An obvious computation gives 
\[
C_{\x} \leq  C_{\mathbf{z}} \max_{1\leq k \leq K}\left\lbrace \opnorm{\Sigma_k}^{1/2} \right\rbrace <\infty.
\]

{The above assumption is quite standard when considering high-dimensional linear bandits \citep{bastani2020online,SparseBanditsLowerBound,kim2019doubly,tsitsiklis}. Note that vectors associated to different arms are allowed to be correlated between each other. Moreover, Assumption \ref{Ass:BoundedNorms} implies that each arm $k$ admits zero mean, square-integrable marginal distribution with $d\times d$ covariance $\Sigma_k$.}

Finally, similarly to \citep{bastani2021mostly,oh2020sparsity}, we introduce the following assumption on the arms distribution. This mild condition is necessary for our analysis in order to control the fulfillment of a specific regularity property (the RSC condition, see Definition \ref{Def:RSC} below) by the empirical covariance matrix. %\MP{this last sentence is a bit quick/unclear} 
Specifically, it will allow the designed policy to avoid interleaving its arm selection strategy with random plays.

\begin{assumption}[Arms distribution]\label{Ass:ArmsDistribution} 
There exists a constant $\nu\,{<}\,\infty$ such that $p(-\xbar)/p(\xbar)\,{\leq}\,\nu\;\forall \xbar \in \reals^{Kd}$. 
{Moreover, there exists a constant $\omega_\Xcal<\infty$ such that, for any permutation $(a_1,\dots,a_K)$ of $[K]$, any integer $i\in\{2,\dots,K{-}1\}$ and any fixed vector $\w\in\reals^d$,} it holds that
%Additionally, let us consider a permutation $(a_1,\dots,a_K)$ of $(1,\dots,K)$. For any integer $i\in\{2,\dots,K-1\}$ and any fixed vector $\w\in\reals^d$, there exists a constant $\omega_\Xcal<\infty$ such that
\begin{align*}
    \E&\left[\x_{a_i} \x_{a_i}^\top \Ind\left\{\x_{a_1}^\top \w<{\cdots}<\x_{a_K}^\top\w\right\}\right] \preceq \\
    &\preceq \omega_{\Xcal} \E\left[ \left( \x_{a_1} \x_{a_1}^\top {+} \x_{a_K}\x_{a_K}^\top \right) \Ind\left\{\x_{a_1}^\top \w <\cdots<\x_{a_K}^\top\w\right\}\right].
\end{align*}
\end{assumption}
%\MP{I think the displayed equation requires some more comments/intuition - we can discuss this later and add a sentence}
\noindent Parameter $\nu$ characterizes the skewness of the arms distribution $p$; for symmetrical distributions $\nu = 1$. Notice that Assumption \ref{Ass:ArmsDistribution} is satisfied for a large class of distributions both discrete and continuous (e.g. Gaussian, multi-dimensional uniform and Rademacher). 
The value $\omega_\Xcal$ captures dependencies between arms: the more positively correlated they are, the smaller $\omega_\Xcal$ will be. The extreme scenario is given by perfectly correlated arms, in which case we have $\omega_\Xcal$ independent of any problem parameters. Finally, when arms are generated i.i.d. from a multivariate Gaussian or a multivariate uniform distribution over the sphere we have $\omega_{\Xcal}=O(1)$.

\subsection{Low-Rank Linear Contextual Bandits}\label{SubSec:MTLLinearBandits}

In this paper, we address the problem of simultaneously solving $T$ linear contextual bandit tasks. Each task $t\in[T]$ lasts for $N$ rounds and is associated with a regression vector $\w_t \in \reals^d$. We denote with 
\[
\W = [\w_1,\dots,\w_T]
\]
the $d\times T$ matrix, whose columns are formed by the $T$ regression vectors, which are unknown to the learner. The bandit tasks 
are explored in parallel. At each round $n\in[N]$ 
%\sout{face each task sequentially from the first up to the last}. 
each task simultaneously receives a decision set $\Dcal_{t,n}$ from the environment, upon which the learner (policy $\pi$) picks an arm $\x_{t,n}$, and the associated feedback $y_{t,n}$ is observed via the linear regression  
\begin{equation}
\label{eq:lin-ttt}
        y_{t,n} = \x_{t,n}^\top \w_t + \eta_{t,n}
\end{equation}
where $\eta_{t,n}$ is a noise random variable which we specify below. 
%\karim{\sout{Only when the reward is observed the protocol moves to the next task $t+1$.}}
%As done in \citep{argyriou2008convex}, In this paper we consider the following sparsity assumption.

We make the following assumption on the task regression vectors. 
\begin{assumption}[Low-Rank Assumption]\label{Ass:Sparsity} The matrix $\W\in\reals^{d\times T}$ has rank $\rho(\W)=r$, with $r\ll \min(d,T)$. 
\end{assumption}
\noindent {The above assumption implies that there exists a low dimensional representation $\Bbf\in\reals^{d\times {r}}$ with orthonormal columns and a matrix $\mathbf{C}\in \reals^{r\times T}$ such that $\W = \Bbf \mathbf{C}$.} 
%\sout{an orthogonal matrix $\mathbf{Q}\in\mathbf{O}^d$ that induces a
%n alternative, {low-dimensional}, representation $\Bbf\in\reals^{d\times \karim{r}}$ such that $\W = {\bf Q} \Bbf$.} 
{This is in-line with the standard high-dimensional setting where many features are redundant.}

\noindent Our principal objective is to minimize the multi-task pseudo-regret,
\begin{eqnarray} \nonumber
   \Rbar({T},{N}) &=& \sum_{t=1}^T R({N},{\w_t}) \\ &=& \sum_{t=1}^T \sum_{n=1}^N (\x_{t,n}^* {-} \x_{t,n})^\top {\w_t},
   \label{Eq:MTLRegret}
\end{eqnarray}
where $\x_{t,n}^*=\arg\max_{\x\in\Dcal_{t,n}}\x^\top {\w_t}$.

\section{LOW-RANK MATRIX ESTIMATION}
\label{Sec:LowRankOracleInequality} 
In the standard supervised learning setting a natural estimator suited to Assumption \ref{Ass:Sparsity} is given by the trace (nuclear) norm regularized estimator \citep[see][and references therein]{argyriou2008convex,buhlmann2011statistics,lounici2011oracle,negahban2011estimation}. In particular a large body of works have shown that bounds on the estimation error is controlled by the rank of the underlying regression matrix.
Here we adapt this methodology to the bandit setting. At each round $n\in[N]$, we estimate matrix $\W$ via $\What_{n+1}$ as the solution of the following trace norm regularization problem
\begin{equation}\label{Eq:TraceNormReg}
   %\What_{n+1}\in 
   \arg\min_{\A\in\reals^{d\times T}} \frac{1}{n} \sum_{t=1}^T \norm{\y_{t,n} - \X_{t,n} [\A]_t }_2^2 + \lambda_n \nucnorm{\A}
\end{equation}
where the design matrix  $\X_{t,n} \in \mathbb{R}^{n\times d}$  
%= [\x^\top_{t,1},...,\x^\top_{t,n}]$ 
contains the context vectors $\x_{t,i} \in \mathbb{R}^d$, $i \in [n]$ as its rows, the vector $\y_{t,n} \in \mathbb{R}^n$ is formed by the rewards for task $t$ after $n$ interactions, sampled from \eqref{eq:lin-ttt}, and $\nucnorm{\A}$ is the trace norm of the matrix $\A$, that is the sum of its singular values. 
If compared to the Lasso estimator, the objective function \eqref{Eq:TraceNormReg} encourages low-rank matrices instead of sparse vectors. Before presenting the technical results we need to introduce the following additional notation.

{\bf Covariance matrices.} We indicate the theoretical {averaged} $d \times d$ covariance matrix as 
\begin{equation}
\label{eq:S-111}
    \Sigmabf = \frac{1}{K}\E\left[\sum_{k=1}^K \x_k \x_k^\top \right] = \frac{1}{K} \sum_{k=1}^K \Sigma_k,
\end{equation} where the expectation is over the decision set sampling distribution $p_\Xcal$ which is assumed to be shared between the tasks. 

For every $t \in [T]$, we denote the empirical covariance matrix for task $t$ as
\begin{equation}
\label{eq:S-222}
\Sigmahat_{t,n}=\frac{1}{n}\sum_{i=1}^n\x_{t,i} \x_{t,i}^\top.
\end{equation}

Moreover, we use the notation $\Sigmabar$ and $\Sigmabarhat_n \in\reals^{dT\times dT}$ for the theoretical and the empirical multi-task matrices respectively. They are both block diagonal and composed by the corresponding $T$ single task $d\times d$ matrices on the diagonal, that is
$$
\Sigmabar = {\rm diag}(\Sigmabf,\dots,\Sigmabf)
$$
and
$$
\Sigmabarhat_n= {\rm diag}(\Sigmahat_{1,n},\dots,\Sigmahat_{T,n}).
$$
\noindent 
{We introduce now the restricted strong convexity (RSC) condition on covariance $\Sigmabar$. To this end, we denote by $\rm{Vec}(\Deltabf)$, the vector in $\reals^{dT}$ obtained by stacking together the columns of $\Deltabf
\in \reals^{d\times T}$. Let 
$$
\W={\bf U}{\bf D}{\bf V}^\top
$$ be the singular value decomposition of matrix $\W$ of rank $r$ where ${\bf U}\in \reals^{d\times r}$, ${\bf V}\in\reals^{T\times r}$ are the matrices formed by the left and right singular vectors, respectively and ${\bf D}$ is the $r\times r$ diagonal matrix of singular values.}

\begin{definition}[RSC Condition]\label{Def:RSC}
We say that the restricted strong convexity (RSC) condition is met for the theoretical multi-task matrix $\Sigmabar\in\reals^{dT\times dT}$, with positive constant $\kappa(\Sigmabar)$ if
\begin{align}
\label{eq:coneCr}
    \min\left\{
    \frac{\norm{\rm{Vec}(\Deltabf)}_{\Sigmabar}^2}{2 \norm{\rm{Vec}(\Deltabf)}_2^2}: \Deltabf \in \Ccal(r)
\right\} \geq \kappa(\Sigmabar),
\end{align}
%\MP{I'd move the comment on $\rm{Vec}(\Deltabf)$ before the definition. I'd also repeat it in the notation section and/or notation table}
where 
\begin{equation}\label{Eq:Cset}
  \Ccal(r){=}  \Big\{ \Deltabf \in \reals^{d\times T}: \nucnorm{\Pi(\Deltabf)} \leq {3} \nucnorm{\Deltabf {-} \Pi(\Deltabf)}\Big\}
\end{equation}
and $\Pi(\Deltabf)$ is the projection onto set 
$$
%\Bcal:=
\Big\{\Deltabf\in\reals^{d\times T}: {\rm{Col}}(\Deltabf)\perp {\bf U}, {\rm{Row}}(\Deltabf)\perp {\bf V}\Big\}.
$$
\end{definition}

{The RSC condition allows us to control the error $\Deltabf_n = \What_n - \W$ as it guarantees that the considered empirical loss is strictly convex on a restricted subset of approximately low rank matrices defined by the cone $\Ccal(r)$.}

\noindent 
\begin{remark}[Value of $\kappa(\Sigma)$] As discussed in \citep{calandriello2014sparse} Definition \ref{Def:RSC} can be compared with the restricted eigenvalue and the compatibility conditions that have been investigated for the group Lasso \citep{buhlmann2011statistics,lounici2011oracle} and Lasso \citep{oh2020sparsity,kim2019doubly,calandriello2014sparse} estimators. {It is standard in high-dimensional statistics or in compressed sensing to assume the existence of an absolute constant $\kappa>0$ such that $\kappa(\Sigmabar)>\kappa>0$. Such conditions are satisfied w.h.p. for instance by context vectors with i.i.d. zero mean, variance 1, sub-Gaussian entries.}
\end{remark}

\subsection{Oracle Inequality with non i.i.d. Data}
We can now state our first result which controls the Forbeniou-norm estimation error 
\[
\Big\|\What_{n+1} - \W\Big\|_{\rm F}  
\]
assuming the RSC condition to hold for the empirical multi-task matrix $\Sigmabarhat_n$. In Proposition \ref{Le:FirstN0} we will show that such condition is satisfied with high probability under our Assumptions \ref{Ass:BoundedNorms}, \ref{Ass:ArmsDistribution} and the RSC condition on $\Sigmabar$.

{{We denote by $\{\e_1,\ldots,\e_T\}$ the standard canonical basis of $\reals^T$.}
We have then to upper bound the operator norm of the following matrix 
\[
\Dbf_n = \sum_{t=1}^T \sum_{i=1}^n \eta_{t,i}\, \x_{t,i}  \e_{t}^\top
\]
in order to set the regularization parameter at round $n$ so that the estimation bound in the following lemma holds with high probability.}

\begin{lemma}\label{Le:OracleInequality} Let $\{\x_{t,i}: t\in[T], i\in[N]\}$ be the sequence generated by Algorithm \eqref{Alg:ASNucRegBan} 
%adapted sequence such that $\x_{t,s}$ may depend on $\{\x_{\tau,i}: \tau \leq t , i \leq s\}$. 
Suppose the RSC condition holds for the empirical multi-task matrix $\Sigmabarhat_{n}$ with constant $\kappa\big(\Sigmabarhat_{n}\big)$. For $\delta\in(0,1)$, define the regularization parameter $\lambda_n=\lambda_n(\delta)$ such that with probability at least $1-\delta$
\begin{equation}
\frac{1}{n} \opnorm{\Dbf_n }\leq  \lambda_n.
 %\sum_{t=1}^T\sum_{i=1}^n \x_{t,i}  \e_{t}^\top \eta_{t,i}
 \label{eq:Regn}
\end{equation}
%\MP{write the rhs in (9) as for ${\bf D}_n$ below}
Then with probability at least $1-\delta$ the trace-norm regularized estimate $\What_n$ defined in (\ref{Eq:TraceNormReg}) satisfies
\begin{equation}\label{Eq:Lemma1Ineq}
    \norm{\What_{n+1} - \W}_{\rm F} \leq \frac{32\lambda_n\sqrt{r}}{\kappa\big(\Sigmabarhat_{n}\big)}.
\end{equation}
\end{lemma}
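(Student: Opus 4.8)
The plan is to run the classical Negahban--Wainwright oracle-inequality argument for decomposable regularizers, specialised to the nuclear norm, and to carry it out \emph{conditionally} on the intersection of the two high-probability events appearing in the statement: the event $\{\tfrac1n\opnorm{\Dbf_n}\le\lambda_n\}$ (which holds with probability at least $1-\delta$ by the definition of $\lambda_n$) and the event that the RSC condition holds for $\Sigmabarhat_n$ (an explicit hypothesis). On this intersection the whole argument is deterministic; the non-i.i.d. nature of the data is thus confined to the separate control of $\opnorm{\Dbf_n}$, which is exactly what the choice of $\lambda_n$ encapsulates. Throughout I write $\Deltabf_n=\What_{n+1}-\W$.

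First I would write the \emph{basic inequality}. Since $\What_{n+1}$ minimises the objective in \eqref{Eq:TraceNormReg}, its objective value is no larger than that of $\W$; substituting $\y_{t,n}-\X_{t,n}\w_t=\etab_{t,n}$ and expanding the squares gives
\[
\frac1n\sum_{t=1}^T\norm{\X_{t,n}[\Deltabf_n]_t}_2^2 \;\le\; \frac2n\sum_{t=1}^T \etab_{t,n}^\top\X_{t,n}[\Deltabf_n]_t \;+\; \lambda_n\big(\nucnorm{\W}-\nucnorm{\What_{n+1}}\big).
\]
The key observation for the cross term is that $\sum_t\etab_{t,n}^\top\X_{t,n}[\Deltabf_n]_t=\tr(\Dbf_n^\top\Deltabf_n)$, because the $t$-th column of $\Dbf_n$ is $\sum_i\eta_{t,i}\x_{t,i}$. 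Trace duality (Hölder for Schatten norms) then yields $\tr(\Dbf_n^\top\Deltabf_n)\le\opnorm{\Dbf_n}\nucnorm{\Deltabf_n}$, so on our event the cross term is controlled by a multiple of $\lambda_n\nucnorm{\Deltabf_n}$, the precise factor being the one fixed by the threshold on $\lambda_n$.

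The next step is to show $\Deltabf_n\in\Ccal(r)$. I would split $\Deltabf_n=\Deltabf'+\Deltabf''$ with $\Deltabf''=\Pi(\Deltabf_n)$ (column space $\perp\mathbf U$, row space $\perp\mathbf V$) and $\Deltabf'=\Deltabf_n-\Pi(\Deltabf_n)$. Decomposability of the nuclear norm with respect to the subspaces of the SVD $\W=\mathbf U\mathbf D\mathbf V^\top$ gives $\nucnorm{\W+\Deltabf''}=\nucnorm{\W}+\nucnorm{\Deltabf''}$, whence, by the triangle inequality, $\nucnorm{\W}-\nucnorm{\What_{n+1}}\le\nucnorm{\Deltabf'}-\nucnorm{\Deltabf''}$. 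Feeding this and the cross-term bound into the basic inequality, and using that the left-hand quadratic form is nonnegative, yields the cone relation $\nucnorm{\Deltabf''}\le 3\nucnorm{\Deltabf'}$, i.e. $\Deltabf_n\in\Ccal(r)$.

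Finally I would convert the quadratic form into a Frobenius bound. Since $\tfrac1n\sum_t\norm{\X_{t,n}[\Deltabf_n]_t}_2^2=\norm{{\rm Vec}(\Deltabf_n)}_{\Sigmabarhat_n}^2$ and $\Deltabf_n\in\Ccal(r)$, the RSC condition gives $\norm{{\rm Vec}(\Deltabf_n)}_{\Sigmabarhat_n}^2\ge 2\kappa(\Sigmabarhat_n)\norm{\Deltabf_n}_{\rm F}^2$. On the other side, the support part $\Deltabf'$ has rank at most $2r$, so $\nucnorm{\Deltabf'}\le\sqrt{2r}\,\norm{\Deltabf_n}_{\rm F}$; combined with the cone relation this bounds the right-hand side of the basic inequality by a constant multiple of $\lambda_n\sqrt{r}\,\norm{\Deltabf_n}_{\rm F}$. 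Cancelling one factor of $\norm{\Deltabf_n}_{\rm F}$ and collecting the numerical constants (the cone constant, the factor $\sqrt2$ from the rank-$2r$ bound, and the $2\kappa$ from RSC) delivers \eqref{Eq:Lemma1Ineq} with the stated constant $32$. The main obstacle is purely in the bookkeeping of the third step: one must apply decomposability to the exact subspace pair defined by $\Pi$ so that $\W$ and $\Deltabf''$ really are nuclear-norm additive, and track how the cone constant propagates into the final numerical constant. The genuinely probabilistic difficulty, namely the martingale control of $\opnorm{\Dbf_n}$ for dependent contexts, is deferred to the choice of $\lambda_n$ and does not enter this argument.
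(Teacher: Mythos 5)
Your proposal is correct and follows essentially the same route as the paper's proof: both instantiate the Negahban--Wainwright decomposable-regularizer argument for the nuclear norm, combining the basic inequality, trace duality against $\opnorm{\Dbf_n}$, the rank-$2r$ decomposition of the error with the resulting bound $\nucnorm{\Deltabf_n}\le 4\nucnorm{\Deltabf'}$, and the RSC lower bound on the quadratic form to cancel one factor of $\norm{\Deltabf_n}_{\rm F}$. The only cosmetic difference is that you derive the cone membership explicitly from decomposability of the nuclear norm, whereas the paper imports that step by citing Lemma~1 of \citet{negahban2011estimation}.
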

%\begin{proof}
The proof, which is presented in the appendix, follows along the lines of~\citep[Theorem 1]{negahban2011estimation}.

\begin{remark}
{We note that a similar result can be also be found in \citet{koltchinskii2011nuclear}. In addition, as pointed out in these references, it is possible to extend this result to the case of approximately low rank matrices $\W$ by introducing an additional mispecification error in \eqref{Eq:Lemma1Ineq}.}
\end{remark}
%\end{proof}

\subsection{Controlling the Noise Term}
The result in Lemma \ref{Le:OracleInequality} requires us to set the regularization parameter at round $n$ as in \eqref{eq:Regn}. To this end we exploit deviation bounds for martingales. 
Let us define the filtration $(\Fbar_{n})_{n\geq 0}$ on a probability space $(\Omega,\mathcal{A},\mathbb{P})$ as follows: $\Fbar_{0}$ is the trivial  $\sigma$-field $\{\emptyset,\Omega\}$, and for any $n\geq 1$,
\[
\Fbar_{n}=\sigma\big(\X_{1,i},\eta_{1,i},\dots,\X_{T,i},\eta_{T,i},\, i\in [n]\big).
\]
We consider the standard noise assumption \citep{abbasi2011improved,bastani2020online,cella2020}.
%\massi{[The notation for $\Fbar_{n}$ could be shorter if we use matrix and vector notations: $\Fbar_{n}=\sigma\left(\X_{1,n},\etab_{1,n},\dots,\X_{T,n},\etab_{T,n}\right)$.]}
\begin{assumption}[subGaussian noise]\label{Ass:subGaussnoise} 
%All arms have bounded norm, that is, for every $\x \in \cup_{n\in[N]}\Dcal_n$ it holds that $\norm{\x}\leq b$ \leo{$\norm{\x}_{\infty} \leq 1$}. 
The noise variables $(\eta_{t,n})_{t,n}$ are a sequence of sub-Gaussian random variables adapted to the filtration $\{\Fbar_{n}\}_{n\geq 0}$ and such that for any $1\leq t\leq T$ and  $n\geq 1$,
$$
\E[\eta_{t,n}|\Fbar_{n-1}] = 0,\quad\text{and}\quad \E[\eta_{t,n}^2|\Fbar_{n-1}] \leq \sigma^2,
$$
and $\eta_{1,n},\ldots,\eta_{T,n}$ are mutually independent conditionally on $\Fbar_{n-1}$. We denote by $c_{\eta}$ the sub-Gaussian norm of the $\eta_{t,n}$'s, that is, $\max_{t,n}\{\|\eta_{t,n}\|_{\psi_2}\} \leq c_\eta$.
\end{assumption}

\begin{proposition}[SubGaussian noise]
\label{prop:subgaussian2} Let Assumptions \ref{Ass:BoundedNorms} and \ref{Ass:subGaussnoise} be satisfied. Then, with probability at least $1-\delta$
\[
    \frac{1}{n} \opnorm{\Dbf_n} \leq \lambda_n
\]
where
\begin{align}\label{Eq:RegParameter}
&\lambda_n =  \max_{k\in [K]}\opnorm{\Sigmabf_k}^{1/2} 
\left[\sigma \sqrt{ \frac{(d+T) \, \log \frac{2 N(d+T)}{\delta}}{\ntrain}
}  \right.\notag \\
&\hspace{0.1cm} \bigvee \left. \frac{ C\sqrt{\left(T  + d+ \log \frac{4N}{\delta} \right)
\log^3 \frac{8N(d+T)}{\delta}}}{\ntrain} \,\right]
\end{align}
where $C>0$ can depend only on $c_{\eta}, C_{\mathbf{z}}$.

\end{proposition}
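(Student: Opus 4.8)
The matrix $\Dbf_n\in\reals^{d\times T}$ has $t$-th column $\sum_{i=1}^n\eta_{t,i}\x_{t,i}$, so its operator norm is the supremum of a scalar martingale over the product of two spheres. The plan is to discretize the spheres, apply a Bernstein-type martingale concentration inequality at each net point, and then union-bound over the nets and over $n\in[N]$. First I would reduce the operator norm to a finite maximum: let $\mathcal N_d$ and $\mathcal N_T$ be $1/4$-nets of the unit spheres of $\reals^d$ and $\reals^T$, of cardinality at most $9^d$ and $9^T$. A standard argument gives $\opnorm{\Dbf_n}\le 2\max_{\bu\in\mathcal N_d,\,\bv\in\mathcal N_T}\bu^\top\Dbf_n\bv$, so it suffices to control, for each fixed pair $(\bu,\bv)$, the scalar process
\[
M_n(\bu,\bv)=\bu^\top\Dbf_n\bv=\sum_{i=1}^n\sum_{t=1}^T v_t\,\eta_{t,i}\,(\bu^\top\x_{t,i}),\qquad v_t=\e_t^\top\bv.
\]

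Next, fixing $(\bu,\bv)$, I would verify that $M_n$ is an $\Fbar_n$-martingale and build an exponential supermartingale. Conditionally on the $\sigma$-field generated by $\Fbar_{i-1}$ together with the round-$i$ decision sets, each context $\x_{t,i}$ is fixed and, by Assumption \ref{Ass:subGaussnoise}, the noises $\eta_{1,i},\dots,\eta_{T,i}$ are independent, centered and sub-Gaussian with norm $c_\eta$; hence the step increment $\sum_t v_t\eta_{t,i}(\bu^\top\x_{t,i})$ is conditionally sub-Gaussian, and $S_n(\lambda)=\exp\!\big(\lambda M_n-\tfrac{\lambda^2}{2}V_n\big)$ with $V_n=c_\eta^2\sum_{i=1}^n\sum_t v_t^2(\bu^\top\x_{t,i})^2$ is a supermartingale. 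Using $\E[\eta_{t,i}^2\mid\Fbar_{i-1}]\le\sigma^2$ together with $\sum_t v_t^2\,\E[(\bu^\top\x_{t,i})^2\mid\Fbar_{i-1}]\le\max_k\opnorm{\Sigma_k}$ (recall $\|\bv\|=1$ and $\bu^\top\x_{t,i}$ is sub-Gaussian with norm at most $C_{\x}\le C_{\mathbf z}\max_k\opnorm{\Sigma_k}^{1/2}$), the predictable quadratic variation is at most $\nu_n:=\sigma^2 n\max_k\opnorm{\Sigma_k}$, while each increment is sub-exponential with scale $b\lesssim c_\eta C_{\mathbf z}\max_k\opnorm{\Sigma_k}^{1/2}$.

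Combining the supermartingale bound $\Pro(M_n\ge s,\,V_n\le w)\le e^{-s^2/(2w)}$ with an independent high-probability control of $V_n$ (a second Bernstein argument for the sum of the conditionally sub-exponential squares $(\bu^\top\x_{t,i})^2$) yields a two-regime tail of the Bernstein form $\Pro(|M_n|\ge s)\le 2\exp\!\big(-c\min\{s^2/\nu_n,\,s/b\}\big)$, where the variance term carries the true conditional variance $\sigma^2$. Inverting this tail at confidence $\delta/(|\mathcal N_d|\,|\mathcal N_T|\,N)$, so as to union-bound over both nets (contributing the $d+T$ factor and the $\log(d+T)$ inside the logarithm) and over all $n\in[N]$ (contributing the $\log N$ factor), gives $s\lesssim\sqrt{\nu_n(d+T)\log(\cdots)}\vee b\,(d+T)\log(\cdots)$; dividing by $n$ produces exactly the sub-Gaussian term $\sigma\max_k\opnorm{\Sigma_k}^{1/2}\sqrt{(d+T)\log(\cdots)/n}$ and the sub-exponential term of order $1/n$ in \eqref{Eq:RegParameter}.

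The main obstacle is the concentration step of the second and third paragraphs: standard Freedman-type inequalities require \emph{bounded} martingale increments, whereas here the increments $v_t\eta_{t,i}(\bu^\top\x_{t,i})$ are only sub-exponential (a product of two sub-Gaussians), and the quadratic variation $V_n$ is itself a random sub-exponential quantity that must be controlled on its own before the supermartingale bound can be turned into an unconditional statement. Handling this cleanly requires either a sub-exponential martingale Bernstein inequality valid only for $|\lambda|\le 1/b$, or a truncation-and-peeling argument that first restricts to the high-probability event on which all $\bu^\top\x_{t,i}$ are bounded by $\sim C_{\x}\sqrt{\log(NTd/\delta)}$ and then sums the discarded tails over a geometric grid; the latter device is the source of the additional $\log^3(\cdots)$ factor appearing in the sub-exponential term of \eqref{Eq:RegParameter}.
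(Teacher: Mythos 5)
Your overall strategy is sound and the dominant (sub-Gaussian) term of $\lambda_n$ does come out correctly from your route, but the route is genuinely different from the paper's and, as written, it does not reproduce the second term of \eqref{Eq:RegParameter}. The paper does not discretize both spheres and apply a scalar Bernstein bound; it keeps $\Dbf_n$ as a $d\times T$ matrix martingale and applies the matrix Freedman inequality \citep[Corollary 1.3]{tropp2011freedman} to the stopped process $\Dbf_{n\wedge\tau}$, where the stopping time $\tau$ is triggered when either $\opnorm{\sum_t \x_{t,n}\otimes\x_{t,n}}$ or $\max_t|\eta_{t,n}|$ exceeds a high-probability threshold (the thresholds themselves are obtained by a single-sphere net plus Hanson--Wright, and a sub-Gaussian tail bound on the noise). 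Your stopping/truncation device is morally the same as the paper's, so the handling of unboundedness is fine. The real difference is where the factor $d+T$ enters: in the matrix Freedman route it enters the sub-Gaussian term through the quadratic variation bounds $\opnorm{\bm{W}_{\rm col}}\leq Tn\sigma^2\max_k\opnorm{\Sigmabf_k}$ and $\opnorm{\bm{W}_{\rm row}}\leq nd\sigma^2\max_k\opnorm{\Sigmabf_k}$, while the dimensional prefactor $(d_1+d_2)$ of Freedman's inequality costs only a $\log(d+T)$; in your route it enters through the cardinality $9^{d}\cdot 9^{T}$ of the product net, i.e.\ additively in the exponent.

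This is where the concrete gap lies. After your union bound the deviation level is $s\lesssim\sqrt{\nu_n\,x}\;\vee\;b\,x$ with $x\asymp (d+T)+\log(N/\delta)$. The first branch gives $\sigma\max_k\opnorm{\Sigmabf_k}^{1/2}\sqrt{(d+T+\log(N/\delta))/n}$, matching (indeed slightly improving) the first term of \eqref{Eq:RegParameter}. But the second branch gives $b\,(d+T+\log(N/\delta))/n$ up to logs, i.e.\ a \emph{linear} dependence on $d+T$, whereas the stated $\lambda_n$ has $\sqrt{(T+d+\log(4N/\delta))\log^3(\cdot)}/n$, i.e.\ $\sqrt{d+T}$ up to logs. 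The paper gets the square root because the per-round matrix increment satisfies $\opnorm{\sum_t\eta_{t,n}\x_{t,n}\e_t^\top}\leq\max_t|\eta_{t,n}|\cdot\opnorm{\sum_t\x_{t,n}\otimes\x_{t,n}}^{1/2}\lesssim\sqrt{(T+d+\log)\,\log}$ on the truncation event, and this $R$ is multiplied only by $\log(N(d+T)/\delta)$ in Freedman's bound --- there is no $(d+T)$-sized union bound to pay in the Bernstein tail. This loss in the $\varepsilon$-net approach for the sub-exponential regime is intrinsic, not an artifact of your bookkeeping; so your argument proves the proposition only with a larger second term in $\lambda_n$ (larger by roughly $\sqrt{(d+T)/\log^{2}(\cdot)}$ when $d+T$ is large). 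Since that term is the lower-order $O(1/n)$ piece, the degradation washes out of Theorem \ref{Th:MTLRegretBound} for $N$ beyond a mild threshold, but it does mean the statement as written, with the displayed $\lambda_n$, is not established by your route. To recover it you should keep the matrix structure and invoke matrix Freedman on the stopped martingale, as the paper does.
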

\begin{proof}[Proof Sketch]
The complete proof is given in Appendix \ref{AppSec:ControlDn}. %\ref{AppSec:ControlDn}.
%Here is a short sketch. 
The proof relies on Freedman's inequality for martingale \citep[][Corollary 1.3]{tropp2011freedman} combined with a stopping time argument to handle unbounded arms.

\end{proof}

\section{MULTI-TASK BANDITS VIA TRACE NORM REGULARIZATION}\label{Sec:TransferLearningLinBan}
\begin{algorithm}[t!]\caption{Trace-Norm Bandit}\label{Alg:ASNucRegBan}
\begin{algorithmic}[1]
\REQUIRE Confidence parameter $\delta$, noise variance $\sigma^2$
%\STATE $\What_{1} = {\bf 0}_{d\times T}$, $\lambda_1 = $
\STATE At round $n=1$ arms are picked randomly
\STATE Observe $Y_{1,1},\dots,Y_{T,1}$
\FOR{$n \in 2,\dots,N$} % Rounds
    \STATE update $\What_n$ and $\lambda_n$ according to (\ref{Eq:TraceNormReg}) and (\ref{Eq:RegParameter}) 
    \FOR{$t \in 1,\dots,T$} % Tasks
        \STATE observe $\Dcal_{t,n}$
        \STATE pick $\x_{t,n} \in \arg\max_{\x\in\Dcal_{t,n}} \x^\top [\What_n]_t$
        \STATE observe reward $Y_{t,n}$
    \ENDFOR
\ENDFOR
\end{algorithmic}
\end{algorithm}
\noindent In this section we present our proposed algorithm for multitask learning with linear stochastic bandits. 
%We also comment on its use for meta-learning. 
\noindent The algorithm  relies on the trace-norm regularized estimator in~\eqref{Eq:TraceNormReg} and is displayed in Algorithm \ref{Alg:ASNucRegBan}. It does not require to know any input parameter besides an upper bound on the noise variance $\sigma^2$ and the confidence value $\delta$, which are necessary to specify the regularization parameter\footnote{Using a regularization is quite common in the bandit literature (e.g. \cite{abbasi2011improved,bastani2020online}. 
} $\lambda_n$. Indeed this is the only parameter required by our policy and it should be chosen large enough ({in theory}) as prescribed by Proposition \ref{prop:subgaussian2}; see also equation (11). 
%At each round, Algorithm \ref{} updates the task parameters and the regularization parameter according to 

%Our policy does not need any forced-sampling strategy similarly \citep{oh2020sparsity}. \MP{Gergon "forced-sampling strategy" is a bit unclear} 

Algorithm \ref{Alg:ASNucRegBan} updates the task parameters via a trace-norm regularized estimation using all the already observed data. Similarly to \citep{oh2020sparsity}, our policy is completely greedy and does not need any forced-sampling strategy. 
\noindent
 %Other strategies perform the exploration by playing randomly for a certain number of rounds (see \cite{bastani2020online,kim2019doubly} but they also require the i.i.d. assumption.
{This phenomenon was recently highlighted by \citep{bastani2021mostly} which proved that greedy policies do not need an explicit exploration and can even be rate-optimal if there is sufficient randomness in the observed contexts (relying on i.i.d. arms assumption). 
%\sout{This is unlike the class of OFUL-type policies (see \cite{abbasi2011improved} and references therein)}. 
This is unlike the UCB-like policies (like OFUL) or forced-exploration policies \cite{bastani2020online}. 
%All other Lasso policy perform initial random exploration play 
Our trace-norm bandit policy (Alg. \ref{Alg:ASNucRegBan}) belongs to the former class of “exploration-free” greedy policies. 
%This allowed us to avoid both playing i.i.d. for some rounds or using an UCB like exploration-strategy. 
More precisely, our policy performs a natural initial exploration thanks to the arms randomness condition (Assumptions \ref{Ass:BoundedNorms} and \ref{Ass:ArmsDistribution}) until the empirical covariance matrix satisfies the (RSC) condition. In the regret analysis below this can be seen in the definition of the first $N_0$ rounds in \eqref{eq:N0_simplified}.}

\subsection{Multi-Task Learning}
{In this subsection we present our main result which is a high-probability upper bound on the multi-task regret incurred by the trace-norm bandit policy.}

%\karim{
%We define the effective rank of a covariance matrix $\A$ as $\mathbf{r}(\A) = \mathrm{trace}(\A)/\opnorm{\A}$. Moreover, 
For any $\delta \in (0,1)$, let $N_0(\delta)$ be the smallest integer such that
\begin{align} 
\label{eq:N0_simplified} 
N&\geq  C \max_{k\in [K]} \opnorm{\Sigma_k}^2 \left(   \big(r \log d\big) \log \frac{4TN}{\delta} \right)^2,
\end{align}
where $C>0$ is some large enough numerical constant that may depend only on the distribution of the arms, in particular $\nu ,\omega_{\Xcal}$ and $C_{\mathbf{z}}$. 

Note that \eqref{eq:N0_simplified} is a sufficient condition on the minimum number of rounds for our regret bound to be valid. Interestingly it depends on the ambient dimension $d$ only logarithmically. This means that for our greedy policy, we can guarantee that the duration of the implicit exploration phase is at most $r^2$ (up to logarithmic factors).  In situation where the $\W^*$ matrix is low-rank with $r\ll \sqrt{d}$, this represents another benefit of our trace norm policy procedure over 
\citep{yang2020impact} which requires a much longer exploration phase of at least $d^2$ rounds for their regret bound to be valid.

We now state the main result.
\begin{theorem}\label{Th:MTLRegretBound}
Let Assumptions \ref{Ass:BoundedNorms}, \ref{Ass:ArmsDistribution}, \ref{Ass:Sparsity}, \ref{Ass:subGaussnoise} be satisfied. {Assume that $\max_{t\in [T]}\norm{\w_t}_2{\leq}L$, for some constant $L>0$ and that $N\geq N_0(\delta)$ for some $\delta\in (0,1)$}. Then, with probability at least $1-\delta$, the multi-task regret of Algorithm \ref{Alg:ASNucRegBan} is upper bounded ({up to logarithmic factors}) by
\begin{align*}
    \bigO&\bigg( (T\sqrt{rN}+\sqrt{rdTN})\bigg).
\end{align*}
\end{theorem}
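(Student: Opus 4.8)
The plan is to split the horizon into an initial exploration phase consisting of the first $N_0(\delta)=\widetilde{O}(r^2)$ rounds and a main phase $n>N_0(\delta)$, and to bound the instantaneous multi-task regret $\sum_{t=1}^T r_{t,n}$ in the main phase by the Frobenius estimation error $\norm{\What_n-\W}_{\rm F}$ controlled in Lemma \ref{Le:OracleInequality}. Write $\Deltabf_n=\What_n-\W$. First I would invoke Proposition \ref{Le:FirstN0}: under Assumptions \ref{Ass:BoundedNorms}, \ref{Ass:ArmsDistribution} and the RSC condition on $\Sigmabar$, on a single event of probability at least $1-\delta$ the empirical matrix $\Sigmabarhat_n$ satisfies the RSC condition with $\kappa(\Sigmabarhat_n)\geq\kappa/2$ simultaneously for all $n\geq N_0(\delta)$. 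On this event, Lemma \ref{Le:OracleInequality} together with Proposition \ref{prop:subgaussian2} gives, for all $n>N_0(\delta)$,
\[
\norm{\Deltabf_n}_{\rm F}\leq\frac{64\sqrt{r}\,\lambda_{n-1}}{\kappa},\qquad \lambda_{n-1}\lesssim \max_{k}\opnorm{\Sigma_k}^{1/2}\,\sigma\sqrt{\tfrac{(d+T)\log(N(d+T)/\delta)}{n}},
\]
up to the lower-order second branch of \eqref{Eq:RegParameter}.

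The crucial step is to convert this estimation error into regret \emph{without} paying a spurious $\sqrt{d}$ factor for the unbounded arm norms. By the greedy rule of Algorithm \ref{Alg:ASNucRegBan} we have $(\x_{t,n}-\x_{t,n}^*)^\top[\What_n]_t\geq 0$, hence
\[
r_{t,n}=(\x_{t,n}^*-\x_{t,n})^\top\w_t\leq(\x_{t,n}^*-\x_{t,n})^\top(\w_t-[\What_n]_t)\leq 2\max_{\x\in\Dcal_{t,n}}\big|\x^\top[\Deltabf_n]_t\big|.
\]
The key observation is that $\Deltabf_n$ is $\Fbar_{n-1}$-measurable whereas the decision set $\Dcal_{t,n}$ is drawn independently of $\Fbar_{n-1}$ (Assumption \ref{Ass:BoundedNorms}). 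Thus, conditionally on $\Fbar_{n-1}$, each $\x^\top[\Deltabf_n]_t$ is a one-dimensional sub-Gaussian variable with proxy $C_{\x}\norm{[\Deltabf_n]_t}$, and a union bound over the $K$ arms, the $T$ tasks and the $N$ rounds yields, with probability at least $1-\delta$ and simultaneously for all $t,n$,
\[
\max_{\x\in\Dcal_{t,n}}\big|\x^\top[\Deltabf_n]_t\big|\lesssim C_{\x}\,\norm{[\Deltabf_n]_t}\,\sqrt{\log\tfrac{KTN}{\delta}}.
\]
Projecting the arm onto the \emph{fixed} direction $[\Deltabf_n]_t$, rather than bounding $\norm{\x}$ itself, replaces the typical arm norm $\sqrt d$ by the constant $\sqrt{\log(KTN/\delta)}$; this is precisely what produces the benign $\sqrt T$ rather than $\sqrt{Td}$ dependence. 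Summing over tasks and applying Cauchy--Schwarz,
\[
\sum_{t=1}^T r_{t,n}\lesssim C_{\x}\sqrt{\log\tfrac{KTN}{\delta}}\sum_{t=1}^T\norm{[\Deltabf_n]_t}\leq C_{\x}\sqrt{\log\tfrac{KTN}{\delta}}\,\sqrt T\,\norm{\Deltabf_n}_{\rm F}.
\]

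Combining the two high-probability events, for $n>N_0(\delta)$ we obtain $\sum_{t=1}^T r_{t,n}\lesssim \sqrt T\,\norm{\Deltabf_n}_{\rm F}\lesssim \sqrt{rT(d+T)/n}$, hiding logarithmic and problem-dependent constants ($\sigma,C_{\x},\max_k\opnorm{\Sigma_k},\kappa$). Summing over $n$ via $\sum_{n\le N} n^{-1/2}\leq 2\sqrt N$ and splitting $\sqrt{d+T}\leq\sqrt d+\sqrt T$ gives the main-phase contribution $\sum_{n>N_0}\sum_{t} r_{t,n}\lesssim \sqrt{rT(d+T)N}\lesssim T\sqrt{rN}+\sqrt{rdTN}$, the claimed rate. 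For the first $N_0(\delta)=\widetilde{O}(r^2)$ rounds I would not rely on the estimator but bound $r_{t,n}=(\x_{t,n}^*-\x_{t,n})^\top\w_t\leq 2\max_{\x\in\Dcal_{t,n}}|\x^\top\w_t|\lesssim L\sqrt{\log(KTN/\delta)}$ by the same conditional projection argument onto the fixed vector $\w_t$ (using $\norm{\w_t}\leq L$); this bounds the exploration regret by $\widetilde{O}(TN_0)=\widetilde{O}(Tr^2)$, which is dominated by $T\sqrt{rN}$ whenever $N\geq N_0(\delta)$ and is therefore lower order.

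The main obstacle is the transfer of the RSC condition from the population matrix $\Sigmabar$ to the empirical matrix $\Sigmabarhat_n$ uniformly over $n\geq N_0(\delta)$ (Proposition \ref{Le:FirstN0}): since contexts are collected adaptively by the greedy policy, $\Sigmahat_{t,n}$ is not a sum of i.i.d.\ terms, and controlling $\kappa(\Sigmabarhat_n)$ requires Assumption \ref{Ass:ArmsDistribution} to guarantee enough residual randomness in the selected arms. A secondary delicate point is the conditional sub-Gaussian concentration step under adaptive arm selection, which is legitimate only because the fresh decision set $\Dcal_{t,n}$ is independent of the $\Fbar_{n-1}$-measurable error $\Deltabf_n$; the union bound over $(t,n)$ must be taken uniformly so that a single event supports the summation over all rounds. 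Once these two events are secured, the remaining summation is routine.
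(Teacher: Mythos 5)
Your proposal follows essentially the same route as the paper's proof: the two-phase split at $N_0(\delta)$, the greedy inequality reducing the instantaneous regret to $\langle \x_{t,n}^*-\x_{t,n},\,\w_t-\what_{t,n}\rangle$, the conditional sub-Gaussian projection of the fresh decision set onto the $\Fbar_{n-1}$-measurable error columns (which is exactly how the paper avoids the spurious $\sqrt{d}$ factor), the Cauchy--Schwarz step $\sum_t\norm{[\Deltabf_n]_t}\leq\sqrt{T}\norm{\Deltabf_n}_{\rm F}$, and the combination of Lemma \ref{Le:OracleInequality}, Propositions \ref{prop:subgaussian2} and \ref{Le:FirstN0} with $\sum_{n\leq N}n^{-1/2}\leq 2\sqrt{N}$. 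The only caveat, shared with the paper's own treatment, is that the first-phase contribution $\widetilde{O}(TN_0)$ is not literally dominated by $T\sqrt{rN}$ under the bare assumption $N\geq N_0(\delta)$ (one would need $N\gtrsim N_0^2/r$), but both arguments absorb this into the ``up to logarithmic factors'' statement.
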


Our proof is inspired by the one proposed in \citep{oh2020sparsity} for the single task lasso bandit approach. We present here a sketch summarizing the key steps. Full technical details are provided in Appendix \ref{AppSec:RegretUBound}. 
%\sout{A key step to bound the regret is to control the multi-task theoretical matrix $\Sigmabar$ with the multi-task adapted one $\Sigmabar_n$. This allows matrix $\Sigmabarhat_n$ to satisfy the RSC Condition, hence to met rates as in (\ref{Eq:Lemma1Ineq}).}

%\MP{if space permit we can extend the proof a bit}
\begin{proof}[Proof Sketch of Theorem \ref{Th:MTLRegretBound}.]
%In order to prove the regret bound we first introduce the following technical lemmas. 
We consider the instantaneous multi-task regret at round $n$
\begin{align*}
\Rbar_n &= \sum_{t=1}^T R_{t,n} = \sum_{t=1}^T  \langle \x_{t,n}^*,  \w_t \rangle  - \langle \x_{t,n} , \w_t \rangle.
\end{align*}
During the first $N_0(\delta)$ rounds, the RSC condition may not be satisfied. During this phase, using a simple conditioning argument, we obtain the following bound. We have with probability at least $1-\delta$, for any $n \in [N_0]$
\begin{align*}
\sum_{n=1}^{N_0}\Rbar_n &\lesssim T \, L\,N_0\,\sqrt{\log(eTN_0K\delta^{-1})}.
\end{align*}
%\textcolor{red}{Do we really need $K$ in the above bound?}
Starting from the $N_0+1$ round, we bound the instantaneous regret as follows: 
\begin{align*}
    \Rbar_{n} 
    &\leq  \sum_{n=N_0+1}^{N}\sum_{t=1}^T  \langle \x_{t,n}^* - \x_{t,n},  \w_t - \what_{t,n} \rangle.
\end{align*}
Using another conditioning, we get with probability at least $1-\delta$, for any $t\in [T]$, $n\in [N]$,
\begin{align*}
    \Rbar_{n} 
    &\lesssim  \sqrt{T} \norm{\W - \What_{n} }_F \sqrt{\log(eTNK\delta^{-1})}.
\end{align*}
This bound is of interest if we can guarantee that $\What_{n} $ is an accurate estimate of $\W$, meaning that $\norm{\W - \What_{n} }_F$ is small. 

To this end, we first prove that $\Sigmabarhat_n$ satisfies the RSC condition whp starting from round $N_0$. The argument is based on concentration bounds for martingales {combined with a novel geometric analysis of the cone of matrix $\mathcal{C}_r$; see Lemma \ref{Le:MatrixConcentration}.} 

\begin{proposition}\label{Le:FirstN0}
Let Assumptions \ref{Ass:BoundedNorms} and \ref{Ass:ArmsDistribution} be satisfied. Assume in addition that $\Sigmabar$  satisfies the RSC condition. Then, for any $\delta\in (0,1)$ and any $n\geq N_0(\delta)$, with probability at least $1-\delta$, the multi-task empirical matrix $\Sigmabarhat_n$ satisfies the RSC condition with constant $$
\kappa\big(\Sigmabarhat_n\big)\geq\frac{\kappa\big(\Sigmabar\big)}{4 \nu \omega_\Xcal}>0.
$$
\end{proposition}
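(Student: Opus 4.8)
The plan is to show that, on the cone $\Ccal(r)$, the empirical quadratic form $\norm{{\rm Vec}(\Deltabf)}_{\Sigmabarhat_n}^2=\sum_{t=1}^T[\Deltabf]_t^\top\Sigmahat_{t,n}[\Deltabf]_t$ stays above a fixed fraction of its theoretical counterpart. Concretely, it suffices to prove that with probability at least $1-\delta$, for every $\Deltabf\in\Ccal(r)$,
$$\sum_{t=1}^T[\Deltabf]_t^\top\Sigmahat_{t,n}[\Deltabf]_t\;\geq\;\frac{\kappa(\Sigmabar)}{2\nu\omega_\Xcal}\,\norm{\Deltabf}_{\rm F}^2,$$
since dividing by $2\norm{\Deltabf}_{\rm F}^2$ then yields $\kappa(\Sigmabarhat_n)\geq\kappa(\Sigmabar)/(4\nu\omega_\Xcal)$.

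First I would establish a conditional second-moment lower bound that absorbs the greedy selection bias. At round $i$ the arm $\x_{t,i}$ maximises $\x^\top[\What_i]_t$ over $\Dcal_{t,i}$, and $[\What_i]_t$ is $\Fbar_{i-1}$-measurable; hence the conditional law of $\x_{t,i}$ given $\Fbar_{i-1}$ is that of the top-ranked arm for a \emph{fixed} direction $\w=[\What_i]_t$. Writing the population covariance as a sum over all orderings of the $K$ arms, applying Assumption \ref{Ass:ArmsDistribution} to dominate each of the $K-2$ intermediate arms by the two extreme (smallest and largest) arms, and then the skewness bound $p(-\xbar)/p(\xbar)\leq\nu$ to dominate the smallest arm by the selected largest arm, I would obtain the deterministic bound
$$\E\!\left[\x_{t,i}\x_{t,i}^\top\mid\Fbar_{i-1}\right]\;\succeq\;\frac{1}{2\nu\omega_\Xcal}\,\Sigmabf,$$
valid for every realisation of the selection direction. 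This is the step that turns the implicit exploration of the greedy policy into a genuine lower bound on the information collected.

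Summing this bound controls the compensator (predictable quadratic variation): for any fixed $\Deltabf$, $\tfrac1n\sum_{t,i}\E[\langle\x_{t,i},[\Deltabf]_t\rangle^2\mid\Fbar_{i-1}]\geq\tfrac{1}{2\nu\omega_\Xcal}\norm{{\rm Vec}(\Deltabf)}_{\Sigmabar}^2\geq\tfrac{\kappa(\Sigmabar)}{\nu\omega_\Xcal}\norm{\Deltabf}_{\rm F}^2$, where the last step is the RSC assumption on $\Sigmabar$. It then remains to control the martingale fluctuation $\sum_{t}[\Deltabf]_t^\top(\Sigmahat_{t,n}-M_{t,n})[\Deltabf]_t$, with $M_{t,n}=\tfrac1n\sum_i\E[\x_{t,i}\x_{t,i}^\top\mid\Fbar_{i-1}]$, uniformly over the cone. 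For a single fixed $\Deltabf$ this is an average of bounded-variance martingale differences, so Freedman's inequality (as in Proposition \ref{prop:subgaussian2}), combined with a stopping-time truncation to handle the unbounded sub-Gaussian arms, controls it at rate $\mathrm{polylog}/\sqrt{n}$ times $\norm{\Deltabf}_{\rm F}^2$.

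The delicate point---and the main obstacle---is to make this fluctuation bound \emph{uniform} over the cone while keeping the dependence on the ambient dimension logarithmic. Here I would exploit the geometry of $\Ccal(r)$: since $\Deltabf-\Pi(\Deltabf)$ has rank at most $2r$ and $\nucnorm{\Pi(\Deltabf)}\leq3\nucnorm{\Deltabf-\Pi(\Deltabf)}$, every $\Deltabf\in\Ccal(r)$ satisfies $\nucnorm{\Deltabf}\leq4\sqrt{2r}\,\norm{\Deltabf}_{\rm F}$. Expanding the fluctuation in the singular system of $\Deltabf$ reduces it to a bilinear form in the singular values weighted by scalar martingale averages of the form $u^\top(\Sigmahat_{t,n}-M_{t,n})u'$, each of which concentrates dimension-freely for fixed directions; the nuclear-norm bound then converts the resulting estimate into one proportional to $r\,\norm{\Deltabf}_{\rm F}^2$ times a $\mathrm{polylog}(d)/\sqrt{n}$ factor (this is the content of Lemma \ref{Le:MatrixConcentration}). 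Requiring this error to be at most $\tfrac12\cdot\tfrac{\kappa(\Sigmabar)}{\nu\omega_\Xcal}\norm{\Deltabf}_{\rm F}^2$ is precisely the condition $n\geq N_0(\delta)$ in \eqref{eq:N0_simplified}, namely $n\gtrsim(r\log d)^2$ up to the confidence factor. Combining the compensator lower bound with the fluctuation upper bound yields the displayed inequality and hence the claim. I expect the uniform cone-concentration step to be where the real work lies, since a naive covering argument over low-rank matrices would only give a bound degrading polynomially in $d$; obtaining the logarithmic dependence requires the refined geometric analysis.
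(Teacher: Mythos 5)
Your proposal is correct and follows essentially the same route as the paper's proof: the conditional-covariance lower bound $\E[\x_{t,i}\x_{t,i}^\top\mid\Fbar_{i-1}]\succeq(2\nu\omega_\Xcal)^{-1}\Sigmabf$ is exactly the step the paper delegates to \citep[Lemma 10]{oh2020sparsity}, your uniform cone-concentration argument (dyadic decomposition in the singular system, nuclear-norm bound $\nucnorm{\Deltabf}\leq4\sqrt{2r}\norm{\Deltabf}_{\rm F}$, Freedman plus stopping times) is precisely Lemma \ref{Le:MatrixConcentration}, and your final ``fluctuation at most half the compensator'' combination is the content of the perturbation Lemma \ref{Le:RandomRSC}. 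The constants also work out as in the paper ($\kappa(\Sigmabar_n)\geq\kappa(\Sigmabar)/(2\nu\omega_\Xcal)$, halved by the deviation on $\Ccal(r)$ for $n\geq N_0(\delta)$).
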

Consequently, this means that we can now use Lemma \ref{Le:OracleInequality} and Proposition \ref{prop:subgaussian2} which guarantee with probability at least $1-\delta$, simultaneously for any $n\in [N_0,N]$ 
\begin{align*}
&\norm{ \What_{n} - \W}_{\rm F} \lesssim\sqrt{\frac{r}{n}} \biggl(  \sqrt{T+d+\log\left(\frac{4N(d+T)}{\delta}\right)} \\
&\hspace{3cm}\cdot\log^{3/2}\left(\frac{8N(d+T)}{\delta}\right)\biggr).
\end{align*}
Summing over $n$, we obtain with probability at least $1-\delta$ up to logarithmic factors
\begin{align*}
 \sum_{n=N_0+1}^{N}\Rbar_s &\leq C \sqrt{rT(T+d)N} ,
\end{align*}
where $C = C\left(L,\eta,\sigma,C_{\mathbf{z}},\kappa(\Sigmabar),\max_{1\leq k \leq K }\left\lbrace \opnorm{\Sigma_{k}}^{1/2}\right\rbrace\right)$ is a finite constant under our assumptions.

An union bound summing the regrets for the first phase $n\leq N_0$ and the second phase $n>N_0$ gives the result (up to a rescaling of the constants).

\end{proof}

\subsection{Result Discussion}

We now discuss the implication of Theorem \ref{Th:MTLRegretBound} and compare it to previous approaches to multi-task representation learning in the bandit setting.

{\bf Advantage over ITL.} Notice that running \textbf{any} $T$ independent policies (ITL) with the linear contextual bandit setting defined in Section \ref{SubSec:LinearBandits} %\karim{\sout{under Assumptions \ref{Ass:BoundedNorms}, \ref{Ass:ArmsDistribution}}} 
would yield at best a regret bound of order {$T \sqrt{dN}$, up to logs; see e.g. \citep[Chapter 19.4, comment 5]{lattimore2020bandit}}. %This can be shown considering the lower bound argument in \citep[See Chapter 24.1]{lattimore2020bandit}. 
%We would still have a lower bound of order $\bigO(Td\sqrt{N})$ if considering $\norm{\x}_2 \leq 1$ and assuming $d\leq 2N$ \citep[See Chapter 24.2]{lattimore2020bandit}. 
Since this regret bound is always larger than the upper bound in Theorem \ref{Th:MTLRegretBound} for the proposed MTL strategy, there is a gain in using our method. In particular, if $T>d$, discarding logarithmic factors, the bound for our method is smaller by a factor of order $O(\sqrt{d/r})$, while for $d > T$ the gain is of order $O(\sqrt{T/r})$.

{\bf Minimax Optimality}. Theorem 2 in \citep{yang2020impact} provides a matching minimax lower bound to our Theorem \ref{Th:MTLRegretBound} (up to logarithmic terms). In particular this implies that in the regime $T\geq d$, our policy achieves the minimax regret $T\sqrt{rN}$ (up to logs). This corresponds to the performance of the oracle policy which knows the true underlying representation a-priori.

{\bf Comparison to SOTA Approaches.} Our upper bound compares favorably to Theorem 1 in \citep{yang2020impact} for the finitely many arms setting. Their result assumes Gaussian arms with non singular covariance matrices.  %\sout{$\Sigma_k$ with $\lambda_{\max}(\Sigma_k) \le O(1/d)$ and $\lambda_{\min}(\Sigma_k) \ge \Omega(1 / d)$}.
They also assume that $\mathrm{rank}(\W)=r$ with \textbf{known} $r$ and $K, T \le \mathrm{poly}(d)$, $N \ge d^2$. Then Theorem 1 in \citep{yang2020impact} guarantees for their MLinGreedy policy that
\begin{align*}
    \E[\overline{R}(T,N)]  = &O\Big( \left(  T\sqrt{rN} + \sqrt{rdTN} \right)\cdot\\
    &\cdot\sqrt{\log(NKT)\log(NTdr)}\log \log N\Big). 
\end{align*}
%Note that their regret bound contains the logarithmic term $\sqrt{\log(NKT)\log(NTdr)}\log\log N$; see the end of the proof of their Theorem 1 on page 15.
We stress out that their policy requires the knowledge of the rank $r$ whereas our policy does not. Notably, their regret bound requires $N\geq d^2$ rounds to be valid whereas our regret bound is valid as soon as {$N\gtrsim r^2$ (up to logs)}. Moreover, their analysis requires the invertibility of the arms covariance, whereas we only need the less restrictive RSC condition. Finally we also extend the result to sub-Gaussian arm distributions. {With our notation, \cite{hu2021near} obtained a regret bound of order $O(T\sqrt{drN} + d\sqrt{rTN})$ up to logs for their MTLR-OFUL policy in the infinitely many arms setting,  provided the rank $r$ is known to their policy.}

\section{EXPERIMENTS}\label{Sec:Experiments}
In this section we validate the policy proposed in Section \ref{Sec:TransferLearningLinBan}. The experiments displayed in Figures \ref{Fig:Exp1} and \ref{Fig:Exp2} compare 3 different policies: the Trace-Norm Bandit approach of Algorithm \ref{Alg:ASNucRegBan} for different choices of the regularization parameter,
the Oracle Policy which knows the low dimensional representation $\Bbf\in\reals^{d\times r}$ to
select the arm to play at each round 
and the ITL policy which solves each $d$-dimensional task separately.
In order to compute the trace-norm estimator (Eq. \ref{Eq:TraceNormReg}) we adopt the accelerated gradient method proposed in \citep{ji2009accelerated}. 
%We represent on the $y$-axis the cumulative regret per task $\overline{R}(T,n)/T$ as a function of $n$.
%\karim{**We may need to modify the following sentence**}\textcolor{magenta}{Our policy parameter has been accurately validated over a logarithmic scale.} Finally, we adopted the cumulative reward gathered over all tasks as performance metric.
In all the experiments, we report results averaged over $5$ repetitions.
\begin{figure}[t!]
    \includegraphics[width=.45\textwidth]{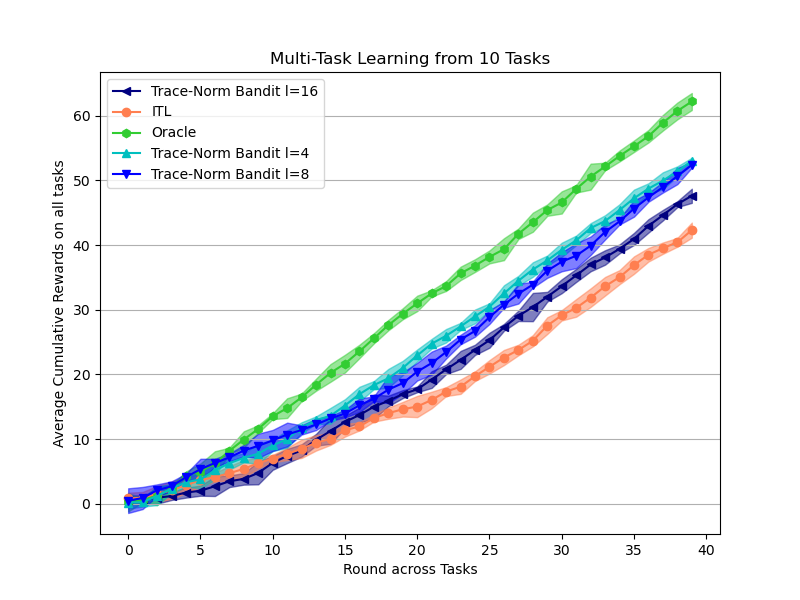}\\
    \includegraphics[width=.45\textwidth]{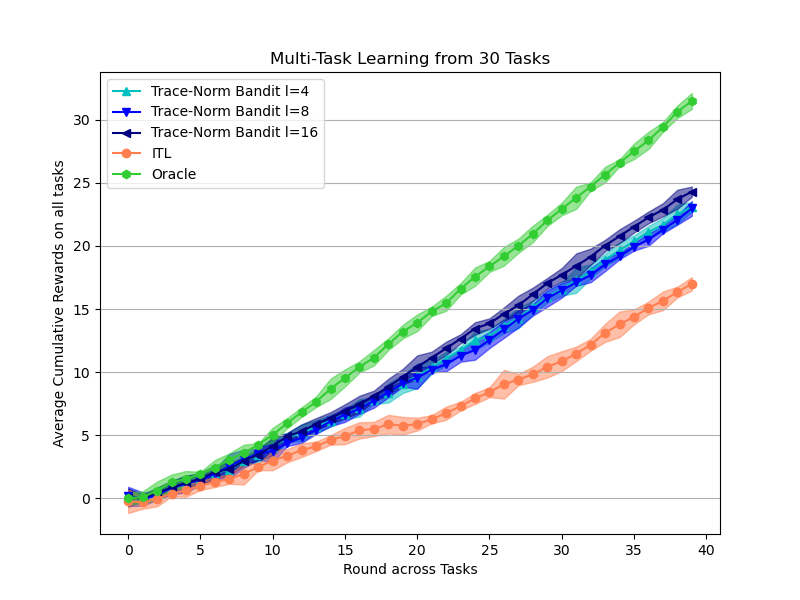}
    \caption{Averaged cumulative reward over all tasks for $T=10$ (top) and $T=30$ (bottom). Each task lasts for $N=40$ rounds, has $K=10$ arms with $d=20$ features, noise variance $\sigma^2=1$.}
    \label{Fig:Exp1}
\end{figure}
\begin{figure}[t!]
    \centering
    \includegraphics[width=.45\textwidth]{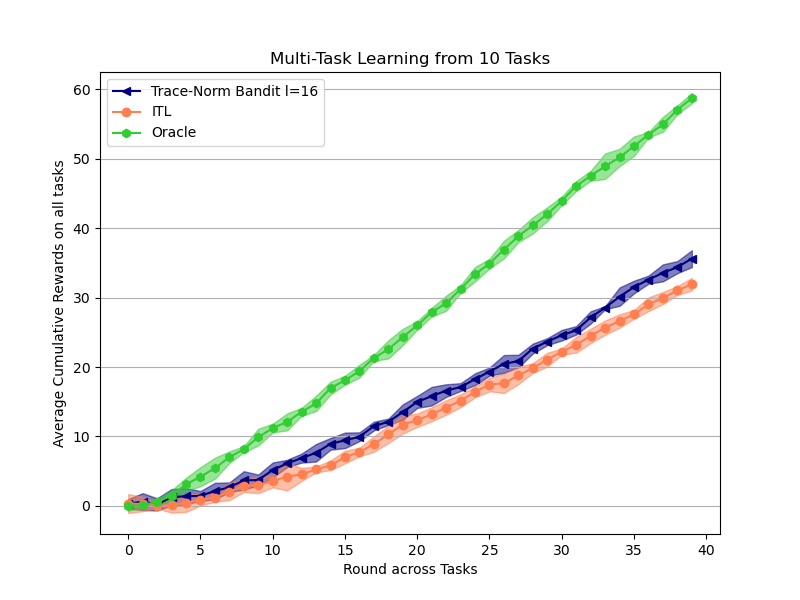}\\
    \includegraphics[width=.45\textwidth]{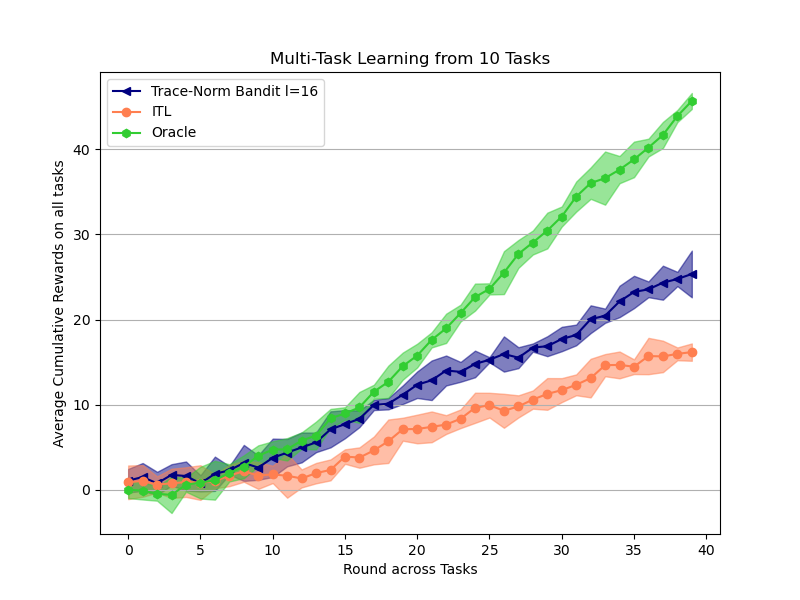}
    \caption{Averaged cumulative reward over all tasks. Each task lasts for $N=40$ rounds, has $K=10$ arms with $d=50$ features and noise variance $\sigma^2 =1$ (top), $\sigma^2=9$ (bottom).}
    \label{Fig:Exp2}
\end{figure}
%\paragraph

\noindent {\bf Data Processing.} We conduct numerical experiments on different configurations. Specifically, we analyze the impact of the number of tasks $T$, the dimension $d$, noise variance $\sigma^2$ and the choice of the regularization parameter. As metric we consider the cumulative reward averaged on all the tasks $\overline{R}(T,N)/T$ as a function of the number of rounds $N$. The arm set 
is generated randomly from a standard Gaussian distribution 
. The task matrix $\W$ has been chosen to be of rank $r\ll d$ with randomly generated gaussian entries. 

{\bf Result Discussion.} {Figures \ref{Fig:Exp1} and \ref{Fig:Exp2} indicate that the proposed multi-task approach performs favorably over independent task learning. Figure \ref{Fig:Exp1} highlights that our trace-norm policy performs well and significantly better than ITL policy as long as the regularization parameter used in Eq. \eqref{Eq:TraceNormReg} is taken large enough according to theory:
$$\lambda_n = l \Bigg[ \left(  \frac{T+d}{n} + \frac{\log\frac{2}{\delta}}{n}\right) \bigvee \Bigg( \sqrt{\frac{T{+}d}{n}} +  \sqrt{ \frac{ \log\frac{2}{\delta}}{n}}\Bigg)\Bigg].
$$
In addition, we also observe that the performance of the trace norm bandit improves and tends to that of the oracle policy as the number of tasks increases. 
In Figure \ref{Fig:Exp2}, we investigate the impact of noise variance on the performance of the different policies. We used a higher value of dimension $d=50$ as it is  well-known that the impact of the noise becomes more problematic in high-dimension. We observe that the  trace-norm policy is significantly less impacted by increased noise variance than the ITL policy. Indeed the performance of the ITL policy degrades by $48\%$ as the noise variance increases from $1$ to $9$. Comparatively, the oracle policy and the trace-norm policy only incur a degradation of about $25\%$ as the noise variance increases. This may be due to the fact that the oracle works in a subspace of dimension $r$, and that the trace-norm policy is able to perform dimensionality reduction through the nuclear norm regularization whereas the ITL policy works in the whole $d$-dimensional space.}

\section{LIMITATIONS AND FUTURE WORK}
We have studied the benefit of multi-task representation learning in the setting of linear contextual bandit tasks. We proposed a novel bandit policy based on trace-norm regularization which is computationally efficient and does not need knowledge of the rank of the underlying task matrix. We derived an upper bound for the multi-task regret of the proposed policy, showing that it is effective in comparison to learning the tasks independently. Additionally, this regret bound is minimax optimal and, in the regime $T\geq d$, our policy's regret matches the one of the oracle policy which knows the low-rank common representation.

%Our analysis we build a novel argument specific for the considered bandit setting. Finally, we evaluate and confirm the benefit of the multi-task approach both theoretically and empirically.
In this work we have restricted our analysis to the case of linear feature learning. Additionally, we still require the designed policy to know one problem parameter which is the variance associated to the noisy term. Relying on \citep{maurer2016benefit}, in the future an interesting extension would be to go beyond the linearity of the shared representation. Secondly, inspired by \citep{belloni2011square} a challenge one may try to solve is to design a fully parameter free policy. Namely, a policy which does not require to know the noise variance $\sigma^2$. %Finally, we are considering removing Assumption \ref{Ass:ArmsDistribution} starting from the Lasso bandit approach proposed in \citep{oh2020sparsity}.
Finally an important future direction is to consider the meta-learning setting, in which the tasks are observed sequentially and the goal is to minimize the regret on future yet-unseen tasks.

\subsubsection*{Acknowledgements}
This work was supported in part by the Chaire Business Analytic for Future Banking, PNRR MUR project PE0000013-FAIR, and the European Union (Project
101070617).
% Bibliograph
\bibliographystyle{plainnat}
\bibliography{main.bib}

\onecolumn
\newpage
%\vspace{.75truecm}
%\renewcommand{\thesubsection}{\Alph{subsection}}
\noindent{\bf \LARGE APPENDIX}

\appendix
\vspace{.5truecm}

\noindent This appendix provides full proofs of the results stated in the main body of the paper. It is organized as follows:
\begin{itemize}
%    \item Appendix \ref{AppSec:DefNot} we list the main notations and definitions used throughout the paper.
    \item Appendix \ref{AppSec:Lemma1} contains the proof of Lemma \ref{Le:OracleInequality}, which is the oracle inequality associated to the error $\Delta_{n+1} = \What_{n+1} - \W$ considering non i.i.d. data.
    \item Appendix \ref{AppSec:ControlDn} presents the proof of Proposition \ref{prop:subgaussian2} on  the control to the operator norm of matrix $\Dbf_n = \sum_{t=1}^T \sum_{i=1}^n \eta_{t,i}\, \x_{t,i} \otimes \e_{t}$.
    \item Appendix \ref{AppSec:Prop2} contains the proof of Proposition \ref{Le:FirstN0}. It relies on a matrix perturbation argument and a novel analysis of the uniform deviation of the empirical arms covariance on the cone $\mathcal{C}_r$ of approximately low-rank matrices % on the perator norm concentration inequalities on the deviations of the empirical arms covariance 
     (Lemmas \ref{Le:MatrixConcentration} and \ref{Le:RandomRSC}). These results are then combined to relate the RSC constant $\kappa(\Sigmabar)$ of $\Sigmabar$ to that of %associated to the theoretical covariance $\Sigmabf\in\reals^{d\times d}$ to 
    its empirical counterpart $\kappa(\Sigmabarhat_{n})$.
    %\ref{AppSec:Lemma2} and \ref{AppSec:Lemma3} contains the proofs of Lemmas \ref{Le:MatrixConcentration} and \ref{Le:RandomRSC}, respectively. These results allow us to connect the RSC constant $\kappa(\Sigmabar)$ associated to the theoretical covariance $\Sigma\in\reals^{d\times d}$ to its empirical counterpart $\kappa(\Sigmabarhat_{n})$.
    \item Appendix \ref{AppSec:RegretUBound} contains the proof for the regret upper bound associated to the policy of Alg. \ref{Alg:ASNucRegBan}.
    % \item In Appendix \ref{AppSec:RegretLBound} we prove the multi-task regret lower bound. Similarly, in Appendix \ref{AppSec:Remark} we give the proof of the regret lower bound relative to the high-dimensional single-task setting \citep{oh2020sparsity}.
    \item In Appendix \ref{AppSec:Exp}, we provide additional numerical experiments.
    %provide the specific measures associated to the plots displayed in the experimental section.
\end{itemize}

%The following table summarizes the main notation used throughout the paper.

\begin{table}[b!]
    \centering
    \renewcommand{\arraystretch}{1.2}
    \begin{tabular}{| c | c |}
        \hline Symbol & Description \\
        \hline \hline
                $[n]$ & The set $\{1,\dots,n\}$, given a  positive integer $n$\\\hline

        $T$ & Number of tasks  \\
        \hline    
        $N$ & Time horizon associated to each single task\\
        \hline    
        $d$ & Dimension of context vectors\\
        \hline
        $a\vee b$ & The maximum between $a$ and $b$ ($\max(a,b)$)\\
        \hline
        $\|\cdot\|_{\psi_2}$ & The sub-Gaussian norm with $\psi_2(s) = e^{s^2}-1$\\
        &(See e.g. page 215 in \cite{koltchinskii2011oracle})
        \\
        \hline
        {$\e_1,\dots,e_T \in\reals^T$} & The standard basis indicator vectors, i.e. $\e_{t,j} = 1$, if $j=t$\\
        &and $0$ otherwise, for all $t,j \in [T]$
        \\
        \hline
        $\W = [\w_1, \dots, \w_T] \in \reals^{d\times T}$ & Matrix of $T$ regression tasks\\
        &(we also use the notation $[\W]_t \equiv w_t,~t \in [T]$)\\
        \hline            
        $r$ & Rank of the task matrix $\W$\\
        \hline
        $K$ & Number of arms\\
        \hline
$p$ & Joint distribution on $\mathbb{R}^{dK}$ (from which $K$ arm vectors are sample) 
        %from which the $N$ decision sets $\Dcal_1,\dots,\Dcal_N$ are sampled i.i.d. 
\\ \hline
$\Dcal_{t,n}$,~~$t \in [T]$,~$n\in [N]$ & Decision sets (each containing $K$ arm vectors) sampled i.i.d. from $p$\\
\hline
        $\x_{t,n}\in\Dcal_{t,n}$ & Arm vector chosen in task $t\in[T]$ at round $n\in[N]$\\
        \hline
        $\x^*_{t,n}\in\Dcal_{t,n}$ & Optimal arm vector in task $t\in[T]$ during round $n\in[N]$\\
        \hline
        $\Sigmabf\in \reals^{d\times d}$ & Theoretical covariance matrix - see eq.~\eqref{eq:S-111}\\
        \hline
        $\Sigmabf_{t,n}\in \reals^{d\times d}$ & Adapted covariance matrix for task $t$ at round $n$; see eq.~\eqref{eq:S-333}\\
        \hline 
        $\Sigmahat_{t,n}\in \reals^{d\times d}$ & Empirical covariance matrix for task $t$ at round $n$; see eq.~\eqref{eq:S-222}\\ \hline
        $\Sigmabar\in \reals^{dT\times dT}$ & $T$-block diagonal matrix ${\rm diag}(\Sigmabf,\dots,\Sigmabf)$ \\
        \hline
        $\Sigmabar_n \in \reals^{dT\times dT}$ & $T$-block diagonal matrix ${\rm diag}(\Sigmabf_{1,n},\dots,\Sigmabf_{T,n})$ \\
        \hline
        $ \Sigmabarhat_n \in \reals^{dT\times dT}$ & $T$-block diagonal matrix ${\rm diag}(\Sigmahat_{1,n},\dots,\Sigmahat_{T,n})$\\
        \hline
        $\norm{\x},~\norm{\x}_1,~\norm{\x}_{\infty}$ & Euclidean, $\ell_1$ and maximum norm associated to a vector $\x$\\
        \hline
        $[\A]_t$ & The $t$-th column of matrix $\A \in \reals^{d\times T}$\\
        \hline 
        $\lambdamin(\Abf), \lambdamax(\Abf)$ & Minimum and maximum eigenvalues of a square symmetric matrix $\Abf$\\
        \hline 
        $\sigma_{\min}(\Abf), \sigma_{\max}(\Abf)$ & Minimum and maximum singular values of matrix $\Abf$\\
        \hline
        $\nucnorm{\Abf}$ & Trace norm of matrix $\Abf$ (sum of its singular values)\\
        \hline
        $\norm{\Abf}_{\rm F}$ & Frobenius norm of matrix $\Abf$ ($\ell_2$ norm o matrix elements / singular values)\\
        \hline
        $\opnorm{\Abf}$ & Operator norm of matrix $\Abf$ (maximum singular value)\\
        \hline
    \end{tabular}
  % \caption{Main notation used throughout the paper.}
    \label{Tab:Notation}
\end{table}

\section{PROOF OF LEMMA \ref{Le:OracleInequality}}\label{AppSec:Lemma1}
By definition of $\What_n$, we have for any $\W\in\mathbb{R}^{d\times T}$,
\begin{equation*}
    \frac{1}{n} \sum_{t=1}^T \norm{\y_{t,n} - \X_{t,n} [\What_{n+1}]_t }_2^2 + \lambda_n \nucnorm{\What_{n+1}} \leq \frac{1}{n} \sum_{t=1}^T \norm{\y_{t,n} - \X_{t,n} [\W]_t }_2^2 + \lambda_n \nucnorm{\W}.
\end{equation*}
We define the error matrix $\Delta_{n+1} = \What_{n+1} - \W$ and introduce the following operator ${\cal A}: \mathbb{R}^{d \times T} \rightarrow \mathbb{R}^{n \times T}$ and its adjoint ${\cal A}^*:  \mathbb{R}^{n \times T} \rightarrow \mathbb{R}^{d \times T}$ as
\begin{align*}
    [{\cal A}(\W)]^t_n &= \langle \x_{t,n} \e^\top_t , \W\rangle = \text{Tr}(\W\e_t \x_{t,n}) =
    %MASSI I remove these steps
    %\x_{t,n}^\top \W \e_t = \x_{t,n}^\top [\W]_t = 
    \x_{t,n}^\top \w_t\\
{\cal A}^*(\mathbf{H}_{n}) &= \sum_{t=1}^T\sum_{i=1}^n \x_{t,i} \e^\top_{t} \eta_{t,i} = \Dbf_n \hspace{1em} \in \reals^{d\times T},
\end{align*}
where $\etab_{t,n} = (\eta_{t,1},\ldots,\eta_{t,n})^\top \in \reals^n$ and 
$
\mathbf{H}_{n}:= \left(\etab_{1,n},\ldots,  \etab_{T,n}\right)
\in \mathbb{R}^{n\times T}.
$

Using this notation the following hold
\begin{align*}
    \frac{1}{n}\norm{{\cal A}(\Delta_{n+1})}_{\rm F}^2 &= \frac{1}{n}\sum_{t=1}^T \norm{\left[{\cal A}(\Delta_{n+1})\right]^t}_2^2 \leq \frac{1}{n} \langle \mathbf{H}_{n}, {\cal A}(\Delta_{n+1}) \rangle + \lambda_n \left( \nucnorm{\W} - \nucnorm{\What_{n+1}} \right)\\
    &\leq \frac{1}{n} \langle \mathbf{H}_{n}, {\cal A}(\Delta_{n+1}) \rangle + \lambda_n \left( \nucnorm{\What_{n+1} + \Delta_{n+1}} - \nucnorm{\What_{n+1}} \right)\\
    &\leq \frac{1}{n} \langle \mathbf{H}_{n}, {\cal A}(\Delta_{n+1}) \rangle + \lambda_n \left( \nucnorm{\Delta_{n+1}} \right).
\end{align*}
Considering now the first term on the RHS and applying Holder's inequality we have
\[
    \frac{1}{n}\lvert\langle \mathbf{H}_{n}, {\cal A}(\Delta_{n+1})\rangle\rvert = \frac{1}{n}\lvert\langle{\cal A}^*(\mathbf{H}_{n}), \Delta_{n+1}\rangle\rvert \leq \frac{1}{n} \opnorm{{\cal A}^*(\mathbf{H}_{n})} \nucnorm{\Delta_{n+1}}.
\]
Now, considering $\lambda_n\geq\frac{1}{n}\opnorm{{\cal A}^*(\mathbf{H}_{n})}$ the following holds:
\begin{equation}\label{Eq:IntermediateBefLemma1}
\frac{1}{n}\norm{{\cal A}(\Delta_{n+1})}_{\rm F}^2 \leq 2 \lambda_n \nucnorm{\Delta_{n+1}}.
\end{equation}

Relying on \citep[][Lemma 1]{negahban2011estimation} we can decompose the error matrix $\Delta_{n+1}$ as $\Delta_{n+1}'+\Delta_{n+1}''$ such that $\Delta_{n+1}'$ is of rank at most $2r$ and 
\begin{equation}\label{Eq:Lemma1Statement}
    \nucnorm{\Delta_{n+1}} \leq 4 \nucnorm{\Delta_{n+1}'} .
\end{equation}
Assuming the RSC condition to be met with constant $\kappa\left(\Sigmabarhat_n\right)$, starting from equation \eqref{Eq:IntermediateBefLemma1} we get
\begin{equation*}
    \norm{\Delta_{n+1}}_{\rm F}^2 \leq  = \frac{ \norm{{\cal A}(\Delta_{n+1})}_{\rm F}^2}{2n\kappa\big(\Sigmabarhat_n\big)}\leq \frac{\lambda_n\ \nucnorm{\Delta_{n+1}}}{\kappa\left(\Sigmabarhat_n\right)}.
\end{equation*}
These last two results combined with \citep[][Lemma 1]{negahban2011estimation} give that  $\nucnorm{\Delta'_{n+1}}\leq\sqrt{2r}\norm{\Delta'_{n+1}}_{\rm F}$, from which we conclude that
\begin{equation*}
    \norm{\Delta_{n+1}}_{\rm F} \leq \frac{32\lambda_n\sqrt{r}}{\kappa\left(\Sigmabarhat_n\right)}.
\end{equation*}

\section{PROOF OF PROPOSITION \ref{prop:subgaussian2}}
\label{AppSec:ControlDn}

\subsection{Preliminary results} 

We consider the stochastic process $\{\mathbf{M}_n\}_{n\geq 0}$ defined as $\mathbf{M}_0=0$ a.s. and for any $n\geq 1$
$$
\mathbf{M}_n = \sum_{t=1}^T \x_{t,n} \otimes \x_{t,n} - \mathbb{E}\left[\x_{t,n} \otimes \x_{t,n} \vert \Fbar_{n-1} \right].
$$
By definition of the Trace-Norm bandit (Algorithm \ref{Alg:ASNucRegBan}), $\{M_n\}_{n\geq 0}$ is a $\Fbar_{n-1}$-martingale. Furthermore, given the past history $\Fbar_{n-1}$, we select at round $n$ for each task $t$ the arm $\x_{t,n}\in \mathcal{D}_{t,n}$ where the sets $\mathcal{D}_{t,n}$, $t\in [T]$, are mutually independent. This means that the arms $\x_{t,n}$, $t\in [T]$, are mutually independent given the past history $\Fbar_{n-1}$.

\noindent Next, we use a standard argument to control the operator norm of $\mathbb{M}_n$. Fix $\epsilon\in (0,1/2)$. An $\epsilon$-net $\mathcal{N}_\epsilon$ of $\Sb^{d}$ is a subset of $\Sb^{d}$ such that for any $\ubf\in \Sb^{d}$, there exists $\vbf\in \mathcal{N}_\epsilon$ such that $\|\ubf-\vbf\|\leq \epsilon$. 
%Corollary 4.2.13 in \cite{vershynin2019} guarantees the existence of an $\epsilon$-net $\mathcal{N}_\epsilon\subset \mathcal{S}^{m}$ such that
% \begin{align*}
%   % \label{eq:Nepscard}
%     |\mathcal{N}_\epsilon| \leq \left(1 + \frac{2}{\epsilon} \right)^{m}.
% \end{align*}
%of cardinality $|\mathcal{N}_\epsilon| \leq \left(1 + \frac{2}{\epsilon} \right)^{d+T}$ such that
Corollary 4.2.13 in \cite{vershynin2019} guarantees the existence of an $\epsilon$-net $\mathcal{N}_\epsilon$ of $\Sb^{d}$ such that
\begin{align}
    \label{eq:Nepscardbis}
    |\mathcal{N}_\epsilon| \leq \left(1 + \frac{2}{\epsilon} \right)^{d}.
\end{align}
Similarly to the first step of the proof of Theorem 4.4.5 in \cite{vershynin2019}, we get
\begin{align}
\label{eq:unifboundquadbis}
  %\|\Dbf_n\|_{\infty}  =  
 \opnorm{ \mathbf{M}_n} \leq \frac{1}{1-2\epsilon} \max_{\bu\in \mathcal{N}_\epsilon}\left\lbrace \langle  \mathbf{M}_n \bu,\bu \rangle\right\rbrace\leq \frac{1}{1-2\epsilon}  \max_{\bu\in \mathcal{N}_\epsilon}\left\lbrace \sum_{t=1}^T \langle\x_{t,n} ,u\rangle^2  - \mathbb{E}[\langle\x_{t,n} ,u\rangle^2\vert \Fbar_{n-1}]\right\rbrace.
\end{align}
% Next we have
% \begin{align}
%    \opnorm{ \mathbf{M}_n} &= \max_{\bu\in \mathcal{N}_\epsilon}\left\lbrace \sum_{t=1}^T \langle\x_{t,n} ,\bu\rangle^2  - \mathbb{E}[\langle\x_{t,n} ,\bu\rangle^2\vert \Fbar_{n-1}]\right\rbrace.
% \end{align}
In view of Assumption \ref{Ass:BoundedNorms}, we have $\langle\x_{t,n} ,u\rangle = \langle  \mathbf{z}_{t,n} ,\Sigmabf_k^{1/2} \bu\rangle$ and $\mathbb{E}[\langle\x_{t,n} ,\bu\rangle^2\vert \Fbar_{n-1}] =\langle  \Sigmabf_{k(t)} \bu,\bu \rangle $ for some $k(t)\in [K]$. We apply now the Hanson-Wright's inequality conditionally on $\Fbar_{n-1}$ to get for any $x>0$
\begin{align*}
    &\mathbb{P}\left( \sum_{t=1}^{T} \langle  \mathbf{z}_{t,n} ,\Sigmabf_{k(t)}^{1/2} \bu\rangle^2 - \langle  \Sigmabf_{k(t)} \bu,\bu \rangle \geq  C \left(\sqrt{T \, \max_{k\in [K]}\{\norm{\Sigmabf_k^{1/2} \bu\otimes \bu \Sigmabf_k^{1/2}}_F^2\}  \, x} + \max_{k\in [K]}\{\opnorm{\Sigmabf_k \bu\otimes \bu \Sigmabf_k}\} \, x\right)\vert \Fbar_{n-1} \right) \leq e^{-x},
% \mathbf{z}_{t,n}^\top \Sigma_{k}\mathbf{z}_{t,n} \leq \mathbb{E}\left[  \mathbf{z}_{t,n}^\top \Sigma_{k}\mathbf{z}_{t,n}   \big \vert \Fbar_{n-1} \right]  + C\left(\sqrt{\|\Sigma_k\|_{\rm F}^2 x} + \opnorm{\Sigma_k}\ x\right)   \big\vert \Fbar_{n-1} \right) \geq 1 - e^{-x}.
\end{align*}
where $C>0$ is a numerical constant which can depend only on $C_{\mathbf{z}}$. Note that $\norm{\Sigmabf_k^{1/2} \bu\otimes \bu \Sigmabf_k^{1/2}}_F =\opnorm{\Sigmabf_k^{1/2} \bu\otimes \bu \Sigmabf_k^{1/2}} \norm{\bu}^2 \leq \opnorm{\Sigmabf_k}$. Hence we get for any $x>0$
\begin{align*}
    &\mathbb{P}\left( \sum_{t=1}^{T} \langle  \mathbf{z}_{t,n} ,\Sigmabf_{k(t)} \bu\rangle^2 - \langle  \Sigmabf_{k(t)} \bu,\bu \rangle \geq  C \max_{k\in [K]}\{\opnorm{\Sigmabf_k} \} \left(\sqrt{T \, x} +  x\right)\vert \Fbar_{n-1} \right) \leq e^{-x} .
% \mathbf{z}_{t,n}^\top \Sigma_{k}\mathbf{z}_{t,n} \leq \mathbb{E}\left[  \mathbf{z}_{t,n}^\top \Sigma_{k}\mathbf{z}_{t,n}   \big \vert \Fbar_{n-1} \right]  + C\left(\sqrt{\|\Sigma_k\|_{\rm F}^2 x} + \opnorm{\Sigma_k}\ x\right)   \big\vert \Fbar_{n-1} \right) \geq 1 - e^{-x}.
\end{align*}
We define now the event 
$$
\Omega_n = \bigcap_{\bu\in \mathcal{N}_\epsilon} \left\lbrace  \left|  \langle  \mathbf{M}_n \bv,\bu \rangle  \right| \leq C \max_{k\in [K]}\{\opnorm{\Sigmabf_k} \} \left(\sqrt{T \, x} +  x\right)  \right\rbrace.
$$
Set $\epsilon=1/4$. A simple union bound combining \eqref{eq:Nepscardbis} with the last two displays gives 
%and \eqref{eq:boundMn}
$$
\mathbb{P}\left(\Omega_n^c \right) \leq |\mathcal{N}_\epsilon|\, e^{-t'}\leq 2 e^{d\log(9) -x}.
$$
Now we set $x = \log(4\delta^{-1}) + d \log(9)$ for some $\delta \in (0,1)$. Consequently, we obtain that
$$
\mathbb{P}\left(\Omega_n \right)\geq 1-\delta/4.
$$
It follows, in view of \eqref{eq:unifboundquadbis}, with probability at least $1-\delta$
\begin{align}
\label{eq:Mnop}
\opnorm{\mathbf{M}_n}  \leq C'\max_{k\in [K]}\{\opnorm{\Sigmabf_k} \} \left(\sqrt{T \, (d+ \log(4\delta^{-1}) )} + d+ \log(4\delta^{-1})  \right),
%\left( c_{\eta} C_{\x} \,(\log(2\delta^{-1}) + (T+d) \log(9))  + \sigma\, \max_{1\leq k \leq K } \opnorm{\Sigma_{k}}^{1/2},\sqrt{ n \,  (\log(2\delta^{-1}) + (T+d) \log(9))} \right).
\end{align}
for some numerical constant $C'>0$ that can depend only on $C_{\mathbf{z}}$.

We note that
$$
\opnorm{\sum_{t=1}^T \mathbb{E}\left[\x_{t,n} \otimes \x_{t,n} \vert \Fbar_{n-1} \right]} \leq T \max_{k}\{ \opnorm{\Sigmabf_k} \},
$$
since $\sum_{t=1}^T \mathbb{E}\left[\x_{t,n} \otimes \x_{t,n} \vert \Fbar_{n-1} \right] = \sum_{t=1}^T \Sigmabf_{k(t)}$, where $k(t)\in [K]$. Combining this observation with \eqref{eq:Mnop}, we prove that the following event
%Define now the event
\begin{align}
\label{eq:omegan1}
    \Omega_n =  \left\lbrace \opnorm{\sum_{t=1}^T \x_{t,n} \otimes \x_{t,n}} \leq \max_{k\in [K]}\{\opnorm{\Sigmabf_k} \}\left( T + C' \left(\sqrt{T \, (d+ \log(4N\delta^{-1}) )} + d+ \log(4N\delta^{-1})  \right) \right) \right\rbrace
\end{align}
satisfies $\mathbb{P}(\Omega_n )\geq 1-\delta/(4N)$.
%for some large enough constant $C>0$ and for any $n\in [N]$
Next we also introduce the event
\begin{align}
\label{eq:omegan1bis}
\overline{\Omega}_n = \bigcap_{l=0}^n \Omega_l \in \Fbar_{n},
\end{align}
with $\Omega_0$ being the whole sample space. The Bayes rule 
%combined with \eqref{eq:Mnop} 
gives
\begin{align*}
%\label{eq:omegaNunionbound}
    \mathbb{P}\left(   \overline{\Omega}_N \right) = \prod_{n=1}^N    \mathbb{P}\left( {\Omega}_n \big\vert \bigcap_{k=0}^{n-1} {\Omega}_k  \right) \geq \left( 1-\frac{\delta}{4N} \right)^{N}.
\end{align*}
Bernoulli's inequality ($(1+x)^n\geq 1+nx$ for any $x>-1$ and integer $n\geq 1$) gives 
\begin{align}
\label{eq:OmegabarNbis}
    \mathbb{P}\left(   \overline{\Omega}_N \right) = \prod_{n=1}^N    \mathbb{P}\left({\Omega}_n \big\vert \bigcap_{k=0}^{n-1} {\Omega}_k  \right) \geq 1-\frac{\delta}{4}.
\end{align}

Define for any $n\in [N]$ the events
\begin{align}
\label{eq:omegan2} 
\Omega_n' = \bigcap_{t=1}^{T}\left\lbrace |\eta_{t,n}|\leq c_\eta  \sqrt{\log(8TN\delta^{-1})} \right\rbrace,
\end{align}
and
\begin{align}
\label{eq:omegan2bis} 
\overline{\Omega}_n' = \bigcap_{l=0}^n \Omega_l',
\end{align}
with $\Omega'_0$ being the whole sample space. 

Assumption \ref{Ass:subGaussnoise} guarantees that for any $\delta\in (0,1)$, with probability at least $1-\delta$
\begin{align}
%\label{eq:omeganinterm1}
    \mathbb{P}\left(   \Omega_n'     \big\vert \Fbar_{n-1}  \right) \geq 1-\frac{\delta}{4N}.
\end{align}
and
\begin{align}
\label{eq:omeganinterm1bis}
    \mathbb{P}\left(  \overline{\Omega}_N'     \big\vert \Fbar_{N-1}  \right) \geq 1-\frac{\delta}{4}.
\end{align}

\subsection{Main proof}

\paragraph{Checking the martingale structure.} Let us define the stochastic process $(\Dbf_n)_{n\geq 0}$ as $\Dbf_0=0$ a.s. and 
\begin{align}
    \Dbf_n:=\sum_{t=1}^T \sum_{i=1}^n  \eta_{t,i}\, \x_{t,i} \e^\top_{t} \in \mathbb{R}^{d\times T}.
\end{align}
By definition of the Trace-Norm bandit (Algorithm \ref{Alg:ASNucRegBan}), given the past history $\Fbar_{n-1}$, we select at round $n$ the arms for the T tasks: $\x_{t,n}\in \mathcal{D}_{t,n}$ with $\mathcal{D}_{t,n}$ independent of $(\eta_{t,n})_{1\leq t \leq T}$. This means that
$$
(\x_{t,n})_{1\leq t \leq T} \indep (\eta_{t,n})_{1\leq t \leq T} \bigg\vert \Fbar_{n-1}.
$$
%the sequence of actions $\{\x_{t,n}\}_{n=1}^N$ is predictable w.r.t. the filtration $\{\Fbar_{n}\}_{n\geq 0}$, i.e., for any $ n \geq 1$, $\x_{t,n}$ is $\Fbar_{n-1}$-measurable. 
Consequently, under Assumption \ref{Ass:BoundedNorms} and \ref{Ass:subGaussnoise}, $(\Dbf_n)_{n\geq 0}$ is a square-root integrable martingale adapted to the filtration $\{\Fbar_{n}\}_{n\geq 0}$.

We would like to apply the Freedman inequality for matrix martingales \citep[][Corollary 1.3 ]{tropp2011freedman}. However this result is for bounded martingales. Therefore it is not directly applicable to $(\Dbf_n)_{n\geq 0}$. To remedy this difficulty. We introduce the following stopping times:
\begin{align}
\tau_1 &= \inf\left\lbrace n\geq 0\,:\,  
%\min_{t\in [T]}
\opnorm{\sum_{t=1}^T \x_{t,n} \otimes \x_{t,n}} \geq \max_{k\in [K]}\{\opnorm{\Sigmabf_k} \}\left( T + C' \left(\sqrt{T \, (d+ \log(4N\delta^{-1}) )} + d+ \log(4N\delta^{-1})  \right) \right)
%\max_{k\in [K]}\left[ \mathrm{trace}(\Sigmabf_k)  + C\left(\sqrt{\|\Sigmabf_k\|_{\rm F}^2 (\log(4TK\ntrain\delta^{-1}))} + \opnorm{\Sigmabf_k}\ (\log(4TK\ntrain\delta^{-1})\right)\right]
\right\rbrace,
\end{align}
and
\begin{align}
\tau_2 = \inf\left\lbrace n\geq 0\,:\, \min_{t\in [T]} \left\lbrace  |\eta_{t,n}|  \right\rbrace \geq c_\eta \sigma \sqrt{\log(8TN\delta^{-1}))}\right\rbrace.
\end{align}
%for some large enough absolute constant $C>0$.

Again by definition of the Trace-Norm bandit  and assumptions on the noise, $\tau_1$ and $\tau_2$ are both stopping times relative to the filtration $\{\Fbar_{n}\}_{n\geq 0}$, so is $\tau=\tau_1\wedge \tau_2$. Hence the stopped process $\{\Dbf^{\tau}_n\}_{n\geq 0}$ defined as $\Dbf^{\tau}_n = \Dbf_{n\wedge \tau}$ is also a martingale adapted to the filtration $\{\Fbar_{n}\}_{n\geq 0}$. Furthermore $\{\Dbf^{\tau}_n\}_{n\geq 0}$ is a bounded martingale. Hence we can apply the Freedman inequality to it.

By definition of $\tau_1$ and \eqref{eq:omegan1bis}, we have $\{\tau_1 > N\}= \overline{\Omega}_N$. Similarly for $\tau_2$ in view of \eqref{eq:omegan2bis}, we have $\{\tau_2 > N\}= \overline{\Omega}'_N$. Hence
\begin{align}
\label{eq:intermbis2}
\mathbb{P}\left( \opnorm{\Dbf_n} \geq t  \right) &= \mathbb{P}\left( \{ \Dbf_n \geq t \} \cap \{\tau > N\}  \right) + \mathbb{P}\left( \{ \opnorm{\Dbf_n} \geq t \} \cap \{\tau \leq N\}  \right) \notag\\
&\leq \mathbb{P}\left(  \opnorm{\Dbf^{\tau}_n} \geq t  \right) + \mathbb{P}\left( \{\tau_1 \leq N  \} \cup \{\tau_2 \leq N  \} \right) \notag\\
&= \mathbb{P}\left(  \opnorm{\Dbf^{\tau}_n} \geq t  \right) + \mathbb{P}\left( \tau_1 \leq N  \right) +  \mathbb{P}\left(  \tau_2 \leq N  \right)\notag\\
&\leq \mathbb{P}\left(  \opnorm{\Dbf^{\tau}_n} \geq t  \right) +\delta/2,
\end{align}
where we have used \eqref{eq:OmegabarNbis} and \eqref{eq:omeganinterm1bis} in the last line.

\paragraph{Application of Freedman's inequality.} We now use \citep[][Corollary 1.3]{tropp2011freedman} to control $\{\Dbf^{\tau}_n\}_{n\geq 0}$.

%We will apply the Freedman inequality for matrix martingales.
\begin{theorem}[Corollary 1.3 in \cite{tropp2011freedman}]
\label{thm:freedman}
Consider a matrix martingale $\{ \bm{Y}_n : n = 0, 1, 2, \dots \}$ whose values are matrices with dimension $d_1 \times d_2$, and let $\{ \bm{X}_k : n = 1, 2, 3, \dots \}$ be the difference sequence.  Assume that the difference sequence is uniformly bounded:
$$
\opnorm{ \bm{X}_n } \leq R
\quad\text{almost surely}
\quad\text{for $n = 1, 2, 3, \dots$}.
$$
Define two predictable quadratic variation processes for this martingale:
\begin{align*}
\bm{W}_{{\rm col}, \, k} &:= \sum\nolimits_{j=1}^n  
\Expect_{j-1} \big(\bm{X}_j \bm{X}_j^\adj \big) \quad\text{and} \\
\bm{W}_{{\rm row}, \, k} &:= \sum\nolimits_{j=1}^n
\Expect_{j-1} \big(\bm{X}_j^\adj \bm{X}_j \big)
\quad\text{for $n = 1, 2, 3, \dots$}.
\end{align*}
Then, for all $t \geq 0$ and $\overline{\sigma}^2 > 0$,
$$
\Pro\left(\exists n \geq 0 : \norm{ \bm{Y}_n } \geq t  \text{ and }\ 
	\max\{ \opnorm{ \bm{W}_{{\rm col}, \, n} }, \opnorm{\bm{W}_{{\rm row}, \, n}} \} \leq \overline{\sigma}^2 \right)
	\leq (d_1 + d_2) \cdot \exp \left\{ - \frac{ -t^2/2 }{\overline{\sigma}^2 + Rt/3} \right\}.
$$
\end{theorem}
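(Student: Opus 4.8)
The statement is the matrix Freedman inequality for a rectangular martingale, and the plan is to (i) reduce it to a self-adjoint martingale of dimension $d_1+d_2$ via Hermitian dilation, and then (ii) prove the self-adjoint version by the matrix Laplace-transform method: build an explicit supermartingale out of the trace exponential, and close with a stopping-time argument followed by an optimization over a free parameter $\theta>0$. The dilation is what converts the two quadratic variations $\bm{W}_{\mathrm{col}}$ and $\bm{W}_{\mathrm{row}}$ into a single predictable variation and what produces the dimensional prefactor $d_1+d_2$.

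\textbf{Reduction via Hermitian dilation.} For a $d_1\times d_2$ matrix $\bm{M}$ define the self-adjoint dilation
$$
\mathcal{H}(\bm{M})=\begin{pmatrix}\0 & \bm{M}\\ \bm{M}^\adj & \0\end{pmatrix}\in\reals^{(d_1+d_2)\times(d_1+d_2)}.
$$
It is linear and satisfies $\lambdamax(\mathcal{H}(\bm{M}))=\opnorm{\bm{M}}$ as well as $\mathcal{H}(\bm{M})^2=\mathrm{diag}(\bm{M}\bm{M}^\adj,\bm{M}^\adj\bm{M})$. Applying it term by term, $\{\mathcal{H}(\bm{Y}_n)\}$ is a self-adjoint matrix martingale with increments $\mathcal{H}(\bm{X}_n)$ obeying $\lambdamax(\mathcal{H}(\bm{X}_n))\le\opnorm{\bm{X}_n}\le R$, and with predictable quadratic variation $\sum_{j\le n}\Expect_{j-1}\mathcal{H}(\bm{X}_j)^2=\mathrm{diag}(\bm{W}_{\mathrm{col},n},\bm{W}_{\mathrm{row},n})$, whose top eigenvalue equals $\max\{\opnorm{\bm{W}_{\mathrm{col},n}},\opnorm{\bm{W}_{\mathrm{row},n}}\}$. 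Since $\norm{\bm{Y}_n}=\opnorm{\bm{Y}_n}=\lambdamax(\mathcal{H}(\bm{Y}_n))$, the entire claim reduces to a self-adjoint tail bound in dimension $d=d_1+d_2$ (note that only the upper eigenvalue bound on the increments is needed, matching the one-sided hypothesis).

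\textbf{Conditional mgf bound and the supermartingale.} Relabel $\bm{Z}_n=\mathcal{H}(\bm{Y}_n)$, let $\bm{X}_j$ denote its increments, and put $\bm{V}_n=\sum_{j\le n}\Expect_{j-1}\bm{X}_j^2$. Fix $\theta>0$ and set $g(\theta)=(e^{\theta R}-\theta R-1)/R^2$. The scalar estimate $e^{\theta x}\le 1+\theta x+g(\theta)x^2$ (valid for $x\le R$) transfers through the spectral calculus, and since $\Expect_{j-1}\bm{X}_j=\0$ we obtain $\Expect_{j-1}e^{\theta\bm{X}_j}\preceq\I+g(\theta)\Expect_{j-1}\bm{X}_j^2\preceq\exp(g(\theta)\Expect_{j-1}\bm{X}_j^2)$. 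Define $S_n=\tr\exp(\theta\bm{Z}_n-g(\theta)\bm{V}_n)$. Because $\bm{V}_n$ is predictable, Lieb's concavity theorem (the map $\bm{A}\mapsto\tr\exp(\bm{H}+\log\bm{A})$ is concave) with conditional Jensen gives $\Expect_{j-1}\tr\exp(\bm{H}+\theta\bm{X}_j)\le\tr\exp(\bm{H}+\log\Expect_{j-1}e^{\theta\bm{X}_j})$; choosing $\bm{H}=\theta\bm{Z}_{j-1}-g(\theta)\bm{V}_j$, inserting the mgf bound and using monotonicity of $\tr\exp$ in the semidefinite order yields $\Expect_{j-1}S_j\le S_{j-1}$. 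Thus $\{S_n\}$ is a nonnegative supermartingale with $S_0=\tr\exp(\0)=d_1+d_2$.

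\textbf{Stopping time, optimization, and the main obstacle.} To control the event over all $n$, let $\tau$ be the first time both $\lambdamax(\bm{Z}_n)\ge t$ and $\lambdamax(\bm{V}_n)\le\overline{\sigma}^2$ hold. On that event $\bm{V}_\tau\preceq\overline{\sigma}^2\I$, so $\theta\bm{Z}_\tau-g(\theta)\bm{V}_\tau\succeq\theta\bm{Z}_\tau-g(\theta)\overline{\sigma}^2\I$ and hence $S_\tau\ge\exp(\theta\,\lambdamax(\bm{Z}_\tau)-g(\theta)\overline{\sigma}^2)\ge\exp(\theta t-g(\theta)\overline{\sigma}^2)$. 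Optional stopping gives $\Expect S_{\tau\wedge n}\le d_1+d_2$; letting $n\to\infty$ (Fatou) and applying Markov's inequality bounds the probability of the event by $(d_1+d_2)\exp(-\theta t+g(\theta)\overline{\sigma}^2)$, and optimizing over $\theta$ with the standard estimate $g(\theta)\le\tfrac{\theta^2/2}{1-\theta R/3}$ for $0<\theta R<3$ produces the Bernstein exponent and the bound $(d_1+d_2)\exp\!\big(-\tfrac{t^2/2}{\overline{\sigma}^2+Rt/3}\big)$. The deepest ingredient, and the main obstacle, is the supermartingale step, which hinges on Lieb's concavity theorem for the trace exponential; a secondary subtlety is making the uniform-in-$n$ tail event rigorous, where the stopping-time construction plus optional stopping (rather than a naive union over $n$) is what is essential.
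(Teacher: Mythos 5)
This statement is not proved in the paper at all: it is imported verbatim as Corollary 1.3 of \citet{tropp2011freedman}, and your sketch correctly reconstructs the proof given in that cited source — Hermitian dilation to collapse the rectangular case (and the two variation processes) into a single self-adjoint martingale of dimension $d_1+d_2$, the Lieb-concavity trace-exponential supermartingale, and a stopping-time/optional-stopping argument followed by optimization of $\theta\mapsto -\theta t + g(\theta)\overline{\sigma}^2$ to get the Bernstein exponent. Incidentally, your final bound $(d_1+d_2)\exp\bigl(-\tfrac{t^2/2}{\overline{\sigma}^2+Rt/3}\bigr)$ is the correct one; the paper's displayed statement carries a sign typo (the double negative $-(-t^2/2)$ in the exponent would make the right-hand side exceed one and the bound vacuous).
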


\medskip
\noindent
We now check the conditions of Theorem \ref{thm:freedman}. In view of \eqref{eq:omegan1}-\eqref{eq:omegan1bis} and \eqref{eq:omegan2}-\eqref{eq:omegan2bis}, we have on the event $\{\tau>N\}= \{\tau_1>N\}\cap \{\tau_2>N\}$, for any $n\in [N]$,
\begin{align*}
    \opnorm{\sum_{t=1}^T\eta_{t,n} \x_{t,n} \otimes \e_t }&= \sqrt{\opnorm{\left(\sum_{t=1}^T\eta_{t,n} \x_{t,n} \otimes \e_t \right) \left(\sum_{t=1}^T\eta_{t,n} \e_t \otimes \x_{t,n}\right) } } \\
    &= \sqrt{\opnorm{\sum_{t=1}^T\eta_{t,n}^2 \x_{t,n} \otimes \x_{t,n} } } \leq \max_{t\in [T]}\{|\eta_{t,n}|\}\sqrt{\opnorm{\sum_{t=1}^T \x_{t,n} \otimes \x_{t,n}} } \\
     &\leq c_{\eta}\sqrt{\log(8TN\delta^{-1})} \sqrt{\max_{k\in [K]}\{\opnorm{\Sigmabf_k} \}\left( T + C' \left(\sqrt{T \, (d+ \log(4N\delta^{-1}) )} + d+ \log(4N\delta^{-1})  \right) \right) }
     %\left( \sqrt{T} \bigvee \sqrt{ (\log(4\ntrain\delta^{-1}))}  \right) C^{1/2}_{TK\ntrain}(\delta)\\
    % &\leq C \left( \sqrt{T} \bigvee \sqrt{ \log(4\ntrain\delta^{-1})}  \right) \max_{k\in [K]}\left\lbrace \opnorm{\Sigmabf_k}^{1/2} \right\rbrace \left( \sqrt{d}+ \sqrt{ \log(4TK\ntrain \delta^{-1})}\right)  
    \\
    &\leq c_{\eta}\sqrt{1+C'/2}\sqrt{\log(8TN\delta^{-1})} \sqrt{\max_{k\in [K]}\{\opnorm{\Sigmabf_k} \}\left( T  + d+ \log(4N\delta^{-1})  \right) }.
\end{align*}

\noindent
Next we have
\begin{align*}
\bm{W}_{{\rm col}}&=\sum_{s=1}^n \mathbb{E}\left[   \sum_{t=1}^{T}\sum_{s=1}^{n} \eta_{t,s}^2 \x_{t,s} \otimes   \x_{t,s}  \vert \Fbar_{s-1} \right] = \sigma^2 \sum_{t=1}^{T}\sum_{s=1}^n \Sigmabf_{k(t,s)},
\end{align*}
where $k(t,s)\in [K]$ for any $t,s$. Hence
$$
\opnorm{\bm{W}_{{\rm col}}}\leq Tn\, \sigma^2 \max_{k\in [K]}\left\lbrace \opnorm{\Sigmabf_k}\right\rbrace.
$$
We proceed similarly for $\bm{W}_{{\rm row}}$
\begin{align*}
\bm{W}_{{\rm row}}&=\sum_{s=1}^n \mathbb{E}\left[   \sum_{t=1}^{T}\sum_{s=1}^{n} \eta_{t,s}^2 \norm{\x_{t,s}}^2 \e_t\otimes \e_t  \vert \Fbar_{s-1} \right] \leq n \sigma^2 \max_{k\in [K]}\left\lbrace \mathrm{tr}(\Sigmabf_k)\right\rbrace I_T.
\end{align*}

\medskip
\noindent
Hence, we get
\begin{align*}
\opnorm{\bm{W}_{{\rm row}}}& \leq n \sigma^2 \max_{k\in [K]}\left\lbrace \mathrm{tr}(\Sigma_k)\right\rbrace\leq nd\, \sigma^2  \, \max_{k\in [K]}\left\lbrace \opnorm{\Sigmabf_k}\right\rbrace .
\end{align*}

\medskip
\noindent Define the event

\begin{align*}
&A_{\ntrain} = 
\left\lbrace 
\frac{ \opnorm{\Dbf_{\ntrain}}  }
{\ntrain}\leq \max_{k\in [K]}\{\opnorm{\Sigmabf_k}^{1/2} \}
\left(\sigma \sqrt{ \frac{(d+T) \, (\log(2\delta^{-1}N(d+T)))}{\ntrain}
} 
\right.
\right.
\\
&
\left.\hspace{2cm} 
\bigvee  
\frac{ c_{\eta}\sqrt{1+C'/2}\sqrt{\log(8TN\delta^{-1})} \sqrt{\left( T  + d+ \log(4N\delta^{-1})  \right) } \,(\log(2\delta^{-1}N(d+T)))}
{\ntrain}   
\right\rbrace.
\end{align*}
%for some large enough constant $C = C(\sigma,c_{\eta},C_\mathbf{z},\max_{k\in [K]}\left\lbrace \opnorm{\Sigma_k}^{1/2} \right\rbrace  )>0$.

Applying Theorem \ref{thm:freedman}, we get for any $t>0$
\begin{align}
   \mathbb{P}\left(  A_{\ntrain}^c \vert \bar{\Omega}_{\ntrain}
   \right)\leq \frac{\delta}{2N}.
\end{align}
%for some constant $C = C(\sigma,\max_{k\in [K]}\left\lbrace \opnorm{\Sigma_k}^{1/2} \right\rbrace,  )>0$.
%=C(\sigma,c_{\eta})

\medskip
\noindent
An elementary argument combining the previous display with \eqref{eq:intermbis2} gives
$$
\mathbb{P}\left(   A_{\ntrain}^c \right) = \mathbb{P}\left(   A_{\ntrain}^c \cap \bar{\Omega}_{\ntrain} \right)  + \mathbb{P}\left(   A_{\ntrain}^c \cap \bar{\Omega}_{\ntrain}^c \right)\leq \mathbb{P}\left(   A_{\ntrain}^c \vert \bar{\Omega}_{\ntrain} \right)  + \mathbb{P}\left(   \bar{\Omega}_{\ntrain}^c \right) \leq \frac{\delta}{N}.
$$

\noindent From the previous display and an union bound, we immediately deduce, with probability at least $1-\delta$,
\begin{align}
\label{eq:stoproofthm1}
&\max_{1\leq n \leq N} \frac{\opnorm{\Dbf_n}}{n}  \leq  \max_{k\in [K]}\{\opnorm{\Sigmabf_k}^{1/2} \}
\left(\sigma \sqrt{ \frac{(d+T) \, (\log(2\delta^{-1}N(d+T)))}{\ntrain}
} 
\right.
\notag\\
&
\left.\hspace{2cm} 
\bigvee  
\frac{ c_{\eta}\sqrt{1+C'/2}\sqrt{\left( T  + d+ \log(4N\delta^{-1})  \right) } \,(\log^{3/2}(8\delta^{-1}N(d+T)))}
{\ntrain} \right) .
\end{align}

\section{PROOF OF PROPOSITION \ref{Le:FirstN0}}
\label{AppSec:Prop2}

\subsection{Concentration bounds on the arms covariance.
%Proof of Lemma \ref{Le:MatrixConcentration}
}

For every $t \in [T]$, we recall that the empirical covariance matrix for task $t$ as
\begin{equation}
\label{eq:S-222-bis}
\Sigmahat_{t,n}=\frac{1}{n}\sum_{i=1}^n\x_{t,i} \x_{t,i}^\top
\end{equation}
and the corresponding adapted covariance matrix (w.r.t. the filtration $(\Fbar_{n})_{n\geq 0}$) as
\begin{equation}
\label{eq:S-333}
    \Sigmabf_{t,n}{=}\frac{1}{n}\sum_{i=1}^n\E\left[\x_{t,i} \x_{t,i}^\top \Big|\Fbar_{i-1}\right].
\end{equation} 
Here \textit{adapted} means that the covariance matrix $\Sigmabf_{t,n}$ is fully known at time $n$ since it depends only on the past history $\Fbar_{n-1}$.

Moreover, we use the notation $\Sigmabar,\Sigmabarhat_n,\Sigmabar_n \in\reals^{dT\times dT}$ for the theoretical, the empirical and the adapted multi-task matrices, respectively. They are all block diagonal and composed by the corresponding $T$ single task $d\times d$ matrices on the diagonal (e.g. $\Sigmabar_n= {\rm diag}(\Sigmabar_{1,n},\dots,\Sigmabar_{T,n})$). 

The following lemma gives a high probability bound on the deviation of 
%shows an entry-wise uniform deviation inequality of 
$\Sigmabarhat_n$ from $\Sigmabar_n$ according to the operator norm restricted to the cone $ \mathcal{C}_r$ defined in \eqref{eq:coneCr}. We define for any symmetric $dT\times dT$ matrix $A$
$$
\norm{A}_{\rm{op},\mathcal{C}_r} = \max_{\Deltabf\in \mathcal{C}_r\,:\, \norm{\Deltabf}_F=1
} \left\lbrace \langle A\rm{Vec}(\Deltabf),\rm{Vec}(\Deltabf)\rangle \right\rbrace.
$$

\begin{lemma}[Concentration]\label{Le:MatrixConcentration}
Let Assumption \ref{Ass:BoundedNorms} be satisfied. For any $n\geq 1$ and $\delta\in (0,1)$, with probability at least $1- \delta$, for any $n\in [N]$
\begin{align}
\label{eq:opnormN0}
% \opnorm{\Sigmabarhat_n - \Sigmabar_{n}}
\norm{\Sigmabarhat_n - \Sigmabar_{n}}_{\rm{op},\mathcal{C}_r}
&\leq  C  \max_{1\leq k \leq K }\left\lbrace\opnorm{\Sigmabf_{k}}\right\rbrace  r  \left( \frac{(1+\log(4NT\delta^{-1}))\log^2(d)  }{\sqrt{n}} + \frac{(1+\log(4NT\delta^{-1}))^2\log^2(d)}{n}\right).
\end{align}
for some constant $C = C(C_{\mathbf{z}})>0$.
\end{lemma}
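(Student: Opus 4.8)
The plan is to bound the cone-restricted quadratic form
$\sup_{\Deltabf\in\Ccal(r),\,\norm{\Deltabf}_{\rm F}=1}\langle(\Sigmabarhat_n-\Sigmabar_n)\rm{Vec}(\Deltabf),\rm{Vec}(\Deltabf)\rangle$
in three conceptual stages: a geometric reduction using the cone, a martingale concentration for fixed directions, and a passage to the supremum with only logarithmic dependence on the ambient dimensions. First I would exploit the geometry of $\Ccal(r)$ exactly as in the proof of Lemma \ref{Le:OracleInequality} (via Lemma 1 of \citealp{negahban2011estimation}): any $\Deltabf\in\Ccal(r)$ decomposes as $\Deltabf'+\Deltabf''$ with $\rho(\Deltabf')\le 2r$ and $\nucnorm{\Deltabf}\le 4\nucnorm{\Deltabf'}\le 4\sqrt{2r}\norm{\Deltabf'}_{\rm F}$, so that on the Frobenius unit sphere the cone is contained in the slab $\{\nucnorm{\Deltabf}\le 4\sqrt{2r}\}$. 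This is the only place the rank enters and it is what produces the prefactor $r$: writing $\Deltabf=\sum_j\sigma_j\mathbf a_j\mathbf b_j^\top$ in its SVD, the quadratic form expands as $\sum_{j,j'}\sigma_j\sigma_{j'}Q(\mathbf a_j,\mathbf b_j,\mathbf a_{j'},\mathbf b_{j'})$, and $\big(\sum_j\sigma_j\big)^2=\nucnorm{\Deltabf}^2\le 32r$ converts a uniform bound on the bilinear pieces $Q$ into the claimed linear-in-$r$ factor.

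Since $\Sigmabarhat_n-\Sigmabar_n$ is block diagonal, the quadratic form equals $\frac1n\sum_{i=1}^n\sum_{t=1}^T\big((\x_{t,i}^\top[\Deltabf]_t)^2-\E[(\x_{t,i}^\top[\Deltabf]_t)^2\mid\Fbar_{i-1}]\big)$, a sum of martingale increments (by Assumption \ref{Ass:BoundedNorms} and the construction of Algorithm \ref{Alg:ASNucRegBan} the tasks are conditionally independent given $\Fbar_{i-1}$). Each increment is a centered square of a sub-Gaussian linear statistic, hence sub-exponential. For fixed directions I would control the resulting scalar martingale by a Bernstein/Freedman inequality, yielding the two regimes $\sqrt{\cdot/n}$ and $\cdot/n$ with variance proxy and tail scale both controlled by $\max_k\opnorm{\Sigmabf_k}$. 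To make the sub-exponential increments usable I would run the same stopping-time/truncation device as in the proof of Proposition \ref{prop:subgaussian2}, restricting to a high-probability event on which $\max_{t,i}\norm{\x_{t,i}}$ is at most polylogarithmic; this is where one factor of $\log d$ originates.

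The crucial step is passing to the supremum over all directions without paying a polynomial $\sqrt{d}$ or $\sqrt{T}$, which a naive $\varepsilon$-net over $\Sb^{d}\times\Sb^{T}$ would incur (cost $e^{c(d+T)}$) and which would destroy the bound. Instead, exploiting the block structure, after fixing the $T$-side singular vectors $\mathbf b_j,\mathbf b_{j'}$ the remaining supremum over $\mathbf a_j,\mathbf a_{j'}\in\Sb^{d}$ equals the operator norm of the $d\times d$ matrix martingale $\frac1n\sum_i\sum_t b_{j,t}b_{j',t}(\x_{t,i}\x_{t,i}^\top-\E[\x_{t,i}\x_{t,i}^\top\mid\Fbar_{i-1}])$, which I would control with the matrix Freedman inequality (Theorem \ref{thm:freedman}); its dimension dependence is only $\log(d+T)$, giving the second factor of $\log^2 d$ rather than a power of $d$. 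Combined with the nuclear-norm slab from the first stage and a final union bound over $n\in[N]$ (contributing the $\log(4NT\delta^{-1})$ terms) this yields the stated estimate.

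I expect the main obstacle to be precisely this last step: obtaining logarithmic, rather than polynomial, dependence on $d$ and $T$ while simultaneously keeping the sharp linear-in-$r$ constant. This is the ``novel geometric analysis of the cone'' alluded to in the paper; one must combine the nuclear-norm control of $\Ccal(r)$ with matrix (rather than scalar) concentration, carefully interleaving the stopping-time truncation of the unbounded sub-Gaussian arms so that the sub-exponential increments satisfy the uniform boundedness required by Freedman's inequality.
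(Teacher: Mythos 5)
Your first two stages match the paper: the cone geometry $\nucnorm{\Deltabf}\le 4\sqrt{2r}\,\norm{\Deltabf}_{\rm F}$ is exactly how the factor $r$ enters the paper's argument, and the martingale-plus-stopping-time treatment of the sub-exponential increments is the same device used in Proposition \ref{prop:subgaussian2} and inside Lemma \ref{lem:techlemma1}. The gap is in your third stage, and it is twofold. First, even for \emph{fixed} right singular vectors $\mathbf b_j,\mathbf b_{j'}$, controlling the supremum over $\mathbf a,\mathbf a'\in\Sb^{d}$ by the operator norm of the $d\times d$ martingale $\sum_{i,t}b_{j,t}b_{j',t}\bigl(\x_{t,i}\x_{t,i}^\top-\E[\x_{t,i}\x_{t,i}^\top\mid\Fbar_{i-1}]\bigr)$ via matrix Freedman does not give a polylogarithmic bound: the dimension enters Theorem \ref{thm:freedman} not only through the harmless $\log(d_1+d_2)$ prefactor but through the variance proxy and the increment bound $R$, which for a rank-one covariance increment scale like $\norm{\x_{t,i}}^2\asymp \mathrm{tr}(\Sigmabf_k)\asymp d\,\opnorm{\Sigmabf_k}$. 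This yields a deviation of order $\sqrt{d/n}$, not $\log^2(d)/\sqrt n$. Second, the right singular vectors $\mathbf b_j$ are data-dependent and range over $\Sb^{T}$, so your ``uniform bound on the bilinear pieces $Q$'' must hold uniformly over that sphere as well; a net there has cardinality $e^{cT}$ and reintroduces a $\sqrt{T/n}$ term. Neither cost is admissible in \eqref{eq:opnormN0}.

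The paper removes both obstructions simultaneously with a multiscale (dyadic peeling) decomposition that your plan lacks. Using $\sigma_j(\Deltabf)\le 4\sqrt{2r}/j$, each column is split as $\Deltabf_t=\sum_m\Deltabf_{t,m}$ with $\Deltabf_{t,m}$ supported on the span of the left singular vectors indexed by $J_m=\{j:2^{m-1}\le j<2^m\}$, a subspace of dimension at most $2^{m-1}$, and with $\sum_t\norm{\Deltabf_{t,m}}^2\le 32r/2^{m-1}$. Lemma \ref{lem:techlemma1} is then applied to the \emph{projected} martingale $P_{\mathbf U}\mathbf M_{t,n}P_{\mathbf V}$, whose variance proxy and increments scale with the subspace dimension (because $\norm{P_{\mathbf U}\x}^2\asymp d_{\mathbf U}\opnorm{\Sigmabf_k}$ by Hanson--Wright) rather than with $d$; the resulting $\sqrt{2^{m\vee m'-1}}$ is exactly cancelled by the geometric decay $2^{-(m-1)/2}2^{-(m'-1)/2}$ of the column masses, leaving only the $m_*^2\lesssim\log^2 d$ count of block pairs. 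The supremum over the $T$-side never appears as a sphere to be netted: the task index is summed out by Cauchy--Schwarz against the deterministic bound on $\sum_t\norm{\Deltabf_{t,m}}\norm{\Deltabf_{t,m'}}$. You would need to replace your ``fix the $\mathbf b$'s, take the full $d\times d$ operator norm'' step by a decomposition of this kind for the argument to close with the stated $r\log^2(d)$ dependence.
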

\begin{proof}

For any $n\in [N]$, we define the $dT\times dT$ block-diagonal matrix 
$$
\mathbf{M}_{n}:=\mathrm{diag}(\mathbf{M}_{1,n},\ldots,\mathbf{M}_{T,n}),
$$
where the diagonal $d\times d$ diagonal block matrices are defined as follows:
\begin{align}
 \mathbf{M}_{t,n}:=  n \left(\Sigmahat_{t,n} - \Sigmabf_{t,n}\right) =  \sum_{s=1}^n  \x_{t,s} \x^\top_{t,s} - \E\left[\x_{t,s} \x^\top_{t,s} \Big|\Fbar_{s-1}\right].
\end{align}
We also set $\mathbf{M}_{0} = \mathbf{0}_{dT\times dT}$ a.s.
By construction $(\mathbf{M}_{n})_{n\geq 0}$ is a $\{\Fbar_n\}_{n}$-martingale satisfying
%Since $(\mathbf{M}_{n})_{n\geq 0}$ admits a block-diagonal structure, we have
\begin{align}
\label{eq:MnopCr}
    \norm{\mathbf{M}_{n}}_{\rm{op},\mathcal{C}_r} = \max_{\Deltabf\in \mathcal{C}_r\,:\, \norm{\Deltabf}_F=1
} \left\lbrace \langle \mathbf{M}_{n}\rm{Vec}(\Deltabf),\rm{Vec}(\Deltabf)\rangle \right\rbrace = \max_{\Deltabf\in \mathcal{C}_r\,:\, \norm{\Deltabf}_F=1
}\left\lbrace  \sum_{t=1}^{T} \langle \mathbf{M}_{t,n}\Deltabf_t,\Deltabf_t\rangle\right\rbrace.
\end{align}
We derive several geometric properties of matrices in the cone $\mathcal{C}_r$. We then exploit these properties via an improved union bound argument to derive a sharper uniform deviation bound of the martingale $(\mathbf{M}_{n})_{n\geq 0}$ on the cone $\mathcal{C}_r$ as compared to the standard operator norm bound %obtained via an union bound 
on the whole space of $d\times T$ matrices.

\paragraph{Uniform deviation bound on $\langle \mathbf{M}_{t,n}\bu,\bv\rangle$ for vectors $\bu$ $\bv$ in lower-dimensional balls.} We consider two linear subspaces $\mathbf{U}$, $\mathbf{V}$ of $\reals^d$ of respective dimension $d_{\mathbf{U}}$ and $d_{\mathbf{V}}$. We then introduce the ball $B_{\mathbf{U}}(0,r_{\mathbf{U}})\in \mathbf{U}$ centered at $0$ of radius $r_{\mathbf{U}}$. We define similarly the ball $B_{\mathbf{V}}(0,r_{\mathbf{V}})\in \mathbf{V}$. Fix $\bu\in B_{\mathbf{U}}(0,r_{\mathbf{U}})$ and $\bv\in B_{\mathbf{V}}(0,r_{\mathbf{V}})$. We consider now the $\{\Fbar_n\}_{n}$-martingale
$
%\begin{align}
   ( \langle \mathbf{M}_{t,n}\bu,\bv\rangle)_{n\geq 0}
%\end{align}
$.
Note that
\begin{align}
\label{eq:unifboundquad}
  %\|\Dbf_n\|_{\infty}  =  
 \max_{\bu\in B_{\mathbf{U}}(0,r_{\mathbf{U}}), \bv \in B_{\mathbf{V}}(0,r_{\mathbf{V}})}{\langle \mathbf{M}_{t,n} \bu,\bv  \rangle }  \leq r_{\mathbf{U}}\, r_{\mathbf{V}}  \max_{\bu\in B_{\mathbf{U}}(0,1), \bv \in B_{\mathbf{V}}(0,1)}{\langle \mathbf{M}_{t,n} \bu,\bv  \rangle }   \leq r_{\mathbf{U}}\, r_{\mathbf{V}} \opnorm{P_{\mathbf{U}}\mathbf{M}_{t,n}   P_{\mathbf{V}} }.
\end{align}
Using Lemma \ref{lem:techlemma1} with $\delta$ replaced by $\delta/(2T)$. gives with probability at least $1-\delta$, for any $n\in [N]$ and any $t\in [T]$
\begin{align}
\label{eq:unifboundquadbis}
  %\|\Dbf_n\|_{\infty}  =  
 &\max_{\bu\in B_{\mathbf{U}}(0,r_{\mathbf{U}}), \bv \in B_{\mathbf{V}}(0,r_{\mathbf{V}})}{\left\langle \frac{\mathbf{M}_{t,n}}{n} \bu,\bv  \right\rangle }\notag\\
 &\hspace{1cm}\leq C r_{\mathbf{U}}\, r_{\mathbf{V}}  \max_{k\in [K]} \{\opnorm{\Sigmabf_{k}}\} \biggl(  \sqrt{\frac{\bigl(  d_{\mathbf{U}} \vee d_{\mathbf{V}}+ \log(4NT\delta^{-1}) \bigr)\log(8NT\delta^{-1}d) }{n}} \notag\\
    &\hspace{6cm} + \frac{\sqrt{\left( d_{\mathbf{U}} + \log(4NT\delta^{-1}) \right)\left( d_{\mathbf{V}} + \log(4NT\delta^{-1}d) \right) }\log(4NT\delta^{-1})}{n}  \biggr).
\end{align}

\paragraph{Geometric properties of $\mathcal{C}_r$.} 

Set $\bar{r} =\mathrm{rank}(\Deltabf)$. Note that $\bar{r} \leq d\wedge T$. Taking the SVD of $\Deltabf$, we have
$$
\Deltabf = \sum_{j=1}^{\bar{r}} \sigma_j(\Deltabf)\bu_j(\Deltabf)\otimes \bv_j(\Deltabf),
$$
with singular values $\sigma_1(\Deltabf)\geq \sigma_{2}(\Deltabf)\geq \cdots\geq \sigma_{\bar{r}}(\Deltabf) >0$ and orthonormal families $\{\bu_j(\Deltabf)\}_{j=1}^{\bar{r}}\in \mathbb{R}^d$,  $\{\bv_j(\Deltabf)\}_{j=1}^{\bar{r}}\in \mathbb{R}^T$. For the sake of brevity, we set $\sigma_j(\Deltabf)=\sigma_j$, $\bu_j(\Deltabf)=\bu_j$ and $\bv_j(\Deltabf)=\bv_j$ for any $j\in [\bar{r}]$. From the previous displays, we immediately get the following representation for the columns of $\Deltabf$:
\begin{align}
\label{eq:Deltanorm-prop3}
    \Deltabf_t = \sum_{j=1}^{\bar{r}} \sigma_j \bv_{j,t} \bu_j,\quad \forall t\in {T},
\end{align}
where $\bv_j$ admits components $\bv_j = (\bv_{j,1},\cdots,\bv_{j,T})^\top$.

By definition of the cone $\mathcal{C}_r$ in \eqref{eq:coneCr}, for any $\Deltabf\in \mathcal{C}_r$, we have $\norm{\Pi(\Deltabf)}_* \leq 3 \norm{\Deltabf-\Pi(\Deltabf)}_*$. Note that $\mathrm{rank}(\Deltabf-\Pi(\Deltabf))\leq 2r$ by definition of $\Pi$ and the cone $\mathcal{C}_r$. Hence we have, for any $\Deltabf\in \mathcal{C}_r$ with $\norm{\Deltabf}_F=1$,
\begin{align}
\label{eq:Deltanorm-prop1}
    \norm{\Deltabf}_* \leq \norm{\Pi(\Deltabf)}_* + \norm{\Deltabf-\Pi(\Deltabf)}_*\leq 4 \norm{\Deltabf-\Pi(\Deltabf)}_* \leq 4\sqrt{2r} \norm{\Deltabf-\Pi(\Deltabf)}_F \leq 4\sqrt{2r} \norm{\Deltabf}_F = 4\sqrt{2r}.
\end{align}
We deduce from \eqref{eq:Deltanorm-prop1} that $\sum_{j=1}^{\bar{r}}\sigma_j \leq 4\sqrt{2r}$ and consequently for any $\Deltabf\in \mathcal{C}_r$
\begin{align}
    \label{eq:Deltanorm-prop2}
    \sigma_j(\Deltabf)\leq \frac{4\sqrt{2r}}{j},\quad \forall j\in [\bar{r}].
\end{align}
We conclude this paragraph with some elementary facts on orthonormal basis. We complete the orthonormal family $\{\bv_j\}_{j=1}^{\bar{r}}$ into an orthonormal basis $\{\bv_j\}_{j=1}^{T}$ of $ \mathbb{R}^T$. By properties of orthonormal basis, we have $\langle \bv_j, \bv_{j'}\rangle = \sum_{t=1}^{T}\bv_{j,t} \bv_{j',t} = \delta_{j,j'}$ where $\delta_{j,j'} = 1$ if $j=j'$ and $0$ otherwise.

\paragraph{Uniform bound over $\Deltabf \in \mathcal{C}_r$.} Our goal is to control $\norm{\mathbf{M}_{n}}_{\rm{op},\mathcal{C}_{r}} $. To this end, we set $m_* = \lceil \log_2(d \wedge T)\rceil$ and, for any $m\in [m_*]$, define $J_m = \{j\in [m_*]\,:\, 2^{m-1}\leq j < 2^m\}$. Next we propose the following decomposition of $\Deltabf$:
$$
\Deltabf_t = \sum_{m=1}^{m_*}\Deltabf_{t,m},\quad \text{where}\quad \Deltabf_{t,m} = \sum_{j\in J_m} \sigma_j \bv_{j,t} \bu_j,\quad \forall t\in [T].
$$
By construction and \eqref{eq:Deltanorm-prop2}, for any $m\in [m_*]$, all the vectors $\Deltabf_{t,m}$, ${t\in [T]}$ live in the same subspace of dimension at most $2^{m-1}$ and
$$
\norm{\Deltabf_{t,m}}^2\leq \sigma^2_{2^{m-1}} \sum_{j\in J_m}  \bv^2_{j,t}  \leq \sigma^2_{2^{m-1}}\leq \frac{32 r}{2^{2(m-1)}},
$$
and
\begin{align}
\label{eq:Deltatmnorm}
    \sum_{t=1}^{T}\norm{\Deltabf_{t,m}}^2\leq \sigma^2_{2^{m-1}} \sum_{j\in J_m}  \sum_{t=1}^{T}\bv^2_{j,t}  \leq \sigma^2_{2^{m-1}} |J_m|\leq \frac{32 r}{2^{(m-1)}}.
\end{align}
Combining the last two displays with \eqref{eq:Deltanorm-prop3}, we get that
\begin{align}
 \sum_{t=1}^{T} \langle \mathbf{M}_{t,n}\Deltabf_t,\Deltabf_t\rangle &=  \sum_{t=1}^{T} \sum_{m,m'=1}^{m_*} \langle \mathbf{M}_{t,n}\Deltabf_{t,m},\Deltabf_{t,m'}\rangle.
\end{align}
An union bound combining the last two displays with \eqref{eq:MnopCr} and \eqref{eq:unifboundquadbis} %Lemma \ref{lem:techlemma1}
gives for any $\delta\in (0,1)$ with probability at least $1-\delta$, for any $n\in [N]$
\begin{align*}
\frac{1}{n}\norm{\mathbf{M}_{n}}_{\rm{op},\mathcal{C}_r} &\leq  C  \max_{1\leq k \leq K }\left\lbrace\opnorm{\Sigmabf_{k}}\right\rbrace  \sum_{m,m'=1}^{m_*} \sum_{t=1}^{T} \norm{\Deltabf_{t,m}} \norm{\Deltabf_{t,m'}}     \epsilon_{m, m'},
\end{align*}
where
\begin{align*}
    \epsilon_{m,m'} &=  \sqrt{\frac{\bigl(  2^{m \vee m'-1}+ \log(4NT\delta^{-1}) \bigr)\log(4NT\delta^{-1}d) }{n}} \\
    &\hspace{3cm}+ \frac{\sqrt{\left( 2^{m-1} + \log(4NT\delta^{-1}) \right)\left( 2^{m'-1} + \log(4NT\delta^{-1}d) \right) }\log(4NT\delta^{-1})}{n}.
\end{align*}

The Cauchy-Schwartz inequality and \eqref{eq:Deltatmnorm} give
\begin{align*}
\norm{\mathbf{M}_{n}}_{\rm{op},\mathcal{C}_r} &\leq  C  \max_{1\leq k \leq K }\left\lbrace\opnorm{\Sigmabf_{k}}\right\rbrace  \sum_{m,m'=1}^{m_*} \left(\sum_{t=1}^{T} \norm{\Deltabf_{t,m}}^2\right)^{1/2} \left(\sum_{t=1}^{T}\norm{\Deltabf_{t,m'}}^2\right)^{1/2} \epsilon_{m, m'}\\
&\leq  32 C   \max_{1\leq k \leq K }\left\lbrace\opnorm{\Sigmabf_{k}}\right\rbrace r \sum_{m,m'=1}^{m_*} \frac{1}{2^{\frac{m-1}{2}}}\frac{1}{2^{\frac{m'-1}{2}}}\epsilon_{m,m'}.
\end{align*}
By definition of $\epsilon_{m, m'}$, we have for any $m,m'\in [m_*]$
\begin{align*}
   \frac{1}{2^{\frac{m-1}{2}}}\frac{1}{2^{\frac{m'-1}{2}}}\epsilon_{m,m'}   &\lesssim  \frac{1+\log(4NT\delta^{-1})  }{\sqrt{n}} + \frac{(1+\log(4NT\delta^{-1}))^2}{n}.
\end{align*}
Combining the last two displays with the fact that $m_*\lesssim \log(d)$, we get for any $\delta\in (0,1)$ with probability at least $1-\delta$, for any $n\in [N]$
\begin{align*}
\frac{1}{n}\norm{\mathbf{M}_{n}}_{\rm{op},\mathcal{C}_r} &\leq  C  \max_{1\leq k \leq K }\left\lbrace\opnorm{\Sigmabf_{k}}\right\rbrace  r  \left( \frac{(1+\log(4NT\delta^{-1}))\log^2(d)  }{\sqrt{n}} + \frac{(1+\log(4NT\delta^{-1}))^2\log^2(d)}{n}\right).
\end{align*}

\end{proof}

\subsection{RSC condition and Matrix perturbation
%Proof of Lemma \ref{Le:RandomRSC}
}\label{AppSec:Lemma3}

The next lemma guarantees that if the %entry-wise uniform 
$\norm{\cdot}_{\mathrm{op},\mathcal{C}_r}$ metric between two matrices is small enough and if one of these two matrices satisfies the RSC condition, then the other matrix also satisfied the RSC condition.
%\noindent It remains to specify the proper value for constant $C$. To this end, we derive a new result concerning the RSC condition when considering random matrices. 
Specifically, the next lemma links the RSC constant $\kappa\big(\Sigmabar\big)$ associated to the multi-task theoretical matrix, to the adapted one $\kappa\big(\Sigmabar_n\big)$.
\begin{lemma}[RSC condition and Random Matrices]\label{Le:RandomRSC}
Let ${\bf \Sigma}_0$ and ${\bf \Sigma}_1$ be two $dT\times dT$ block-diagonal matrices (with blocks matrices of size $d\times d$). 
Suppose that the RSC condition is met for the covariance matrix ${\bf \Sigma}_0$ with constant $\kappa\big({\bf \Sigma}_0\big)$ and that $\norm{{\bf \Sigma}_0-{\bf \Sigma}_1}_{\mathrm{op},\mathcal{C}_r}\leq\lambdatilde$, where
    \[
        \lambdatilde \leq \kappa\big({\bf \Sigma}_0\big).
    \]
    Then, the RSC condition is also met for the covariance matrix ${\bf \Sigma}_1$ with constant $\kappa\big({\bf \Sigma}_1\big)\geq\kappa\big({\bf \Sigma}_0\big)/2$. Moreover, for all $\Deltabf\in\reals^{d\times T}$ such that $\Deltabf \in \Ccal(r)$ (see equation \ref{Eq:Cset}), we have
    \begin{equation}
        \norm{\rm{Vec}(\Deltabf)}_{{\bf \Sigma}_0}^2 \leq 2 \norm{\rm{Vec}(\Deltabf)}_{{\bf \Sigma}_1}^2 \leq 3 \norm{\rm{Vec}(\Deltabf)}_{{\bf \Sigma}_0}^2.
    \end{equation}
\end{lemma}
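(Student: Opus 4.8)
The plan is to prove Lemma~\ref{Le:RandomRSC} by directly relating the two quadratic forms $\norm{\rm{Vec}(\Deltabf)}_{{\bf \Sigma}_0}^2$ and $\norm{\rm{Vec}(\Deltabf)}_{{\bf \Sigma}_1}^2$ on the cone $\Ccal(r)$, exploiting the fact that the RSC constant $\kappa({\bf \Sigma}_0)$ provides a lower bound on the first and the restricted-operator-norm hypothesis $\norm{{\bf \Sigma}_0 - {\bf \Sigma}_1}_{\mathrm{op},\mathcal{C}_r} \le \lambdatilde$ controls their difference. First I would fix an arbitrary $\Deltabf \in \Ccal(r)$ and, without loss of generality, normalize so that $\norm{\Deltabf}_{\rm F} = 1$ (the cone $\Ccal(r)$ is scale-invariant, and all three quantities in the claimed inequality are homogeneous of degree two in $\Deltabf$). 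With this normalization, the definition of $\norm{\cdot}_{\mathrm{op},\mathcal{C}_r}$ gives immediately
\[
\bigl| \norm{\rm{Vec}(\Deltabf)}_{{\bf \Sigma}_0}^2 - \norm{\rm{Vec}(\Deltabf)}_{{\bf \Sigma}_1}^2 \bigr| = \bigl| \langle ({\bf \Sigma}_0 - {\bf \Sigma}_1)\rm{Vec}(\Deltabf), \rm{Vec}(\Deltabf)\rangle \bigr| \le \norm{{\bf \Sigma}_0 - {\bf \Sigma}_1}_{\mathrm{op},\mathcal{C}_r} \le \lambdatilde.
\]

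Next I would invoke the RSC condition for ${\bf \Sigma}_0$. By Definition~\ref{Def:RSC}, any $\Deltabf \in \Ccal(r)$ satisfies $\norm{\rm{Vec}(\Deltabf)}_{{\bf \Sigma}_0}^2 \ge 2\kappa({\bf \Sigma}_0)\norm{\rm{Vec}(\Deltabf)}_2^2 = 2\kappa({\bf \Sigma}_0)$ under the normalization $\norm{\Deltabf}_{\rm F}=1$ (note $\norm{\rm{Vec}(\Deltabf)}_2 = \norm{\Deltabf}_{\rm F}$). Combining this lower bound with the perturbation estimate and the hypothesis $\lambdatilde \le \kappa({\bf \Sigma}_0)$, I would deduce
\[
\norm{\rm{Vec}(\Deltabf)}_{{\bf \Sigma}_1}^2 \ge \norm{\rm{Vec}(\Deltabf)}_{{\bf \Sigma}_0}^2 - \lambdatilde \ge 2\kappa({\bf \Sigma}_0) - \kappa({\bf \Sigma}_0) = \kappa({\bf \Sigma}_0),
\]
which, since $\norm{\rm{Vec}(\Deltabf)}_2^2 = 1$, shows that ${\bf \Sigma}_1$ satisfies the RSC condition with constant at least $\kappa({\bf \Sigma}_0)/2$. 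Undoing the normalization recovers the homogeneous statement for general $\Deltabf \in \Ccal(r)$.

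For the two-sided sandwich inequality, I would again work with the normalized $\Deltabf$ and combine the same two facts. The upper half $2\norm{\rm{Vec}(\Deltabf)}_{{\bf \Sigma}_1}^2 \le 3\norm{\rm{Vec}(\Deltabf)}_{{\bf \Sigma}_0}^2$ follows from $\norm{\rm{Vec}(\Deltabf)}_{{\bf \Sigma}_1}^2 \le \norm{\rm{Vec}(\Deltabf)}_{{\bf \Sigma}_0}^2 + \lambdatilde$ together with $\lambdatilde \le \kappa({\bf \Sigma}_0) \le \tfrac12 \norm{\rm{Vec}(\Deltabf)}_{{\bf \Sigma}_0}^2$; the lower half $\norm{\rm{Vec}(\Deltabf)}_{{\bf \Sigma}_0}^2 \le 2\norm{\rm{Vec}(\Deltabf)}_{{\bf \Sigma}_1}^2$ follows symmetrically from $\norm{\rm{Vec}(\Deltabf)}_{{\bf \Sigma}_0}^2 \le \norm{\rm{Vec}(\Deltabf)}_{{\bf \Sigma}_1}^2 + \lambdatilde$ and the RSC lower bound on $\norm{\rm{Vec}(\Deltabf)}_{{\bf \Sigma}_0}^2$. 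Each step is a one-line algebraic manipulation, and the homogeneity argument lets me pass from the normalized case back to arbitrary $\Deltabf$.

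The proof is essentially mechanical once the right quantities are identified, so I do not anticipate a serious obstacle. The one point that requires care is ensuring that the perturbation bound $\bigl|\langle ({\bf \Sigma}_0 - {\bf \Sigma}_1)\rm{Vec}(\Deltabf), \rm{Vec}(\Deltabf)\rangle\bigr| \le \norm{{\bf \Sigma}_0 - {\bf \Sigma}_1}_{\mathrm{op},\mathcal{C}_r}\norm{\Deltabf}_{\rm F}^2$ is legitimate: the restricted operator norm is defined only as a supremum over $\Deltabf \in \mathcal{C}_r$ with $\norm{\Deltabf}_{\rm F}=1$, so I must verify that $\mathcal{C}_r$ is indeed a cone (closed under positive scaling) so that the normalized $\Deltabf/\norm{\Deltabf}_{\rm F}$ still lies in $\mathcal{C}_r$ and the homogeneity scaling is valid — this is immediate from the defining inequality \eqref{Eq:Cset}, which is invariant under $\Deltabf \mapsto c\Deltabf$ for $c>0$.
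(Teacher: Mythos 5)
Your proposal is correct and follows essentially the same route as the paper's proof: both combine the perturbation bound $\left|\norm{\mathrm{Vec}(\Deltabf)}_{{\bf \Sigma}_0}^2 - \norm{\mathrm{Vec}(\Deltabf)}_{{\bf \Sigma}_1}^2\right| \le \lambdatilde\norm{\mathrm{Vec}(\Deltabf)}_2^2$ with the RSC lower bound $\norm{\mathrm{Vec}(\Deltabf)}_{{\bf \Sigma}_0}^2 \ge 2\kappa({\bf \Sigma}_0)\norm{\mathrm{Vec}(\Deltabf)}_2^2$ and the hypothesis $\lambdatilde \le \kappa({\bf \Sigma}_0)$. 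The only cosmetic difference is that you normalize $\norm{\Deltabf}_{\rm F}=1$ while the paper works with the ratio $\norm{\mathrm{Vec}(\Deltabf)}_{{\bf \Sigma}_1}^2 / \norm{\mathrm{Vec}(\Deltabf)}_{{\bf \Sigma}_0}^2$ directly; the two are equivalent.
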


The proof combines Definition \ref{Def:RSC} and the $\norm{\cdot}_{\mathrm{op},\mathcal{C}_r}$ metric closeness $\norm{{\bf \Sigma}_0 - {\bf \Sigma}_1}_{\mathrm{op},\mathcal{C}_r}\leq \lambdatilde$.

\begin{proof}
For any $\Deltabf\in \mathcal{C}_r$, we have
\begin{align*}
    \left| \norm{\text{Vec}(\Deltabf)}_{{\bf \Sigma}_0}^2 - \norm{\text{Vec}(\Deltabf)}_{{\bf \Sigma}_1}^2 \right| &\leq \norm{{\bf \Sigma}_0{-}{\bf \Sigma}_1}_{\mathrm{op},\mathcal{C}_r} \norm{\text{Vec}(\Deltabf)}_2^2 \leq \lambdatilde \norm{\text{Vec}(\Deltabf)}_2^2.
\end{align*}
According to Definition \ref{Def:RSC}, for any $\Deltabf\Deltabf\in \mathcal{C}_r$, the RSC condition gives
\[
    \norm{\Deltabf}^2_{\rm F} = \norm{\text{Vec}(\Deltabf)}^2_2 \leq \frac{\norm{\text{Vec}(\Deltabf)}^2_{\Sigma_0}}{2\kappa\big(\Sigmabf_0\big)}.
\]
Combining these two last results we have
\[
    \left | \; \norm{\text{Vec}(\Deltabf)}_{\Sigmabf_0}^2 - \norm{\text{Vec}(\Deltabf)}_{\Sigmabf_1}^2 \right | \leq \lambdatilde \frac{\norm{\text{Vec}(\Deltabf)}^2_{\Sigmabf_0}}{2\kappa\big(\Sigmabf_0\big)}.
\]
Equivalently, the following result holds
\[
    \left| \frac{\norm{\text{Vec}(\Deltabf)}_{\Sigmabf_1}^2}{\norm{\text{Vec}(\Deltabf)}_{\Sigmabf_0}^2} - 1\right| \leq \frac{\lambdatilde}{2\kappa\big(\Sigmabf_0\big)}.
\]
The statement follows directly by the selected $\lambdatilde$.
\end{proof}

\subsection{Combining everything}

\noindent We can now prove Proposition \ref{Le:FirstN0}.

\begin{proof}

In the RSC condition (see Definition \ref{Def:RSC}) the considered covariance matrix $\Sigmabar \in\reals^{dT\times dT}$ is block-diagonal consisting of $T$ blocks, one for each different task.
Let us consider now the single-task adapted matrix
\begin{align*}
    \Sigmabf_{t,n} = \frac{1}{n}\sum_{s=1}^n \E\left[\x_{t,s} \x_{t,s}^\top \Big| \Fbar_{s-1}\right]
    %&\frac{1}{n}\sum_{s=1}^n\sum_{i=1}^K \E\left[\x_{t,s,i}\otimes \x_{t,s,i} \Ind\left\{\x_{s,n,i}=\arg\max_{\x\in\Dcal_{s,n}} \x^\top \what_{s,n} \right\} \Big| \Fbar_{s-1} \right],
\end{align*}
and the multi-task one $\Sigmabar_{n}={\rm diag}(\Sigmabf_{1,n},...,\Sigmabf_{T,n})\in\reals^{dT\times dT}$.
Relying on \citep[Lemma 10]{oh2020sparsity}, under Assumption \ref{Ass:ArmsDistribution} we have
\begin{equation}\label{Eq:UBProofLemma2}
    \Sigmabar_{n} = \frac{1}{n}\sum_{s=1}^n \E\left[ \xbar_s  \xbar_s^\top  \Big| \Fbar_{s-1} \right]  \succeq {\left(2 \nu \omega_\Xcal\right)}^{-1} \Sigmabar,
\end{equation}
where $\xbar_s = [\x_{1,s}^\top,\cdots,\x_{T,s}^\top]^\top\in\reals^{dT}$. Now, let us denote \(\wtilde_{n} = \arg\min_{\wbar\in \Ccal(r)}\frac{\mathrm{Vec}(\wbar)\Sigmabar_{n}\mathrm{Vec}(\wbar)}{2 \norm{\mathrm{Vec}(\wbar)}^2_2}\).

Relying on the RSC condition and thanks to the previous display, we obtain
\[
    \frac{\mathrm{Vec}(\wtilde)^\top_{n}\Sigmabar_{n}\mathrm{Vec}(\wtilde)_{n}}{\norm{\mathrm{Vec}(\wtilde)_{n}}_2^2} \geq \frac{\mathrm{Vec}(\wtilde)^\top_{n}\Sigmabar \mathrm{Vec}(\wtilde)_{n}}{2 \nu \omega_\Xcal \norm{\mathrm{Vec}(\wtilde)_{n}}_2^2} \geq \frac{\kappa\big(\Sigmabar\big)}{2 \nu \omega_\Xcal}.
\]
Hence, $\Sigmabar_{n}$ satisfies the RSC condition with constant \(
\kappa\left(\Sigmabar_{n}\right)= \kappa\left(\Sigmabar\right)
/ \left(2 \nu \omega_\Xcal\right)\).

Next we consider the operator norm deviation of the multi-task adapted matrix $\Sigmabf_0=\Sigmabar_{n}$ from the multi-task empirical matrix $\Sigmabf_1=\Sigmabarhat_{n}$. In view of \eqref{eq:opnormN0}, the quantity $\norm{\Sigmabarhat_{n}-\Sigmabar_{n}}_{\mathrm{op},\mathcal{C}_r}$ becomes smaller as $n$ increases. This means that if we take $n$ large enough, then we have whp that $\norm{\Sigmabarhat_{n}-\Sigmabar_{n}}_{\mathrm{op},\mathcal{C}_r}\leq  \kappa\left(\Sigmabar\right)
/ \left(4 \nu \omega_\Xcal\right) = \kappa(\Sigmabar_{n})/2$. We formalize this argument below.

For a fixed $\delta\in (0,1)$, recall the definition of $N_{0}(\delta)$ in \eqref{eq:N0_simplified}.
In view of Lemma \ref{Le:MatrixConcentration}, if $n\geq N_0(\delta)$ then with probability at least $1-\delta$, 
$$
\norm{\Sigmabarhat_n - \Sigmabar_{n}}_{\mathrm{op},\mathcal{C}_r}  \leq \frac{\kappa\big(\Sigmabar\big)}{4 \nu \omega_\Xcal} \leq \frac{\kappa\big(\Sigmabar_n\big)}{2}.
$$
Then Lemma \ref{Le:RandomRSC} yields the result. 

\end{proof}

\subsection{Technical Lemma}

Consider linear subspaces $\mathbf{U},\mathbf{V}$ of $\reals^{d}$ of dimension $d_{\mathbf{U}}$ and $d_{\mathbf{V}}$ respectively. Denote by $P_{\mathbf{U}}$ and $P_{\mathbf{V}}$ the orthogonal projections onto $\mathbf{U}$ and $\mathbf{V}$ respectively. We have for any $n\in [N]$
$$
\max_{\bu\in B_{\mathbf{U}}(0,1), \bv \in B_{\mathbf{V}}(0,1)}{\langle \mathbf{M}_{t,n} \bu,\bv  \rangle } \leq  \opnorm{P_{\mathbf{U}} \mathbf{M}_{t,n} P_{\mathbf{V}}}.
$$

\begin{lemma}
\label{lem:techlemma1}
Let Assumption \ref{Ass:BoundedNorms} be satisfied. For any $\delta\in (0,1)$ and any $t \in [T]$, with probability at least $1-\delta$,
\begin{align}
\max_{n\in [N]}\left\lbrace \frac{1}{n}\opnorm{P_{\mathbf{U}} \mathbf{M}_{t,n} P_{\mathbf{V}}} \right\rbrace &\leq C   \max_{k\in [K]} \{\opnorm{\Sigmabf_{k}}\} \biggl(  \sqrt{\frac{\bigl(  d_{\mathbf{U}} \vee d_{\mathbf{V}}+ \log(4N\delta^{-1}) \bigr)\log(4N\delta^{-1}d) }{n}} \notag\\
    &\hspace{1cm} + \frac{\sqrt{\left( d_{\mathbf{U}} + \log(4N\delta^{-1}) \right)\left( d_{\mathbf{V}} + \log(4N\delta^{-1}d) \right) }\log(4N\delta^{-1})}{n}  \biggr),
 \end{align}
 where $C=C(C_{\mathbf{z})})>0$ may depend only on $C_{\mathbf{z}}$.
\end{lemma}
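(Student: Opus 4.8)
The plan is to apply the matrix Freedman inequality (Theorem~\ref{thm:freedman}) directly to the projected martingale, so that the dimensions $d_{\mathbf{U}},d_{\mathbf{V}}$ enter only logarithmically through the dimension prefactor, while the sharp $d_{\mathbf{U}}\vee d_{\mathbf{V}}$ dependence in the square-root term and the product dependence in the linear term come respectively from the predictable quadratic variations and from the almost-sure bound on the increments. Fix $t\in[T]$ and set $Y_n = P_{\mathbf{U}}\mathbf{M}_{t,n}P_{\mathbf{V}}$, a $\{\Fbar_n\}$-matrix martingale with increments $X_s = P_{\mathbf{U}}\bigl(\x_{t,s}\x_{t,s}^\top - \E[\x_{t,s}\x_{t,s}^\top\mid\Fbar_{s-1}]\bigr)P_{\mathbf{V}}$, whose operator norm is exactly the quantity $\opnorm{P_{\mathbf{U}}\mathbf{M}_{t,n}P_{\mathbf{V}}}$ that we must control (this is the reduction recorded just before the statement).

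The increments are unbounded because $\x_{t,s}$ is only sub-Gaussian, so, as in the proof of Proposition~\ref{prop:subgaussian2}, I would pass to a stopped martingale. Since $P_{\mathbf{U}}\x_{t,s}$ (resp.\ $P_{\mathbf{V}}\x_{t,s}$) is the projection of a sub-Gaussian vector (Assumption~\ref{Ass:BoundedNorms}) onto a $d_{\mathbf{U}}$- (resp.\ $d_{\mathbf{V}}$-)dimensional subspace, a sub-Gaussian norm concentration bound gives, with high probability and uniformly in $s\in[N]$, the inequalities $\norm{P_{\mathbf{U}}\x_{t,s}}\lesssim \max_{k}\opnorm{\Sigmabf_k}^{1/2}\sqrt{d_{\mathbf{U}}+\log(N\delta^{-1})}$ and $\norm{P_{\mathbf{V}}\x_{t,s}}\lesssim \max_{k}\opnorm{\Sigmabf_k}^{1/2}\sqrt{d_{\mathbf{V}}+\log(N\delta^{-1}d)}$. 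Let $\tau$ be the first round where either fails and work with $Y_n^{\tau}$. Since each rank-one increment obeys $\opnorm{X_s}\leq \norm{P_{\mathbf{U}}\x_{t,s}}\,\norm{P_{\mathbf{V}}\x_{t,s}}+\opnorm{\E[\x_{t,s}\x_{t,s}^\top\mid\Fbar_{s-1}]}$, on $\{\tau>N\}$ the increments are bounded by $R\asymp \max_{k}\opnorm{\Sigmabf_k}\sqrt{(d_{\mathbf{U}}+\log(N\delta^{-1}))(d_{\mathbf{V}}+\log(N\delta^{-1}d))}$, which is precisely the factor appearing in the linear term of the claim. The asymmetric truncation levels (one at scale $\sqrt{d_{\mathbf{U}}+\log}$, the other at $\sqrt{d_{\mathbf{V}}+\log d}$) are what produce the product form rather than a sum.

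Next I would estimate the two predictable quadratic variations $\bm{W}_{\rm col}=\sum_{s}\E[X_sX_s^\top\mid\Fbar_{s-1}]$ and $\bm{W}_{\rm row}=\sum_s\E[X_s^\top X_s\mid\Fbar_{s-1}]$. Using $X_sX_s^\top\preceq \norm{P_{\mathbf{V}}\x_{t,s}}^2\,P_{\mathbf{U}}\x_{t,s}\x_{t,s}^\top P_{\mathbf{U}}$ together with sub-Gaussian fourth-moment estimates and $\E\norm{P_{\mathbf{V}}\x_{t,s}}^2\lesssim \opnorm{\Sigmabf_k}\,d_{\mathbf{V}}$, one gets $\opnorm{\bm{W}_{\rm col}}\lesssim n\max_{k}\opnorm{\Sigmabf_k}^2\,d_{\mathbf{V}}$ and, symmetrically, $\opnorm{\bm{W}_{\rm row}}\lesssim n\max_{k}\opnorm{\Sigmabf_k}^2\,d_{\mathbf{U}}$, so the Freedman variance proxy is $\overline{\sigma}^2(n)\asymp n\max_{k}\opnorm{\Sigmabf_k}^2\,(d_{\mathbf{U}}\vee d_{\mathbf{V}})$. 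This is the source of the $d_{\mathbf{U}}\vee d_{\mathbf{V}}$ (rather than $d_{\mathbf{U}}+d_{\mathbf{V}}$) in the square-root term; for the stopped process these operator norms still hold almost surely.

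Finally, applying Theorem~\ref{thm:freedman} to $Y_n^{\tau}$ at each fixed $n$ with variance proxy $\overline{\sigma}^2(n)$, increment bound $R$, and deviation budget $x\asymp\log\bigl((d_{\mathbf{U}}+d_{\mathbf{V}})N\delta^{-1}\bigr)\lesssim \log(N\delta^{-1}d)$, then union-bounding over $n\in[N]$ (which produces the $\log N$ inside the logarithmic factors), and inverting the Bernstein-type bound gives $\opnorm{Y_n^{\tau}}\lesssim \sqrt{\overline{\sigma}^2(n)\,x}+Rx$; dividing by $n$ reproduces the two stated terms. As in \eqref{eq:intermbis2}, one passes from $Y_n^\tau$ back to $Y_n$ via $\Pro(\opnorm{Y_n}\geq t)\leq \Pro(\opnorm{Y_n^{\tau}}\geq t)+\Pro(\tau\leq N)$, the second probability being controlled by the norm-concentration event of the second paragraph; a union bound over the two failure events (each of probability $\lesssim\delta$) yields the claim with probability at least $1-\delta$, uniformly over $n\in[N]$. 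I expect the main obstacle to be the unboundedness of the increments: obtaining the exact product factor $\sqrt{(d_{\mathbf{U}}+\log)(d_{\mathbf{V}}+\log d)}$ in the linear term, which is essential for the dyadic summation in Lemma~\ref{Le:MatrixConcentration}, forces the asymmetric truncation and a careful verification that the stopped process remains a bounded martingale to which Theorem~\ref{thm:freedman} legitimately applies.
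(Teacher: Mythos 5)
Your proposal is correct and follows essentially the same route as the paper's proof: truncate via the stopping time $\tau(\mathbf{U})\wedge\tau(\mathbf{V})$ built from high-probability bounds on $\norm{P_{\mathbf{U}}\x_{t,s}}$ and $\norm{P_{\mathbf{V}}\x_{t,s}}$ (the paper obtains these via Hanson--Wright conditionally on $\Fbar_{s-1}$), bound the stopped increments by the product of the two projected norms to get the $\sqrt{(d_{\mathbf{U}}+\log)(d_{\mathbf{V}}+\log)}$ factor, control $\bm{W}_{\rm col}$ and $\bm{W}_{\rm row}$ to get the $d_{\mathbf{U}}\vee d_{\mathbf{V}}$ variance proxy, apply matrix Freedman, and transfer back to the unstopped process by adding $\Pro(\tau\leq N)$. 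The only cosmetic difference is that you bound the quadratic variation through fourth-moment estimates while the paper uses the almost-sure bounds valid on the stopping event; both yield the stated rate.
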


\begin{proof}
We have for any $n\geq 1$
\begin{align}
\label{eq:proofopnorm0}
\opnorm{P_{\mathbf{U}} (\mathbf{M}_{t,n} -\mathbf{M}_{t,n-1} )P_{\mathbf{V}}} &= \opnorm{P_{\mathbf{U}}(\x_{t,n}) P_{\mathbf{V}}(\x_{t,n})^\top -  \E\left[P_{\mathbf{U}}(\x_{t,n}) P_{\mathbf{V}}(\x_{t,n})^\top \Big|\Fbar_{n-1}\right]}\notag\\
&\leq \norm{P_{\mathbf{U}}(\x_{t,n})}\norm{P_{\mathbf{V}}(\x_{t,n})} + \max_{k\in [K]}\{\opnorm{\Sigmabf_k}\}.
%\\
%&\leq b^2d,\quad a.s.
\end{align}
In view of Assumption \ref{Ass:BoundedNorms}, we have $\|P_{\mathbf{U}}(\x_{t,n})\|_2^2 = \mathbf{z}_{t,n}^\top P_{\mathbf{U}}\Sigmabf_{k}P_{\mathbf{U}}\mathbf{z}_{t,n}$ for some $k\in [K]$. We apply now Hanson-Wright's inequality conditionally on $\Fbar_{n-1}$ to get for any $x>0$
\begin{align*}
    &\mathbb{P}\left(   \mathbf{z}_{t,n}^\top P_{\mathbf{U}}\Sigmabf_{k}P_{\mathbf{U}}\mathbf{z}_{t,n} \leq \mathbb{E}\left[  \mathbf{z}_{t,n}^\top P_{\mathbf{U}}\Sigmabf_{k}P_{\mathbf{U}}\mathbf{z}_{t,n}   \big \vert \Fbar_{n-1} \right]  + C\, C_{\mathbf{z}}^2\left(\sqrt{\|P_{\mathbf{U}}\Sigmabf_k P_{\mathbf{U}}\|_{\rm F}^2 x} + \opnorm{P_{\mathbf{U}}\Sigmabf_k P_{\mathbf{U}}}\ x\right)   \big\vert \Fbar_{n-1} \right) \geq 1 - e^{-x}.
\end{align*}
Note that $\mathbb{E}\left[  \mathbf{z}_{t,n}^\top P_{\mathbf{U}}\Sigmabf_{k}P_{\mathbf{U}}\mathbf{z}_{t,n}   \big \vert \Fbar_{n-1} \right]  = \mathrm{trace}(P_{\mathbf{U}}\Sigmabf_k P_{\mathbf{U}})\leq \max_{k\in [K]}\{\opnorm{\Sigmabf_k}\} d_{\mathbf{U}}$ since $\mathbf{z}_k$ is a zero mean isotropic random vector. 

We define the following event in $\Fbar_n$
\begin{align}
\label{eq:OmeganterU}
\Omega_n(\mathbf{U}) &:= %\bigcap_{\x\in  \mathcal{D}_{t,n} } 
\left\lbrace \norm{P_{\mathbf{U}}(\x_{t,n})}^2 \leq 
\max_{k\in [K]} \left\lbrace \opnorm{\Sigmabf_{k} } \right\rbrace 
\left( d_{\mathbf{U}}  + C\, C_{\mathbf{z}}^2\left( \sqrt{  d_{\mathbf{U}} \, \log(4N\delta^{-1})} 
+  \log(4N\delta^{-1})
\right) \right)
\right\rbrace,
\end{align}
and
\begin{align}
\label{eq:OmeganterV}
\Omega_n(\mathbf{V}) &:= %\bigcap_{\x\in  \mathcal{D}_{t,n} }
\left\lbrace \norm{P_{\mathbf{V}}(\x_{t,n})}^2 \leq 
\max_{k\in [K]} \left\lbrace \opnorm{\Sigmabf_{k} } \right\rbrace 
\left( d_{\mathbf{V}}  + C\,C_{\mathbf{z}}^2\left( \sqrt{  d_{\mathbf{V}} \, \log(4N\delta^{-1})} 
+  \log(4N\delta^{-1})
\right) \right)
\right\rbrace.
\end{align}
Combining the last three displays and taking $x=\log(\delta^{-1})$ immediately implies for any $n\in [N]$
\begin{align}
\label{eq:omeganinterm1ter}
    \mathbb{P}\left(   \Omega_n(\mathbf{U})     \big\vert \Fbar_{n-1}  \right) \geq 1-\frac{\delta}{4N},\quad \mathbb{P}\left(   \Omega_n(\mathbf{V})     \big\vert \Fbar_{n-1}  \right) \geq 1-\frac{\delta}{4N}
\end{align}

We define, for any $n\in [N]$
\begin{align}
\label{eq:eventOmegabarn}
\bar{\Omega}_n = \bigcap_{l=0}^n \Omega_l(\mathbf{U}) \cap \Omega_l(\mathbf{V})\in \Fbar_{n},
\end{align}
with $\Omega_0(\mathbf{U}) = \Omega_0(\mathbf{V})$ being the whole sample space. The Bayes rule combined with \eqref{eq:omeganinterm1ter} gives
\begin{align}
\label{eq:omegaNunionbound}
    \mathbb{P}\left(   \bar{\Omega}_N \right) = \prod_{n=1}^N    \mathbb{P}\left( \Omega_n(\mathbf{U}) \cap \Omega_n(\mathbf{V}) \big\vert \bigcap_{k=0}^{n-1} \Omega_k(\mathbf{U}) \cap \Omega_k(\mathbf{V})  \right) \geq \left( 1-\frac{\delta}{2N} \right)^{N}.
\end{align}
Bernoulli's inequality ($(1+x)^n\geq 1+nx$ for any $x>-1$ and integer $n\geq 1$) gives 
\begin{align}
\label{eq:OmegabarNter}
    \mathbb{P}\left(   \bar{\Omega}_N \right) = \prod_{n=1}^N    \mathbb{P}\left( \Omega_n(\mathbf{U}) \cap \Omega_n(\mathbf{V})\big\vert \bigcap_{k=0}^{n-1}\Omega_k(\mathbf{U}) \cap \Omega_k(\mathbf{V})  \right) \geq 1-\frac{\delta}{2}.
\end{align}
Define the stopping times $\tau(\mathbf{U})$ and $\tau(\mathbf{V})$ as follows
\begin{align}
\tau(\mathbf{U}) &= \inf\left\lbrace n\geq 0\,:\,  
%\min_{t\in [T]}
\norm{P_{\mathbf{U}}(\x_{t,n})}^2 \geq 
\max_{k\in [K]} \left\lbrace \opnorm{\Sigmabf_{k} } \right\rbrace 
\left( d_{\mathbf{U}}  + C\, C_{\mathbf{z}}^2\left( \sqrt{  d_{\mathbf{U}} \, \log(4N\delta^{-1}) } +  \log(4N\delta^{-1})
\right) \right)
%\max_{k\in [K]}\left[ \mathrm{trace}(\Sigmabf_k)  + C\left(\sqrt{\|\Sigmabf_k\|_{\rm F}^2 (\log(4TK\ntrain\delta^{-1}))} + \opnorm{\Sigmabf_k}\ (\log(4TK\ntrain\delta^{-1})\right)\right]
\right\rbrace,
\end{align}
and
\begin{align}
\tau(\mathbf{V}) = \inf\left\lbrace n\geq 0\,:\,  
%\min_{t\in [T]}
\norm{P_{\mathbf{V}}(\x_{t,n})}^2 \geq 
\max_{k\in [K]} \left\lbrace \opnorm{ \Sigmabf_{k} } \right\rbrace 
\left( d_{\mathbf{V}}  + C\, C_{\mathbf{z}}^2\left( \sqrt{  d_{\mathbf{V}} \, \log(4N\delta^{-1}) } +  \log(4N\delta^{-1})
\right)\right) 
%\max_{k\in [K]}\left[ \mathrm{trace}(\Sigmabf_k)  + C\left(\sqrt{\|\Sigmabf_k\|_{\rm F}^2 (\log(4TK\ntrain\delta^{-1}))} + \opnorm{\Sigmabf_k}\ (\log(4TK\ntrain\delta^{-1})\right)\right]
\right\rbrace,
\end{align}

Again by definition of the Trace-Norm bandit, $\tau(\mathbf{U})$ and $\tau(\mathbf{V})$ are both stopping times relative to the filtration $\{\Fbar_{n}\}_{n\geq 0}$, so is $\tau=\tau(\mathbf{U})\wedge \tau(\mathbf{V})$. Hence the stopped process $\{\overline{\mathbf{M}}^{\tau}_n\}_{n\geq 0}$ defined as follows
$$
\overline{\mathbf{M}}^{\tau}_n:= P_{\mathbf{U}} \mathbf{M}_{t,n\wedge \tau} P_{\mathbf{V}},\quad \forall n\geq 0,
$$
is also a martingale adapted to the filtration $\{\Fbar_{n}\}_{n\geq 0}$. Furthermore $\{\overline{\mathbf{M}}^{\tau}_n\}_{n\geq 0}$ is a bounded martingale. Hence we can apply the Freedman inequality to it.

By definition of $\tau$ and \eqref{eq:OmegabarNter}, we have $\{\tau > N\}= \overline{\Omega}_N$. Hence for any $x>0$
\begin{align}
\label{eq:intermbis2}
\mathbb{P}\left( \opnorm{P_{\mathbf{U}} \mathbf{M}_{t,n} P_{\mathbf{V}}} \geq x \right) &= \mathbb{P}\left( \{ P_{\mathbf{U}} \mathbf{M}_{t,n} P_{\mathbf{V}} \geq x \} \cap \{\tau > N\}  \right) + \mathbb{P}\left( \{ \opnorm{P_{\mathbf{U}} \mathbf{M}_{t,n} P_{\mathbf{V}}} \geq x \} \cap \{\tau \leq N\}  \right) \notag\\
&\leq \mathbb{P}\left(  \opnorm{\overline{\mathbf{M}}^{\tau}_n} \geq x \right) + \mathbb{P}\left( \{\tau \leq N  \}  \right) \leq \mathbb{P}\left(  \opnorm{\overline{\mathbf{M}}^{\tau}_n} \geq x  \right) +\delta/2,
\end{align}
where we have used \eqref{eq:OmegabarNter} in the last line.

\paragraph{Deviation bound for $ \opnorm{\overline{\mathbf{M}}^{\tau}_n}$.}
We will use again \citep[][Corollary 1.3]{tropp2011freedman}.
%Theorem \ref{thm:freedman}.

In view of \eqref{eq:proofopnorm0}, \eqref{eq:OmeganterU} and \eqref{eq:OmeganterV}, we have 
\begin{align}
\opnorm{\overline{\mathbf{M}}^{\tau}_{n} - \overline{\mathbf{M}}^{\tau}_{n-1}} &\leq  \max_{k\in [K]} \left\lbrace \opnorm{\Sigmabf_{k} } \right\rbrace\left(  1+ (1+C\,C_{\mathbf{z}}^2/2)  
\sqrt{ \left( d_{\mathbf{U}} + \log(4N\delta^{-1}) \right)\left( d_{\mathbf{V}} + \log(4N\delta^{-1}) \right) } \right).
\end{align}
Next, we set $Y_{t,n} =  \overline{\mathbf{M}}^{\tau}_{n} - \overline{\mathbf{M}}^{\tau}_{n-1} 
%P_{\mathbf{U}}(\x_{t,n}) P_{\mathbf{V}}(\x_{t,n})^\top -  \E\left[P_{\mathbf{U}}(\x_{t,n}) P_{\mathbf{V}}(\x_{t,n})^\top \Big|\Fbar_{n-1}\right]
$. Elementary computations give
\begin{align*}
\bm{W}_{{\rm col}}&= \sum_{s=1}^n 
\mathbb{E}\left[ Y_{t,n} Y_{t,n}^\top \vert \Fbar_{s-1}\right] \leq (1+C\,C_{\mathbf{z}}^2/2)^2 n \bigl(  d_{\mathbf{V}} + \log(4N\delta^{-1}) \bigr)\max_{k\in [K]}\{\opnorm{P_{\mathbf{V}}\Sigmabf_{k}P_{\mathbf{V}}}\}\max_{k\in [K]}\{\opnorm{P_{\mathbf{U}}\Sigmabf_{k}P_{\mathbf{U}}}\},
\end{align*}
and
\begin{align*}
\bm{W}_{{\rm row}}&= \sum_{s=1}^n 
\mathbb{E}\left[  Y_{t,n}^\top Y_{t,n} \vert \Fbar_{s-1}\right] \leq (1+C\,C_{\mathbf{z}}^2/2)^2 n \bigl(  d_{\mathbf{U}} + \log(4N\delta^{-1}) \bigr)\max_{k\in [K]}\{\opnorm{P_{\mathbf{V}}\Sigmabf_{k}P_{\mathbf{V}}}\}\max_{k\in [K]}\{\opnorm{P_{\mathbf{U}}\Sigmabf_{k}P_{\mathbf{U}}}\}.
\end{align*}
Hence
\begin{align}
\label{eq:freedcond2bis}
\bm{W}_{{\rm row}}\vee \bm{W}_{{\rm col}} &\leq (1+C\,C_{\mathbf{z}}^2/2)^2 n \bigl(  d_{\mathbf{U}} \vee d_{\mathbf{V}}+ \log(4N\delta^{-1}) \bigr)\max_{k\in [K]} \{\opnorm{\Sigmabf_{k}}^2\}.
\end{align}
Applying \citep[][Corollary 1.3]{tropp2011freedman}, we get for any $\delta\in (0,1)$ with probability at least $1-\delta/(2N)$ that
\begin{align}
    \opnorm{\overline{\mathbf{M}}^{\tau}_n} &\leq  (1+C\,C_{\mathbf{z}}^2/2)\max_{k\in [K]} \{\opnorm{\Sigmabf_{k}}\} \biggl(  \sqrt{n\,\bigl(  d_{\mathbf{U}} \vee d_{\mathbf{V}}+ \log(4N\delta^{-1}) \bigr)\log(4N\delta^{-1}d) } \notag\\
    &\hspace{5cm} +2\sqrt{\left( d_{\mathbf{U}} + \log(4N\delta^{-1}) \right)\left( d_{\mathbf{V}} + \log(4N\delta^{-1}) \right) }\log(4N\delta^{-1}d)  \biggr).
\end{align}
Dividing by $n$ and a trivial union bound gives the result.

\end{proof}

\section{%\textcolor{red}{KARIM: I am trying another argument for the p
PROOF OF THEOREM \ref{Th:MTLRegretBound}}\label{AppSec:RegretUBound}

We recall that $\W = [\w_1,\dots,\w_T]$ is of rank smaller than $r$. Hence $\W$ admits the following singular value decomposition
$$
\W = \sum_{j=1}^{r} \sigma_j(\W) u_j(\W)\otimes v_j(\W),
$$
with singular values $\sigma_1(\W)\geq \cdots\geq \sigma_r(\W)>0$ and corresponding left and right orthonormal singular vectors $u_1(\W),\ldots,u_r(\W)$ of $\mathbb{R}^d$ and $v_1(\W),\ldots,v_r(\W)$ of $\mathbb{R}^T$. We denote by $P_{U}$ the orthogonal projection onto $U = \mathrm{l.s.}(u_1(\W),\ldots,u_r(\W))$. By definition of the SVD, we obviously have $P_U(\w_t) = \w_t$ for any $1\leq t \leq T$.

%\medskip
We consider now the instantaneous multi-task regret
\begin{equation*}
    \Rbar_n = \sum_{t=1}^T R_{t,n} = \sum_{t=1}^T  \langle \x_{t,n}^*,  \w_t \rangle  - \langle \x_{t,n} , \w_t \rangle  \leq 2 \sum_{t=1}^{T}\max_{\x\in\Dcal_{t,n}} |\langle \x, \w_t\rangle|,
\end{equation*}
where $\x_{t,n}^* = \arg\max_{\x\in\Dcal_{t,n}}\langle \x, \w_t \rangle$.
 
Assumption \ref{Ass:BoundedNorms} guarantees that $\frac{\langle \x,  \w_t \rangle}{\norm{\w_t }}$ is $C_{\x}$-subgaussian and consequently, for any $\delta\in (0,1)$, with probability at least $1-\delta$
$$
\vert \langle \x,  \w_t\rangle\vert \leq C_{\x} \norm{\w_t} \sqrt{\log(e\delta^{-1})} \leq C_{\mathbf{z}} \max_{k \in [K]} \left\lbrace \opnorm{\Sigma_k}^{1/2} \right\rbrace  L \sqrt{\log(e\delta^{-1})}.
$$
An union bound over $t\in [T]$, $n\in [N]$, $\x\in\Dcal_{t,n}$ gives with probability at least $1-\delta/2$
$$
\max_{\x\in\Dcal_{t,n}} |\langle \x, \w_t\rangle| \leq C_{\mathbf{z}} \max_{k \in [K]} \left\lbrace \opnorm{\Sigma_k}^{1/2} \right\rbrace  L \sqrt{\log(2eTKN\delta^{-1})},\quad \forall t\in [T],\; \forall k\in [K],\forall n\in [N].
$$
Hence with the same probability, $\forall n\in [N]$
$$
\Rbar_n \leq  C_{\mathbf{z}} T \max_{k \in [K]} \left\lbrace \opnorm{\Sigma_k}^{1/2} \right\rbrace  L \sqrt{\log(2eTKN\delta^{-1})}
$$

We will now split the regret into two phases. During the first $N_0$ interactions the collected data will not meet the RSC condition. Hence, the incurred regret will be of order $T N_0 \sqrt{\log(2eTKN\delta^{-1})}$.
%$L T N_0\left(r \bigvee \log(2TKN)\right)^{1/2}$.
Conversely, from the $N_0+1$-th round up to the $N$-th one, the RSC condition will be met allowing the result of Lemma \ref{Le:OracleInequality}.

We have
\begin{align}
\label{eq:regthm1}
    \sum_{n=\lceil N_0\rceil }^{N}\Rbar_{n} &= \sum_{n=\lceil N_0\rceil}^{N}\sum_{t=1}^T  \langle \x_{t,n}^*,  \w_t - \what_{t,n} \rangle + \sum_{n=\lceil N_0\rceil}^{N}\sum_{t=1}^T  \langle \x_{t,n}^* - \x_{t,n}, \what_{t,n} \rangle  + \sum_{n=\lceil N_0\rceil}^{N}\sum_{t=1}^T  \langle \x_{t,n} , \what_{t,n}-\w_t \rangle\notag\\
    &\leq  \sum_{n=\lceil N_0\rceil}^{N}\sum_{t=1}^T  \langle \x_{t,n}^*,  \w_t - \what_{t,n} \rangle  + \sum_{n=\lceil N_0\rceil}^{N}\sum_{t=1}^T  \langle \x_{t,n} , \what_{t,n}-\w_t \rangle
     = \sum_{n=\lceil N_0\rceil}^{N}\sum_{t=1}^T  \langle \x_{t,n}^* - \x_{t,n},  \w_t - \what_{t,n} \rangle
\end{align}

where we have used that for any $1\leq n \leq N$, $1\leq t\leq T$,  $\langle \x_{t,n}^* - \x_{t,n}, \what_{t,n} \rangle\leq 0$ a.s. by definition of our policy.

\medskip

Next, we have
\begin{equation}
    \label{eq:CSthm1}
\left\vert\sum_{t=1}^T  \langle \x_{t,n}^* - \x_{t,n},  \w_t - \what_{t,n} \rangle\right\vert 
%\leq 2 \max_{\x\in \Dcal_{t,n} }\left\vert\sum_{t=1}^T  \langle \x,  \w_t - \what_{t,n} \rangle\right\vert 
\leq  2 \sum_{t=1}^T\max_{\x\in \Dcal_{t,n} }\left\vert  \langle \x,  \w_t - \what_{t,n} \rangle\right\vert.
%\norm{P\X_{n}^* -P\X_{n}}_{\rm F}  \norm{ \What_{n} - \W}_{\rm F}\leq (\norm{P\X_{n}^*}_{\rm F} + \norm{P\X_{n}}_{\rm F} ) \norm{ \What_{n} - \W}_{\rm F},
\end{equation}

In view of \eqref{Eq:TraceNormReg}, we note that $\What_{n}\in \Fbar_{n-1}$. Thus, conditionally on $\Fbar_{n-1}$, Assumption \ref{Ass:BoundedNorms} guarantees that $\frac{\langle \x,  \w_t - \what_{t,n} \rangle}{\norm{\w_t - \what_{t,n} }}$ is $C_{\x}$-subgaussian and consequently, for any $\delta\in (0,1)$, with probability at least $1-\delta$
$$
\vert \langle \x,  \w_t - \what_{t,n} \rangle\vert \leq C_{\x} \norm{\w_t - \what_{t,n} } \sqrt{\log(e\delta^{-1})} \leq C_{\mathbf{z}} \max_{k \in [K]} \left\lbrace \opnorm{\Sigma_k}^{1/2} \right\rbrace \norm{\w_t - \what_{t,n} } \sqrt{\log(e\delta^{-1})}.
$$
An union bound over $t,n,k$ gives with probability at least $1-\delta$
\begin{align}
\label{eq:CSthm1-2}
   \max_{t\in [T],n\in [N]} \max_{\x\in \Dcal_{t,n} } \vert \langle \x,  \w_t - \what_{t,n} \rangle\vert \leq C_{\mathbf{z}} \max_{k \in [K]} \left\lbrace \opnorm{\Sigma_k}^{1/2} \right\rbrace \norm{\w_t - \what_{t,n} } \sqrt{\log(eTKN\delta^{-1})}.
\end{align}
Combining the last two displays, we get with probability at least $1-\delta$, for any $t\in [T]$, $n\in [N]$,
\begin{align*}
\left\vert\sum_{t=1}^T  \langle \x_{t,n}^* - \x_{t,n},  \w_t - \what_{t,n} \rangle\right\vert &\leq 2 C_{\mathbf{z}} \max_{k \in [K]} \left\lbrace \opnorm{\Sigma_k}^{1/2} \right\rbrace \left( \sum_{t=1}^{T} \norm{\w_t - \what_{t,n} } \right) \sqrt{\log(eTKN\delta^{-1})}\\
&\leq 2 C_{\mathbf{z}} \max_{k \in [K]} \left\lbrace \opnorm{\Sigma_k}^{1/2} \right\rbrace \sqrt{T} \norm{\W - \What_{n} }_F \sqrt{\log(eTKN\delta^{-1})}.
\end{align*}

Recall that Lemmas \ref{Le:OracleInequality} and \ref{Le:MatrixConcentration} combined with Propositions \ref{prop:subgaussian2} and \ref{Le:FirstN0} gives, for any $\lceil N_0\rceil \leq n\leq N$ and any $\delta\in (0,1)$, with probability at least $1-\frac{\delta}{2N}$, 
\begin{align}
\label{eq:intermediaire-WhatW}
\norm{ \What_{n} - \W}_{\rm F} \lesssim_{c_\eta,\sigma,C_{\x},\kappa(\Sigmabar),\max_{1\leq k \leq K }\left\lbrace \opnorm{\Sigma_{k}}^{1/2}\right\rbrace} 
%\sqrt{\frac{r}{n}} \left(  \sqrt{(T+d)} \bigvee  \sqrt{\log(4N\delta^{-1})}\right).
\sqrt{\frac{r}{n}} \left(  \sqrt{T+d+\log\left(\frac{4N(d+T)}{\delta}\right)}\log^{3/2}\left(\frac{8N(d+T)}{\delta}\right)\right).
\end{align}

Combining the last two displays, we obtain 
 with probability at least $1-\delta$
\begin{align}
\left\vert\sum_{t=1}^T  \langle \x_{t,n}^* - \x_{t,n},  \w_t - \what_{t,n+1} \rangle\right\vert &\lesssim   \sqrt{\frac{rT}{n}} \left(  \sqrt{T+d+\log\left(\frac{4N(d+T)}{\delta}\right)}\log^{3/2}\left(\frac{8N(d+T)}{\delta}\right)\right)\sqrt{\log(2e\delta^{-1}TKN) }.
\end{align}
%\left( r \bigvee (\log(4\delta^{-1}TKN^2))\right) 
Summing over $n$ in the previous display along with an union bound argument and  \eqref{eq:regthm1} gives with probability at least $1-\delta$
\begin{align}
 \sum_{n=\lceil N_0\rceil }^{N}\Rbar_s &\leq C \sqrt{rTN}    \left(  \sqrt{T+d+\log\left(\frac{4N(d+T)}{\delta}\right)}\log^{3/2}\left(\frac{8N(d+T)}{\delta}\right)\right)\sqrt{\log(2e\delta^{-1}TKN) },
\end{align}
where $C = C\left(\eta,\sigma,C_{\mathbf{z}},\kappa(\Sigmabar),\max_{1\leq k \leq K }\left\lbrace \opnorm{\Sigma_{k}}^{1/2}\right\rbrace\right)$ is a finite constant under our assumptions.

\section{NUMERICAL EXPERIMENTS}
\label{AppSec:Exp}
We perform here some additional experiments on the impact of the rank $r$, the dimension $d$ and the number of tasks $T$ on the performance of our trace norm bandit policy (Algorithm \ref{Alg:ASNucRegBan}). We also compare its performance to the MLingreedy policy of \cite{yang2020impact}.

\paragraph{Comparison to MLingreedy \citep{yang2020impact}.}

We compare the performance of the trace norm bandit (Alg. \ref{Alg:ASNucRegBan}) to that of the MLingreedy policy and independent task learning (ITL). We recall that the MLingreedy policy use a rank parameter $\overline{r}$. This parameter is set equal to the true rank $r$ of $\W$ in \cite{yang2020impact} to derive the theoretical properties of this policy. 

In our experiments we will use different values for the rank parameter $\overline{r}$, either the true rank $r$ or an overestimated rank $\overline{r} =\min( 2 r, d,T)$ or an underestimated rank $\overline{r} = \max(\lfloor r/2 \rfloor,1)$. At each epoch of the the MLingreedy policy, we used stochastic gradient descent to solve the matrix factorization step. We perform $5$ repetitions of our experiments for several values of dimension $d$, $T$, the parameter $\overline{r}$ of MLingreedy and the regularization parameter $l \in \{l_1,l_2,l_3\}$ of the trace norm policy.

In Figures \ref{Fig:MLingreedyd10best} and 
\ref{Fig:MLingreedyd40best}, we implement MLingreedy with the true value of rank $r$ (most favorable case for this policy in theory) and compare its performance to that of our trace norm bandit and ITL. Trace norm bandit performs consistently above ITL and MLingreedy with true parameter for all considered configurations of $T$ and $d$. Noticeably MLingreedy with true parameter performs significantly worse than ITL for $d=40$ for almost all values of $N$.

In Figure
\ref{Fig:MLingreedyd40}, we explore in more details the impact of parameters $l$ and $\overline{r}$ on the performance of the trace norm bandit and MLingreedy,  respectively. 

\begin{itemize}

\item 
In higher dimension $d=40$ and $T=10$, the trace norm bandit performs significantly better than MLingreedy. Actually MLingreedy performance is significantly worse than ITL even when using the true rank ($\overline{r}=r$) until $N$ becomes larger than $38$ whereas the trace norm policy performs significantly better than ITL even when $N$ is small. 
%This is in agreement with the theoretical guarantees which requires $N\geq d^2$ rounds for MLingreedy whereas trace norm requires only $N\gtrsim d$.
\item For $d=40$ and $T=30$, the trace norm bandit performs better than ITL for any $N\geq 25$ whereas MLingreedy with overestimated rank and true rank are below ITL until $N\geq 35$ and $N\geq 38$ respectively. MLingreedy with underestimated rank is significantly worse than ITL for all $N$.
\end{itemize}

\begin{figure}[t!] 
    \includegraphics[width=.49\textwidth]{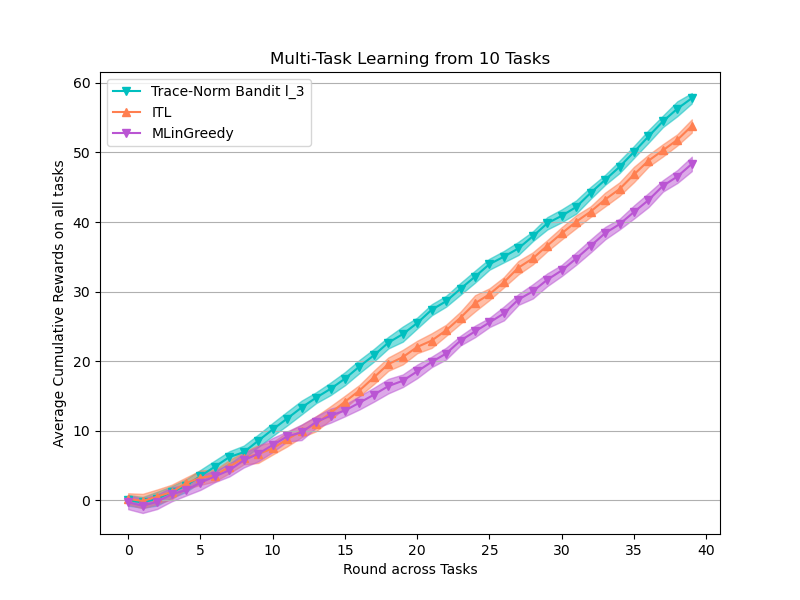}
    \includegraphics[width=.49\textwidth]{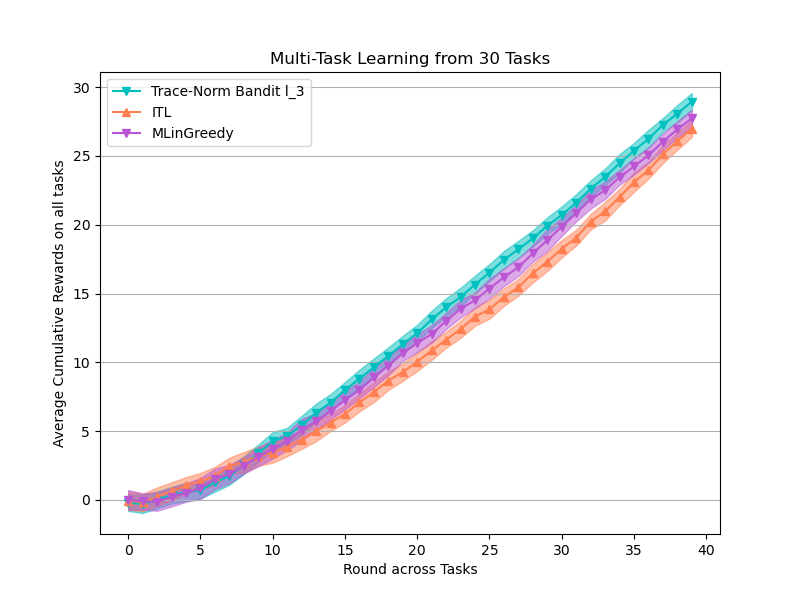}
    \caption{Averaged cumulative reward over all tasks with $d=10$ features, $T=10$ tasks (left) and $T=30$ (right). Each task lasts for $N=40$ rounds, has $K=10$ arms, noise variance $\sigma^2=1$ and true rank $r=5$.}
    \label{Fig:MLingreedyd10best}
\end{figure}

%(top) and $d=40$ (bottom) 

\begin{figure}[t!] 
    \includegraphics[width=.49\textwidth]{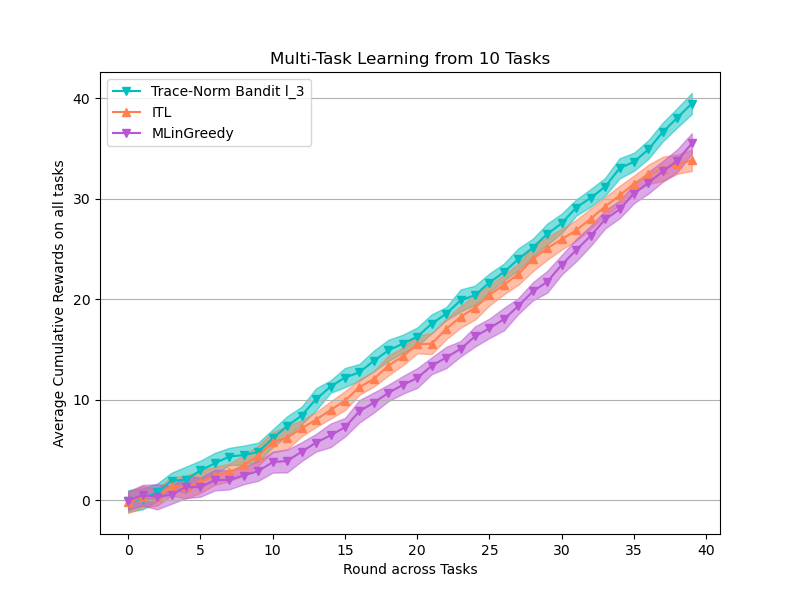}
    \includegraphics[width=.49\textwidth]{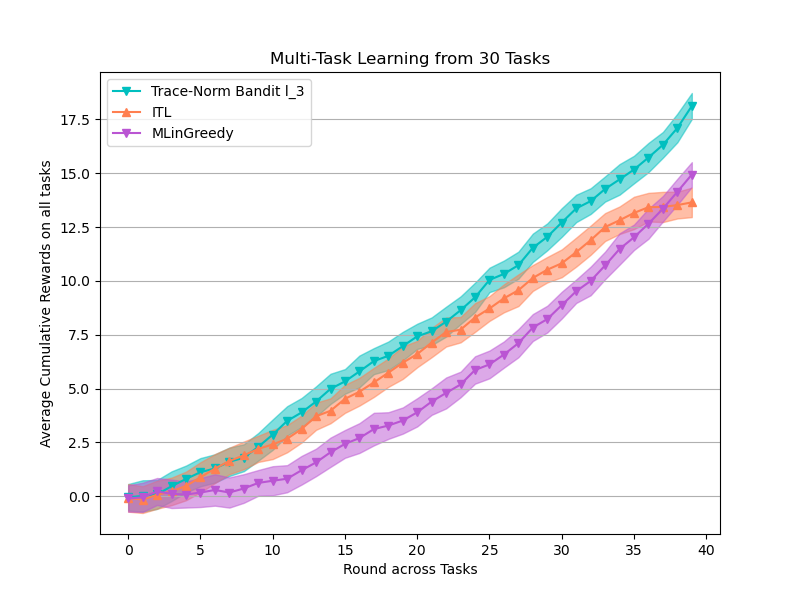}
    \caption{Averaged cumulative reward over all tasks with $d=40$ features, $T=10$ tasks (left) and $T=30$ (right). Each task lasts for $N=40$ rounds, has $K=10$ arms, noise variance $\sigma^2=1$ and true rank $r=5$.}
    \label{Fig:MLingreedyd40best}
\end{figure}

\begin{figure}[t!] 
    \includegraphics[width=0.49\textwidth]{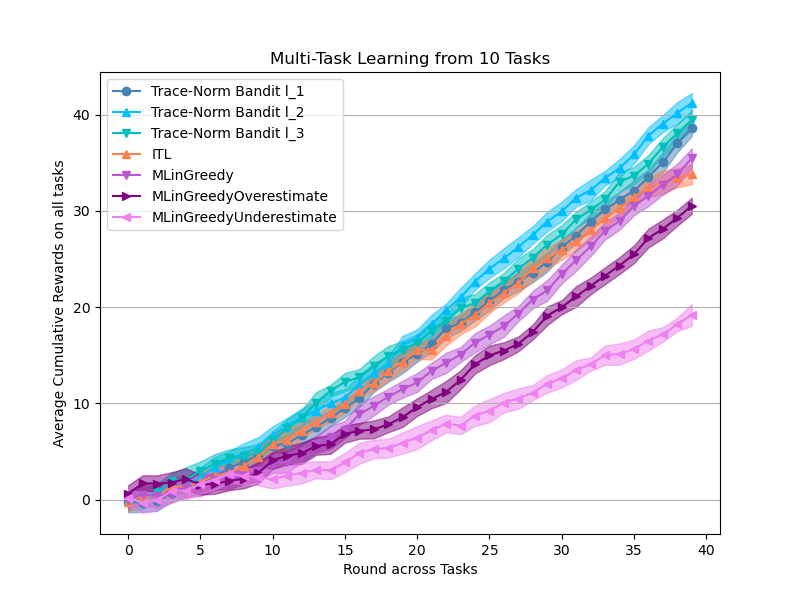}
    \includegraphics[width=0.49\textwidth]{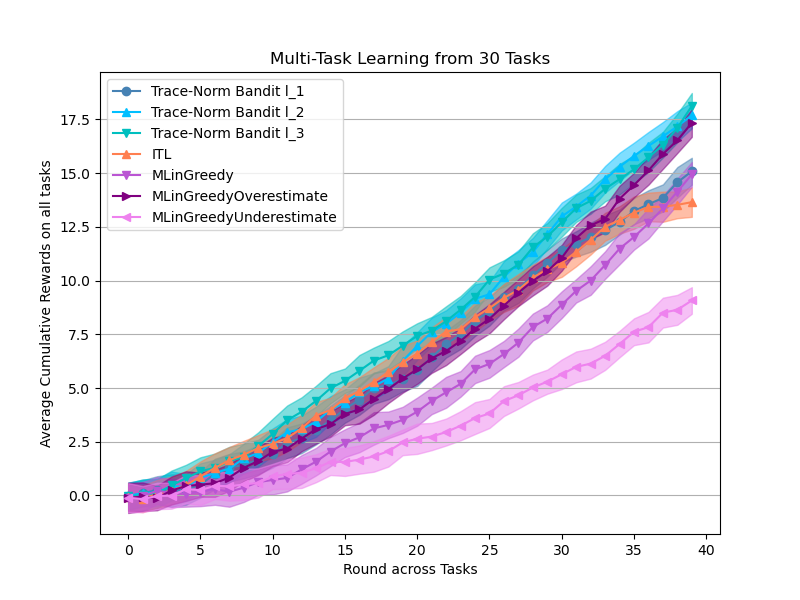}
    \caption{Averaged cumulative reward over all tasks with $d=40$ features, $T=10$ tasks (top) and $T=30$ (bottom). Each task lasts for $N=40$ rounds, has $K=10$ arms, noise variance $\sigma^2=1$ and true rank $r=5$.}
    \label{Fig:MLingreedyd40}
\end{figure}

In summary, the trace norm bandit performs consistently better than ITL uniformly for most values of $N$, for all values of $d$ and $T$ and for several choices of the regularization parameter. Conversely, MLingreedy is far more sensitive to the choice of rank parameter $\overline{r}$ and the values of $d$ and $T$. MLingreedy with underestimated rank performs far worse than ITL in all configurations. MLingreedy with overestimated rank performs sometimes better than MLingreedy with true rank. Overall, its performance are worse than those of trace norm bandit.

\paragraph{Impact of rank $r$.} In Figure \ref{Fig:impactr}, we note that Alg. \ref{Alg:ASNucRegBan} performs uniformly better than the ITL policy uniformly over all values of rank $r\in [d]$. The performance of the oracle tends to that of ITL as $r$ tends to $d$. This is expected as the oracle working in a $r$-dimensional space loses the benefit of working in a small dimensional space when $r\approx d$. Interestingly, the performance of the trace norm bandit becomes superior to that of the oracle as $r$ gets close to $d$. A plausible explanation is that nuclear norm regularization can still bring some benefit even when $r\approx d$ as it can still perform dimension reduction.

\begin{figure}[t!]
\begin{center}
    \includegraphics[width=0.49\textwidth]{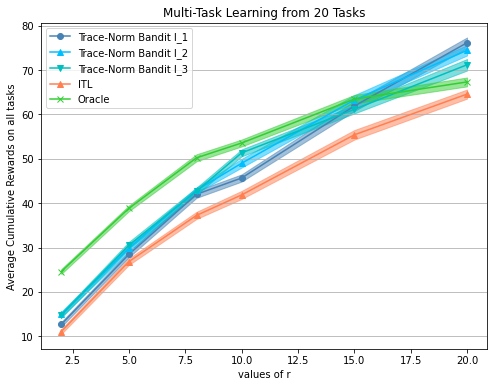}
    \includegraphics[width=0.49\textwidth]{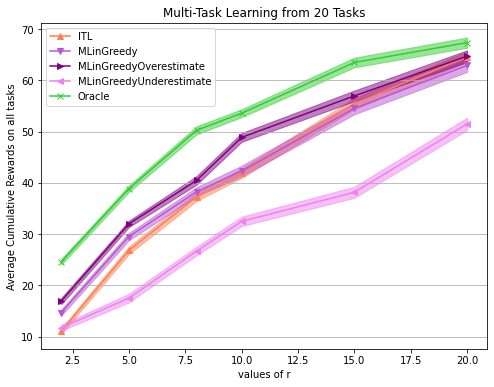}
    \caption{Averaged cumulative reward over all tasks for $d=20$, $T=20$, $\sigma^2=1$ as a function of the rank $r$.}
    \label{Fig:impactr}
\end{center}
\end{figure}

\end{document}